\newcommand{\symm}{\mathfrak{S}} 
\newcommand{\hatm}{\hat{M}}
\newcommand{\ib}{{\bf i}}
\newcommand{\jb}{{\bf j}}
\tikzstyle{observed}=[circle, thick, minimum size=0.9cm, draw=black!100, fill=black!20]
\tikzstyle{latent}=[circle, thick, minimum size=0.9cm, draw=black!80]
\tikzstyle{plate}=[rectangle, thick, inner sep=0.3cm, draw=black!100]
\tikzstyle{shadeplate}=[rectangle, thick, inner sep=0.4cm, draw=black!100]
\tikzstyle{table}=[circle,fill=blue!20,draw=black!100,inner sep=1pt, minimum size=30pt]
\tikzstyle{client}=[rectangle,fill=blue!20,draw=black!100,inner sep=1pt, minimum size=12pt]
\begin{document}

\title{Spectral Methods for Nonparametric Models}

\author{\name Hsiao-Yu Fish Tung \email htung@cs.cmu.edu \\
       \addr Machine Learning Department\\
       Carnegie Mellon University\\
       5000 Forbes Ave, Pittsburgh, PA 15213   
       \AND
     \name Chao-Yuan Wu \email cywu@cs.utexas.edu  \\
       \addr Department of Computer Science\\
       University of Texas at Austin\\
       Austin, TX 78712, USA
   \AND
   \name Manzil Zaheer \email manzilz@cs.cmu.edu \\
       \addr Machine Learning Department\\
       Carnegie Mellon University\\
       5000 Forbes Ave, Pittsburgh, PA 15213   
       \AND
       \name Alexander J. Smola \email alex@smola.org \\
       \addr Amazon Web Services\\
       \addr Machine Learning Department\\
       Carnegie Mellon University\\
       5000 Forbes Ave, Pittsburgh, PA 15213          
       }

\editor{}

\maketitle

\begin{abstract}
Nonparametric models are versatile, albeit computationally expensive, tool 
for modeling mixture models. In this paper, we introduce spectral methods 
for the two most popular nonparametric models: the Indian Buffet Process (IBP) 
and the Hierarchical Dirichlet Process (HDP). We show that using spectral 
methods for the inference of nonparametric models are computationally and 
statistically efficient. In particular, we derive the lower-order moments of 
the IBP and the HDP, propose spectral algorithms for both models, and 
provide reconstruction guarantees for the algorithms. For the HDP, we further 
show that applying hierarchical models on dataset with hierarchical structure, 
which can be solved with the generalized spectral HDP, produces better 
solutions to that of flat models regarding likelihood performance.
\end{abstract}

\begin{keywords}
  Spectral Methods, Indian Buffet Process, Hierarchical Dirichlet Process
\end{keywords}

\section{Introduction}

Latent variable models have become ubiquitous in statistical data
analysis, spanning over a diverse set of applications ranging from text
\citep{BleNgJor02}, images \citep{QuaColDar04} to user behavior
\citep{AlyHatJosNar12}. In these works, latent variables are introduced to
represent unobserved properties or hidden causes of the observed
data. In particular, Bayesian Nonparametrics such as the Dirichlet mixture 
models \citep{Neal98b}, the Indian Buffet Process (IBP) \citep{GriGha11} 
and the Hierarchical Dirichlet Process (HDP) \citep{TehJorBeaBle06} allow
for flexible representation and adaptation in terms model complexity.

In recent years spectral methods have become a credible alternative to
sampling \citep{GriSte04} and variational methods
\citep{BleJor05,DemLaiRub77} for the inference of such structures.
In particular, the work of
\cite{AnaGeHsuKakTel12,AnaChaHsuKakSonZha2011,BooGreGeo13,HsuKakZha09,SonBooSidGorSmo10}
demonstrates that it is possible to infer latent variable structure
accurately, despite the problem being nonconvex, thus exhibiting many
local minima. A particularly attractive aspect of spectral methods is
that they allow for efficient means of inferring the model complexity
in the same way as the remaining parameters, simply by thresholding
eigenvalue decomposition appropriately. This makes them suitable for
nonparametric Bayesian approaches. 

While the issue of spectral inference with the Dirichlet Distribution is
largely settled \citep{AnaGeHsuKakTel12, AnaGeHsuKak13}, the domain of
nonparametric tools is much richer and it is therefore desirable to
see whether the methods can be extended to popular nonparametric models 
such as the IBP. As sampling-based methods are computationally 
expensive for models with complicated hierarchical structure, another attractive 
direction is to apply spectral method to nonparametric hierarchical model such 
as the HDP. 
By using countsketch FFT technique for fast tensor decomposition
\citep{WanTunSmoAna15}, spectral method for the Latent Dirichlet Allocation (LDA), which can be viewed as the simplest case in thespectral algorithm for the HDP, already outperform sampling-based algorithms significantly both in terms of perplexity and speed.
Since the time complexity of the proposed spectral method for the HDP
does not scale with the number of layers, the algorithm enjoys
significant improvement in time over HDP samplers. In a nutshell, this work 
contributes to completing the tool set of spectral methods.
This is an important goal to ensure that entire models can be
translated wholly into spectral algorithms, rather than just parts.

 We provide a full analysis of the tensors arising from the IBP and the HDP. 
 For the IBP, we show how spectral algorithms need to be modified, since a 
 degeneracy in the third order tensor requires fourth order terms, to
 successfully infer all the hidden variables.
 For the HDP, we derive the generalized form in obtaining tensors for any 
 arbitrary hierarchical structure. To recover the
parameters and latent factors, we use Excess Correlation Analysis
(ECA) \citep{AnaFosHsuKakLiu12} to whiten the higher order tensors and
to reduce their dimensionality. Subsequently we employ the power
method to obtain symmetric factorization of the higher-order
terms. The methods provided in this work are simple to implement and have
high efficiency in recovering the latent factors and related
parameters. We demonstrate how this approach can be used in inferring
an IBP structure in the models discussed in \cite{GriGha11} and
\cite{KnoGha07} and the generalized spectral method for the HDP, which can 
be used in modeling problems involving grouped data such that mixture 
components are shared across all the groups.
Moreover, we show that empirically the spectral algorithms
outperform sampling-based algorithms and variational approaches
both in terms of perplexity and speed. Statistical guarantees for recovery and
stability of the estimates conclude the paper. 

{\bf Outline:} 
The key idea of spectral methods is to use the method of moments to solve the 
underlying parameters, which includes the following steps: 
\begin{itemize}
\item Construct equations for obtaining diagonalized tensors using moments of 
the latent variables defined in the probabilistic graphical model.
\item Replace the theoretical moments with the empirical moments and obtain 
an empirical version of the diagonalized tensor.
\item Use tensor decomposition solvers to decompose the empirical diagonalized 
tensor and obtain its eigenvalues/eigenvectors, which corresponds to the desired 
hidden vectors/topics. 
\end{itemize}
In order to use tensor decomposition solver, a decomposable symmetric tensor
must be constructed. 
A tensor is decomposable and symmetric if it can be written as a summation of 
the outer products of its eigenvectors weighted by their correspnding eigenvalues. 
In the two dimensional case (i.e, as a matrix), a rank-k symmetric tensor is 
decomposable and symmetric since it can be decomposed as $M = \sum_{i=1}^k \lambda_i v_i v_i^T,$ where $\lambda_i/v_i$ are the eigenvalue/eigenvector pairs. 
In the first step, we construct a tensor that has such properties using theoretical
 moments so that the tensor can be further estimated using empirical moments 
 and decomposed by tensor decomposition tools.

The paper is structured as follows: In Section \ref{sec:model} we introduce the IBP and the HDP models. 
In Section \ref{sec:moments} we construct equations for obtaining the diagonalized tensors using moments of the IBP and apply them on two applications, the linear Gaussian latent factor model and the infinte sparse factor analysis. We also derive the generalized tensors for the HDP that are applicable on any arbitrary hierarchical structure. 
In Section \ref{sec:alg} the spectral algorithms for thel IBP and the HDP is proposed. 
We also list out several tensor decomposition tools that can be used to solve our problem. 
In Section \ref{sec:concentration} we show the concentration measure of moments 
and tensors for these two models and provide overall guarantees on $L_2$ distance
 between the recovered latent vectors and the ground truth. 
In Section \ref{sec:experiment} we demonstrate the power of the spectral IBP by 
showing that the method is able to produce comparable results to that of 
variational approaches with much lesser time. 
We also applied it on image data and gene expression data to show that the algorithm is able to infer mearningful patterns in real data. 
For the spectral method for the HDP, we show that (1) computational does not increase with number of layer using our method, while obviously the factor will significantly affect Gibbs sampling
and (2) when the number of samples underneath each nodes in a hierarchical structure 
is highly unbalanced, the spectral for the HDP is able to obtain solutions better than that of spectral LDA in terms of perplexity.

\section{Model Settings}
\label{sec:model}
We begin with defining the models of the IBP and the HDP.
\subsection{The Indian Buffet Process}
\label{sec:ibp}

The Indian Buffet Process defines a distribution over equivalence
classes of binary matrices $Z$ with a finite number of rows and a
(potentially) infinite number of columns \citep{GriGha06,GriGha11}. The
idea is that this allows for automatic adjustment of the number of
binary entries, corresponding to the number of independent sources,
underlying causes, etc.\ This is a very useful strategy and it has led
to many applications including structuring Markov transition
matrices \citep{FoxSudJorWil10}, learning hidden causes with a bipartite
graph \citep{WooGriGha06} and finding latent features in link prediction \citep{MilGriJor09}.
Denote by $n \in \mathbb{N}$ the number of rows of $Z$, i.e.\ the number of
customers sampling dishes from the `` Indian Buffet'', let $m_k$ be the
number of customers who have sampled dish $k$, let $K_+$ be the total number
of dishes sampled, and denote by $K_h$ the number of dishes with a
particular selection history $h \in \cbr{0;1}^n$. That is, $K_h > 1$
only if there are two or more dishes that have been selected by
exactly the same set of customers. Then the probability of
generating a particular matrix $Z$ is given by \cite{GriGha11}
\begin{align}
  \label{eq:pz}
  p(Z) = \frac{\alpha^{K_+}}{\prod_h K_h!} \exp\Biggl[{-\alpha
    \sum_{j=1}^n \textstyle \frac{1}{j}}\Biggr] \prod_{k=1}^{K_+} \frac{(n-m_k)! (m_k-1)!}{n!}
\end{align}
Here $\alpha$ is a parameter determining the expected number of
nonzero columns in $Z$.  Due to the conjugacy of the prior an
alternative way of viewing $p(Z)$ is that each column (aka dish)
contains nonzero entries $Z_{ij}$ that are drawn from the binomial
distribution $Z_{ij} \sim \mathrm{Bin}(\pi_i)$. That is, if we
\emph{knew} $K_+$, i.e.\ if we knew how many nonzero features $Z$
contains, and if we knew the probabilities $\pi_i$, we could draw $Z$
efficiently from it. We take this approach in our analysis: determine
$K_+$ and infer the probabilities $\pi_i$ directly from the data. This
is more reminiscent of the model used to derive the IBP --- a
hierarchical Beta-Binomial model, albeit with a variable number of entries:
$$
\begin{tikzpicture}[>=latex,text height=1.5ex,text depth=0.25ex]
  \matrix[row sep=0.8cm,column sep=0.6cm] {
    \node (alpha) [observed]{$\alpha$}; &
    \node (pi) [latent]{$\pi_i$}; &
    \node (z) [latent]{$Z_{ij}$}; 
    \\
  };
  \path[->]
  (alpha) edge[thick] (pi)
  (pi) edge[thick] (z)
  ;
  \begin{pgfonlayer}{background}
    \node (customers) [plate, fit=(z)] {\
      \\[9.5mm] \tiny $j \in \cbr{n}$};
    \node (dishes) [plate, fit=(pi) (customers)] {\
      \\[16.5mm]\tiny $i \in \cbr{K_+}$};
    \end{pgfonlayer}
\end{tikzpicture}
$$
In general, the binary attributes $Z_{ij}$ are \emph{not}
observed. Instead, they capture auxiliary structure pertinent to a
statistical model of interest. To make matters more concrete, consider
the following two models proposed by \cite{GriGha11} and
\cite{KnoGha07}. They also serve to showcase the algorithm design in
our paper. 
\paragraph{Linear Gaussian Latent Feature Model \citep{GriGha11}.}
The assumption is that we observe vectorial data $x$. It is generated
by linear combination of dictionary atoms $\Phi$ and an associated unknown
number of binary causes $z$, all corrupted by some additive noise
$\epsilon$. That is, we assume that 
\begin{align}
  \label{eq:lingalaf}
  x = \Phi z + \epsilon
  \text{ where } \epsilon \sim \Ncal(0, \sigma^2\one)
  \text{ and } z \sim \mathrm{IBP}(\alpha).
\end{align}
The dictionary matrix $\Phi$ is considered to be fixed but unknown. In
this model our goal is to infer both $\Phi$, $\sigma^2$ and the
probabilities $\pi_i$ associated with the IBP model. Given that, a
maximum-likelihood estimate of $Z$ can be obtained efficiently. 

\paragraph{Infinite Sparse Factor Analysis \citep{KnoGha07}.} 

A second model is that of sparse independent component analysis. In a
way, it extends \eq{eq:lingalaf} by replacing binary attributes with
sparse attributes. That is, instead of $z$ we use the entry-wise
product $z \mathrm{.*} y$. This leads to the model
\begin{align}
  \label{eq:linzy}
  x = \Phi (z \mathrm{.*} y) + \epsilon
  \text{ where } \epsilon \sim \Ncal(0, \sigma^2 \one)
  \text{ , } z \sim \mathrm{IBP}(\alpha)
  \text{ and } y_i \sim p(y)
\end{align}
Again, the goal is to infer the dictionary $\Phi$, the probabilities $\pi_i$ and then to
associate likely values of $Z_{ij}$ and $Y_{ij}$ with the data. In
particular, \cite{KnoGha07} make a number of alternative assumptions
on $p(y)$, namely either that it is iid Gaussian or that it is iid
Laplacian. Note that the scale of $y$ itself is not so important since
an equivalent model can always be found by re-scaling matrix $\Phi$ suitably. 

Note that in \eq{eq:linzy} we used the shorthand $\mathrm{.*}$ to
denote point-wise multiplication of two vectors in 'Matlab'
notation. While \eq{eq:lingalaf} and \eq{eq:linzy} appear rather
similar, the latter model is considerably more complex since it not
only amounts to a sparse signal but also to an additional
multiplicative scale. \cite{KnoGha07} refer to the model as Infinite
Sparse Factor Analysis (isFA) or Infinite Independent Component
Analysis (iICA) depending on the choice of $p(y)$ respectively.

\subsection{The Hierarchical Dirichlet Process (HDP)}
\label{sec:hdp}
The HDP mixture models are useful in modeling problems involving groups of
data, where each observation within a group is drawn from a mixture
model and it is desirable to share mixture components across all the
groups. A natural application with this property is topic modeling
for documents, possibly supplemented by an ontology.
The HDP \citep{TehJorBeaBle06} uses a Dirichlet Process (DP) \citep{Antoniak74,
  Ferguson73} $G_j$ for each group $j$ of data to handle uncertainty
in number of mixture components. At the same time, in order to share mixture
components and clusters across groups, each of these DPs is drawn from a
global DP $G_0$. The associated graphical model is given below:
\begin{center}
\begin{tikzpicture}[>=latex,text height=1.5ex,text depth=0.25ex]
  \matrix[row sep=7mm,column sep=0.4cm] {
    &
    \node (gamma) [observed] {$\gamma_0$}; & 
    \node (alpha) [observed] {$\gamma_1$}; & 
    \\
    \node (H) [observed] {$H$}; & 
    \node (G0) [latent] {$G_0$}; & 
    \node (Gj) [latent] {$G_i$}; & 
    \node (theta) [latent] {$\theta_{ij}$}; & 
    \node (x) [observed] {$x_{ij}$}; &
    \\
  };
  \path[->]
  (gamma) edge[thick] (G0)
  (alpha) edge[thick] (Gj)
  (H) edge[thick] (G0)
  (G0) edge[thick] (Gj)
  (Gj) edge[thick] (theta)
  (theta) edge[thick] (x)
  ;
  \begin{pgfonlayer}{background}
    \node (child) [plate, fit=(theta) (x)] {\
        \\[8.5mm]\tiny for all $j \in \cbr{N_i}$};
      \node (parent) [plate, fit=(child) (Gj)] {\
        \\[16.0mm]\tiny for all $i \in \cbr{I}$};
    \end{pgfonlayer}
\end{tikzpicture}
\end{center}
More formally, we have the following statistical description of a two
level HDP. Extensions to more than two levels are straightforward (we
provide a general multilevel HDP spectral inference algorithm). 
\begin{enumerate*}
\item Sample $G_0|\gamma_0,H \sim \mathrm{DP}(\gamma, H)$
\item For each $i \in \cbr{I}$ do
\begin{enumerate*}
\item Sample $G_i|\gamma_1, G_0 \sim \mathrm{DP}(\gamma_0,G_0)$
\item For each $j \in \cbr{N_i}$ do
\begin{enumerate*}
\item Sample $\theta_{ij} \sim G_i$
\item Sample $x_{ij}|\theta_{ij} \sim F(\theta_{ij}),$
\end{enumerate*}
\end{enumerate*}
\end{enumerate*}
Here $H$ is the base distribution which governs the a priori
distribution over data items, $\gamma_0$ is a concentration parameter
which controls the amount of sharing across groups and $\gamma_1$ is
a concentration parameter which governs the a priori number of clusters
and a parametric distribution $F(\theta)$. This process can be
repeated to achieve deeper hierarchies, as needed.

More formally, we have the following statistical description of a $L$-level HDP.

\begin{center}
\begin{figure}
\centering
\begin{tikzpicture}[>=latex,text height=1.4ex,text depth=0.25ex]
  \matrix[row sep=5mm,column sep=3mm] {
    \node (0) {$0$}; & & & 
    \node (G0) [latent] {$G_0$}; &
    \\
    \node (1) {$1$}; & 
    \node (G00) [latent] {$G_{00}$}; &
    \node (G01) [latent] {$G_{01}$}; & &
    \node (G02) [latent] {$G_{02}$}; &
    \\
    \node (2) {$2$}; & 
    \node (G010) [latent] {$G_{010}$}; &
    \node (G011) [latent] {$G_{011}$}; &
    \node (G020) [latent] {$G_{020}$}; &
    \node (G021) [latent] {$G_{021}$}; &
    \node (G022) [latent] {$G_{022}$}; &
    \\[-2mm]
    \node {$\vdots$}; &
    \node {$\vdots$}; &
    \node {$\vdots$}; &
    \node {$\vdots$}; &
    \node {$\vdots$}; &
    \node {$\vdots$}; &
    \\[-4mm]
    \node (L) {$L-1$}; &
    \node (Gi) [latent] {$G_{\ib}$}; &
    \node {$\ldots$}; &
    \node (Gii) [latent] {$G_{\ib'}$}; &
    \node {$\ldots$}; &
    \node (Giii) [latent] {$G_{\ib''}$}; &
    \\
    \node (docs) {docs}; &
    \node (thetaij) [latent] {$\theta_{\ib j}$}; & 
    \node {$\ldots$}; &
    \node (thetaiij) [latent] {$\theta_{\ib' j}$}; & 
    \node {$\ldots$}; &
    \node (thetaiiij) [latent] {$\theta_{\ib'' j}$}; & 
    \\
    &
    \node (xij) [observed] {$x_{\ib j}$}; & 
    \node {$\ldots$}; &
    \node (xiij) [observed] {$x_{\ib' j}$}; & 
    \node {$\ldots$}; &
    \node (xiiij) [observed] {$x_{\ib'' j}$}; & 
    \\
  };
  \path[->]
  (G0) edge[thick] (G00) 
  (G0) edge[thick] (G01) 
  (G0) edge[thick] (G02) 
  (G01) edge[thick] (G010) 
  (G01) edge[thick] (G011) 
  (G02) edge[thick] (G020) 
  (G02) edge[thick] (G021) 
  (G02) edge[thick] (G022) 
  (Gi) edge[thick] (thetaij) 
  (thetaij) edge[thick] (xij) 
  (Gii) edge[thick] (thetaiij) 
  (thetaiij) edge[thick] (xiij) 
  (Giii) edge[thick] (thetaiiij) 
  (thetaiiij) edge[thick] (xiiij) 
  ;
  \begin{pgfonlayer}{background}
    \node (childi) [plate, fit=(thetaij) (xij)] {~};
    \node (childii) [plate, fit=(thetaiij) (xiij)] {~};
    \node (childiii) [plate, fit=(thetaiiij) (xiiij)] {~};
    \end{pgfonlayer}
\end{tikzpicture}
\vspace{-2mm}
\caption{Hierarchical Dirichlet Process with observations at the leaf
  nodes.
  \label{fig:tree}}
\end{figure}
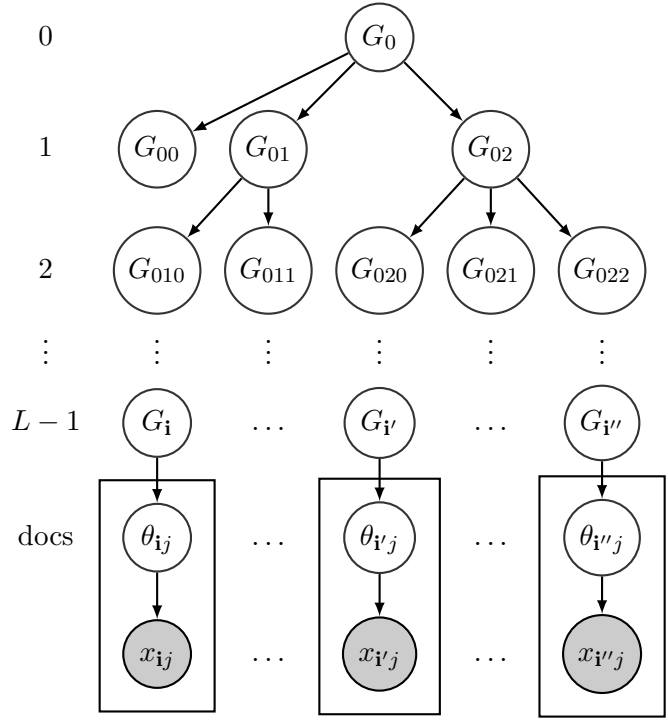
\end{center}
\paragraph{Trees}
Denote by $\Tcal = (V,E)$ a tree of depth $L$. For any vertex $\ib \in
\Tcal$ we use $p(\ib) \in V$, $c(\ib) \subset V$ and
$l(\ib)\in\cbr{0,1...,L-1}$ to denote the parent, the set of children
and level of the vertex respectively. When needed, we enumerate the
vertices of $\Tcal$ in dictionary order. For instance, the root node
is denoted by $\ib = (0)$, whereas $\ib = (0,4,2)$ is the node
obtained by picking the fourth child of the root node and then the
second child thereof respectively. Finally, we have sets of
observations $X_{\ib}$ associated with the vertices $\ib$ (in some
cases only the leaf nodes may contain observations). This yields
\begin{align*}
  G_0 & \sim \mathrm{DP}(H, \gamma_0) &
  \theta_{\ib j} & \sim G_{\ib} \\
  G_{\ib} & \sim \mathrm{DP}\rbr{G_{p(\ib)}, \gamma_{l(\ib)}} &
  x_{\ib j} & \sim \mathrm{Categorical}(\theta_{\ib j})
\end{align*}
Here $\gamma_{\ib}$ denotes the concentration parameter at vertex
$\ib$ and $H$ is the base distribution which governs the a priori
distribution over data items. Figure~\ref{fig:tree} illustrates the full model.

As explained earlier, the distributions $G_\ib$ have a stick breaking representation sharing common atoms:
\begin{equation}
G_\ib = \sum_{v=1}^\infty \pi_{\ib v}\delta_{\phi_v} \text{ with } \phi_v
\sim H.
\end{equation}

\section{Spectral Characterization}
\label{sec:moments}

We are now in a position to define the moments of the IBP and the HBP.
Our analysis begins by deriving moments
for the IBP proper. Subsequently we apply this to the two models
described above. Next, following the similar procedure, we derive the moments for the HDP. All proofs are deferred to the Appendix. For
notational convenience we denote by $\symm$ the symmetrized version of
a tensor where care is taken to ensure that existing multiplicities
are satisfied. That is, for a generic third order tensor 
we set $\symm_6[A]_{ijk} = A_{ijk} + A_{kij} + A_{jki} + A_{jik} + A_{kji} +
A_{ikj}$. However, if e.g.\ $A = B \otimes C$ with $B_{ij} = B_{ji}$,
we only need $\symm_3[A]_{ijk} = A_{ijk} + A_{jki}+ A_{kij}$ to
obtain a symmetric tensor.

\subsection{Tensorial Moments for the IBP}
In our approach we assume that $Z \sim
\mathrm{IBP}(\alpha)$. We assume that the number of nonzero attributes
$k$ is unknown (but fixed). 
In our derivation, a degeneracy in the third order tensor requires that we compute a
fourth order moment. We can exclude the cases of $\pi_i = 0$ and
$\pi_i = 1$ since the former amounts to a nonexistent feature and the
latter to a constant offset. We use $M_i$ to denote moments of order
$i$ and $S_i$ to denote diagonal(izable) tensors of order
$i$. Finally, we use $\pi \in \RR^{K_+}$ to denote the vector of probabilities $\pi_i$.
\begin{description*}
\item[Order 1] This is straightforward, since we have
  \begin{align}
    \label{eq:tensor-1}
    M_1 := \Eb_z\sbr{z} = \pi =: S_1.
  \end{align}
\item[Order 2] The second order tensor is given by
  \begin{align}
    M_2 & := \Eb_z \sbr{z \otimes z} = \pi \otimes \pi +
    \mathrm{diag}\rbr{\pi -\pi^2} = {S}_1 \otimes
    {S}_1 + \mathrm{diag}\rbr{\pi -\pi^2}.  
    \intertext{Solving for the diagonal tensor we have}
    \label{eq:tensor-2}
    S_2 & := {M}_2 - {S}_1 \otimes {S}_1 = 
    \mathrm{diag}\rbr{\pi - \pi^2}. 
  \end{align}
  The degeneracies $\cbr{0, 1}$ of $\pi - \pi^2 = (1-\pi) \pi$ can be
  ignored since they amount to non-existent and 
  degenerate probability distributions.
\item[Order 3] The third order moments yield
  \begin{align}
    {M}_3 := & \Eb_z \sbr{z \otimes z \otimes z} = \pi \otimes \pi
    \otimes \pi + \symm_3 \sbr{\pi \otimes \mathrm{diag}\rbr{\pi -
        \pi^2}}  + \mathrm{diag}\rbr{\pi - 3 \pi^2 + 2 \pi^3} \\ 
    = &  {S}_1 \otimes  {S}_1 \otimes
    {S}_1 + \symm_3 \sbr{ {S}_1 \otimes
      {S}_2} + \mathrm{diag}\rbr{\pi - 3 \pi^2 + 2 \pi^3}. \\
    {S}_3 := & {M}_3- \symm_3\sbr{{S}_1 \otimes {S}_2} - {S}_1 \otimes {S}_1 \otimes {S}_1 =  \mathrm{diag}\rbr{\pi - 3 \pi^2 + 2 \pi^3} .
    \label{eq:tensor-3}
  \end{align}
  Note that the polynomial $\pi - 3 \pi^2 + 2 \pi^3 = \pi (2\pi - 1)
  (\pi - 1)$ vanishes for $\pi = \frac{1}{2}$. This is undesirable for
  the power method --- we need to compute a
  fourth order tensor to exclude this.
\item[Order 4] The fourth order moments are 
  \begin{align}
    {M}_4 := & \Eb_z \sbr{z \otimes z \otimes z \otimes z} = {S}_1
    \otimes {S}_1 \otimes {S}_1 \otimes {S}_1 + \symm_6 \sbr{ {S}_2
      \otimes {S}_1 \otimes {S}_1} + \symm_3 \sbr{ {S}_2
      \times  {S}_2} \nonumber\\
    & + \symm_4 \sbr{ {S}_3 \otimes {S}_1} + \mathrm{diag} \rbr{
      \pi- 7\pi^2 + 12\pi^3  -6 \pi^4 } \nonumber \\
        \label{eq:tensor-4}
        {S}_4 := &  {M}_4 - {S}_1 \otimes  {S}_1 \otimes  {S}_1
        \otimes  {S}_1  - \symm_6 \sbr{ {S}_2 \otimes  {S}_1 \otimes
          {S}_1} - \symm_3 \sbr{ {S}_2 \times  {S}_2}  -  \symm_4
        \sbr{ {S}_3 \otimes  {S}_1} \nonumber \\ 
        =&  \mathrm{diag} \rbr{ \pi- 7\pi^2 + 12\pi^3  -6 \pi^4 }.
      \end{align}
      The roots of the polynomial are $\cbr{0, \frac{1}{2} - 1/\sqrt{12},
        \frac{1}{2} + 1/\sqrt{12}, 1}$. Hence the latent factors and
      their corresponding $\pi_k$ can be inferred either by ${S}_3$ or
      by ${S}_4$.
\end{description*}

\subsection{Applications of the IBP}

The above derivation showed that if we were able to access $z$ directly, we
could infer $\pi$ from it by reading off terms from a diagonal
tensor. Unfortunately, this is not quite so easy in practice since $z$
generally acts as a \emph{latent} attribute in a more complex
model. In the following we show how the models of \eq{eq:lingalaf} and
\eq{eq:linzy} can be converted into spectral form. We need some
notation to indicate multiplications of a tensor $M$ of order $k$ by a set of
matrices $A_i$. 
\begin{align}
  \label{eq:tensor-mat}
  \sbr{T(M, A_1, \ldots, A_k)}_{i_1, \ldots i_k} :=  \sum_{j_1, \ldots
    j_k} M_{j_1, \ldots j_k} \sbr{A_1}_{i_1 j_1} \cdot \ldots \cdot
  \sbr{A_k}_{i_k j_k}.
\end{align}
Note that this includes matrix multiplication. For instance, $A_1^\top
M A_2 = T(M, A_1, A_2)$. Also note that in the special case where the
matrices $A_i$ are vectors, this amounts to a reduction to a
scalar. Any such reduced dimensions are assumed to be dropped
implicitly. The latter will become useful in the context of the tensor
power method in \cite{AnaGeHsuKakTel12}. 

Here are two tensor operations that are frequently used in the derivation for linear applications of the IBP. First, for $x = Az$ (e.g. observation $x$ is a linear combination of some columns in matrix $A$ indicated by the IBP binary vector $z$), the $i$-th order moment
$M_i^x$ where the superscript denotes the variable for the moments,
can be obtained by multiplying the $i$-th order moment of $z,$ $M_i^z,$ with the affine matrix $A$ on all dimension, i.e.,
\begin{align}
M_i^z = T(M_i^z, A, \cdots, A)
\end{align}


Another property is addition. Suppose $y = x + \sigma$ (e.g. there exists some additional noise.), then, by using addition rule of expectation, we have 
\begin{align}
M_1^y = M_1^x + M_1^\sigma
\end{align}

Higher order moments can be obtained by taking the expansion of the polynomial expression $(x + \sigma)^{\otimes k},$ which yields
\begin{align}
&M_2^y = \Eb[(x+\sigma) \otimes (x+\sigma)] = M_2^x + M_2^\sigma + \symm_2\rbr{\Eb \sbr{x\sigma}} \\
&M_3^y = M_3^x + M_3^\sigma +  \symm_3 \sbr{x \otimes x \otimes \sigma + \sigma \otimes \sigma \otimes x} \\
&M_4^y = M_4^x + M_4^\sigma + \symm_4 \sbr{ \sigma \otimes \sigma \otimes \sigma \otimes x + \sigma \otimes x \otimes x \otimes x } + \symm_6 \sbr{\sigma \otimes \sigma \otimes x \otimes x}
\end{align}

 If $\sigma$ is Gaussian or some symmetric random variable, then its first and third moments become zero, thus the third order moment becomes $M_3^y = M_3^x + \symm_3 \sbr{ \sigma \otimes \sigma \otimes x}.$ Similarly, the forth-order moment
 reduces to  $M_4^y = M_4^x + M_4^\sigma + \symm_6 \sbr{\sigma \otimes \sigma \otimes x \otimes x}.$

\noindent {{\bfseries Linear Gaussian Latent Factor Model.}
When dealing with \eq{eq:lingalaf} our goal is to infer both $\Phi$ and
$\pi$. The main difference is that rather
than observing $z$ we have $\Phi z$, hence all tensors are
colored. Moreover, we also need to deal with the terms arising from
the additive noise $\epsilon$. This yields
\begin{align}
 \label{eq:gaussian-tensor-1}
 S_1 := & M_1 = T(\pi, \Phi) \\
 \label{eq:gaussian-tensor-2}
 S_2 := & M_2 - S_1 \otimes S_1  - \sigma^2 \one = 
 T(\mathrm{diag}(\pi - \pi^2), \Phi, \Phi) \\
 \label{eq:gaussian-tensor-3}
 S_3 := & M_3 - S_1 \otimes S_1 \otimes S_1 -\symm_3 \sbr{ S_1 \otimes
   S_2}- \symm_3 \sbr{ {m_1 \otimes \one}} =  T\rbr{\mathrm{diag}\rbr{\pi - 3\pi^2+2\pi^3}, \Phi, \Phi, \Phi}
 \\
 \label{eq:gaussian-tensor-4}
 S_4 := & M_4 - S_1 \otimes S_1 \otimes S_1 \otimes S_1 - \symm_6 \sbr{S_2 \otimes S_1 \otimes S_1} - \symm_3 \sbr{S_2 \otimes S_2} - \symm_4 \sbr{S_3 \otimes S_1} \\
&- \sigma^2\symm_6 \sbr{S_2 \otimes \one} - m_4 \symm_3 \sbr{ \one \otimes \one}
\nonumber \\
= & T\rbr{\mathrm{diag}\rbr{-6 \pi^4 +12 \pi^3 -7 \pi^2 +
    \pi}, \Phi, \Phi, \Phi, \Phi} 
\nonumber
\end{align}
Here we used the auxiliary statistics $m_1$ and $m_4$. Denote by
$v$ the eigenvector with the smallest eigenvalue of the covariance
matrix of $x$. Then the auxiliary variables are defined as
 \begin{align}
   \label{eq:gaussian-variance-m1}
   m_1 :=& \Eb_x \sbr{x \inner{v}{\rbr{x-\Eb\sbr{x}}}^2}  = \sigma^2 T(\pi, \Phi) \\
   \label{eq:gaussian-variance-m4}
   m_4 :=& \Eb_x \sbr{\inner{v}{\rbr{x-\Eb_x\sbr{x}}}^4} /3 = \sigma^4.
\end{align}
These terms are used in a tensor power method to infer both $\Phi$ and
$\pi.$ 

\begin{proof}
To easily apply the addition property of moments, we define $y = \Phi x.$ 
\begin{description*}
\item[Order 1 tensor:] By using Equation \eq{eq:tensor-1}, we have
 \begin{align}
  S_1 := M_1 &= \Eb_x \sbr{x} = M_1^y + M_1^\sigma =  T(\Eb[z], \Phi)= T( \pi, \Phi),
 \end{align}
 where we apply the addition property of moments in the third equation, and linear transformation property at the fourth equation.
 To infer the \emph{number} of latent variables $k$ and deal with the
 noise term, we need to determine the rank of the covariance matrix $\Eb_x
 \sbr{\rbr{x- \Eb_x{\sbr{x}}}\otimes \rbr{x-
     \Eb_x{\sbr{x}}}}$. Because there is additive noise, the smallest
 $(d-K)$ eigenvalues will not be exactly zero. Instead, they amount to
 the variance arising from $\epsilon$ since
 \begin{align}
   \mathrm{cov}[\Phi z + \epsilon] = \Phi^\top \mathrm{cov}[z] \Phi + \mathrm{cov}[\epsilon].
 \end{align}
 Consequently the smallest eigenvalues of the covariance matrix of $x$
 allow us to read off the variance $\sigma^2$: for any normal vector
 $v$ corresponding to the $d-k$ smallest eigenvalues we have
 \begin{align}
   \label{eq:gaussian-variance-1}
   \Eb_x \sbr{ \rbr{ v^\top \rbr{x-\Eb \sbr{X}}}^2}= 
   v^\top \Phi^\top \mathrm{cov}[z] \Phi v + v^\top \mathrm{cov}[\epsilon] v
   = \sigma^2.
 \end{align}
\item[Order 2 tensor:]
 Here we plug in Equation \eq{eq:tensor-2} and use
 independence of $z$ and $\epsilon$. Linear terms in $\epsilon$
 vanish. Thus we get
\begin{align}
  \nonumber
  M_2 =& M_2^y + M_2^\sigma + \Eb[y \otimes \sigma]  
         = T\rbr{ \Eb_z \sbr{z\otimes z},\Phi,\Phi}   + \sigma^2 \one \\
  =& T\rbr{ \pi \otimes \pi + \mathrm{diag}\rbr{\pi -\pi^2}, \Phi,\Phi}
        + \sigma^2 \one \\
  =& S_1 \otimes S_1 +  T\rbr{ \mathrm{diag}\rbr{\pi -
       \pi^2}, \Phi, \Phi} + \sigma^2 \one,
\end{align}
where the second equations follow the linear transformation property of moments.
This yields the statement in Equation \eq{eq:gaussian-tensor-2}.
\item[Order 3 tensor:] As before, denote by $v$ an eigenvector
  corresponding to the $(d-k)$ smallest eigenvalues, i.e.\ $v^\top \Phi=
  0$. We first define an auxiliary term
  \begin{align}
    \nonumber
    m_1 :=& \Eb_x \sbr{x \rbr{v^\top \rbr{x-\Eb{\sbr{x}}}}^2} =  \Eb_x
    \sbr{x\rbr{v^\top \rbr{\Phi\rbr{z-\pi}+\varepsilon}}^2} \\
    =& \Eb_x\sbr{x\rbr{v^\top \varepsilon}^2} = \sigma^2 T(\pi, \Phi). 
  \end{align}
  Since the Normal Distribution is symmetric, only even moments of
  $\epsilon$ survive. Using \eq{eq:tensor-3}, the third order moments yield
  \begin{align}
    M_3 & =    M_3^y  + \Eb_z \sbr{\symm_3 \sbr{\Phi z \otimes \epsilon \otimes \epsilon} }\\
    & = T\rbr{\Eb_z[z\otimes z \otimes z],\Phi,\Phi,\Phi} 
       + \symm_3 \rbr{m_1\otimes \one } \\
    &=S_1 \otimes S_1 \otimes S_1 +\symm_3 \sbr{ S_1 \otimes S_2}
    +T\rbr{ \mathrm{diag}\rbr{\pi - 3\pi_i^2+2\pi_i^3},\Phi,\Phi,\Phi}
    + \symm_3\rbr{m_1\otimes \one } 
    \nonumber
\end{align}
Thus, we get Equation \eq{eq:gaussian-tensor-3}.
\item[Order 4 tensor:] We obtain the fourth-order tensor by first
  calculating an auxiliary variable related to the additive noise term
\begin{align}
m_4 :=& \Eb_x \sbr{\rbr{v^\top \rbr{x-\Eb_x\sbr{x}}}^4}/3 =  \Eb[\rbr{v^\top \epsilon}^4]/3=   \sigma^4.
\end{align}
Here the last equality followed from the isotropy of Gaussians. 
With Equation \eq{eq:tensor-4}, the forth order moments are
\begin{align}
  \nonumber
M_4 =& M_4^y + M_4^\epsilon + \Eb_x \sbr{\symm_6 \sbr{y \otimes y \otimes \epsilon \otimes \epsilon} }\\ 
       =& T\rbr{ \Eb_z \sbr{z \otimes z \otimes z \otimes z}, \Phi, \Phi,\Phi,\Phi}
         +\sigma^2 \symm_6\sbr{S_2 \otimes \one} 
         +  \sigma^4 \symm_3 \sbr{\one\otimes \one} \nonumber\\
       =& S_1 \otimes S_1 \otimes S_1 \otimes S_1 \nonumber 
         + \symm_6 \sbr{S_2 \otimes S_1 \otimes S_1} 
         + \symm_3 \sbr{S_2 \times S_2} \nonumber 
         + \symm_4 \sbr{S_3 \otimes S_1} \\
      & +T\rbr{ \mathrm{diag}\rbr{-6 \pi^4 +12 \pi^3 -7 \pi^2 + \pi}, \Phi, \Phi, \Phi}  
         + \sigma^2\symm_6\sbr{S_2 \otimes \one} 
         +   m_4 \symm_3 \sbr{\one\otimes \one}.  \nonumber
\end{align}

\end{description*}

\end{proof}

\noindent {\bfseries Infinite Sparse Factor Analysis (isFA)}

Using the model of \eq{eq:linzy} it follows that $z$ is a
\emph{symmetric} distribution with mean $0$ provided that $p(y)$ has
this property. 
Here we state the property of moments by using such prior. For $x = z \odot y,$ 
$M_i^x = M_i^z \odot M_i^y.$ If $y$ is symmetric so that the first and the third 
order moments vanish, we have $M_1^x = M_3^x = 0.$ From that it follows that 
the first and third order
moments and tensors vanish, i.e.\ $S_1=0$ and $S_3=0$. We have the
following statistics:
\begin{align}
  \label{eq:isfa-tensor-1}
  S_2 := &  M_2 -\sigma^2 \one = T\rbr{c \cdot \mathrm{diag}(\pi), \Phi, \Phi}\\
  \label{eq:isfa-tensor-4}
  S_4 := & M_4 -\symm_3 \sbr{S_2 \otimes S_2} - 
  \sigma^2\symm_6 \sbr{S_2 \otimes \one} - m_4 \symm_3 \sbr{\one \otimes \one}
  = T\rbr{\mathrm{diag}(f(\pi)), \Phi, \Phi, \Phi, \Phi}.
\end{align}
Here $m_4$ is defined as in \eq{eq:gaussian-variance-m4}. Whenever $p(y)$ in
\eq{eq:linzy} is Gaussian, we have $c = 1$ and $f(\pi) = \pi -
\pi^2$. Moreover, whenever $p(y)$ follows the Laplace distribution, we
have $c=2$ and $f(\pi) = 24 \pi - 12 \pi^2$.

\begin{proof}
Since both $Y$ and $\epsilon$ are symmetric and have zero mean, the
odd order tensors vanish. That is $M_1 = 0$ and $M_3 = 0$. It suffices
for us to focus on the even terms.
\begin{description*}
\item[Order 2 tensor:] Using covariance matrix of \eq{eq:tensor-2} yields
  \begin{align} 
    M_2 &= \Eb_x \sbr{x \otimes x} = T\rbr{ \Eb_z{ \sbr{ \rbr{z \odot y} \otimes \rbr{z \odot y} }},\Phi,\Phi }+ \sigma^2\one \\
    &=   T\rbr{ \rbr{ \Eb_z{\sbr{z \otimes z}} \odot  \Eb_y \sbr{y^2} \one},\Phi, \Phi } + \sigma^2\one  \\
    &=T \rbr{ \rbr{ \pi \otimes \pi + \mathrm{diag}\rbr{\pi -\pi^2}} \odot \Eb_y \sbr{y^2} \one, \Phi,\Phi}  + \sigma^2\one \\
    \label{eq:y2}
    & =  T\rbr{\Eb_y \sbr{y^2} \mathrm{diag}\rbr{\pi}, \Phi,\Phi} +\sigma^2\one =  T\rbr{\mathrm{diag}\rbr{\pi}, \Phi,\Phi} +\sigma^2\one,
  \end{align}
  where the second equation follows the element-wise multiplication property of the moment.
  As before, the variance $\sigma^2$ of $\epsilon$ can be inferred by
  Equation \eq{eq:gaussian-variance-1}. Here we get Equation
  \eq{eq:isfa-tensor-1}. 
\item[Order 4 tensor:] With Equation \eq{eq:tensor-4} and $\Eb_y\sbr{y^4} = 3$, we have 
\begin{align}
  M_4 =& \Eb_x \sbr{x \otimes x \otimes x \otimes x} \nonumber\\
  =& \Eb_z{ \sbr{ \Phi \rbr{z \odot y} \otimes \Phi \rbr{z \odot y} \otimes
      \Phi \rbr{z \odot y} \otimes \Phi \rbr{z \odot y} }} \nonumber\\
  \nonumber
  & +\Eb_z{ \sbr{ \symm_6\sbr{ \Phi \rbr{z \odot y} \otimes \Phi \rbr{z \odot y} \otimes \epsilon \otimes \epsilon} }} +  \Eb \sbr{\epsilon\otimes \epsilon \otimes \epsilon\otimes \epsilon} \nonumber\\
  =& T\rbr{\Eb_z \sbr{z \otimes z \otimes z \otimes z} \odot \Eb_y \sbr{y^4} \one,\Phi , \Phi, \Phi, \Phi} + \sigma^2\symm_6\sbr{S_2 \otimes \one} +   \sigma^4 \symm_3 \sbr{\one\otimes \one}\nonumber\\
 =&   \symm_3 \sbr{S_2 \otimes S_2} +T\rbr{ \mathrm{diag}
   \rbr{\Eb_y\sbr{y^4} \pi_i - 3\Eb_y\sbr{y^2}^2 \pi_i^2}, \Phi, \Phi, \Phi, \Phi }
\nonumber\\
 \label{eq:y4}
 & + \sigma^2 \symm_6\sbr{S_2 \otimes \one} +  \sigma^4 \symm_3 \sbr{\one\otimes \one}  \\
 =&  \symm_3 \sbr{S_2 \otimes S_2} +T\rbr{3\rbr{ \pi_i - \pi_i^2}, \Phi, \Phi, \Phi, \Phi }
 + \sigma^2 \symm_6\sbr{S_2 \otimes \one}+ m_4 \symm_3 \sbr{\one\otimes \one}  \nonumber
\end{align}
where $m_4$ can be inferred by \eq{eq:gaussian-variance-m4}.
\end{description*}
If the prior on $Y$ is drawn from a Laplace distribution the model is
called an infinite Inde-pendent Component Analysis (iICA) \citep{KnoGha07}. The lower-order moments are similar to that of isFA, except for
$\Eb_y\sbr{y^2} = 2$ and $\Eb_y\sbr{y^4} = 24$. Replacing these terms
in Equation \eq{eq:y2} and \eq{eq:y4} yields the claim.
\end{proof}

\begin{lemma}
  \label{lem:bound}
  Any linear model of the form \eq{eq:lingalaf} or \eq{eq:linzy} with
  the property that $\epsilon$ is symmetric and satisfies
  $\Eb[\epsilon^i] = 0$ for $i \in \{1,3, 5,\cdots\},$ the same
  properties for $y$, will yield the same moments.
\end{lemma}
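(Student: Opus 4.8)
The plan is to exploit the common additive structure shared by \eq{eq:lingalaf} and \eq{eq:linzy}: in both cases we may write $x = s + \epsilon$, where the signal is $s = \Phi z$ or $s = \Phi(z \odot y)$ and $\epsilon$ is independent of $s$. Expanding the raw moment $M_k = \Eb[x^{\otimes k}]$ by multilinearity of the tensor product produces a sum of terms, each placing either a signal factor or a noise factor in each of the $k$ slots; by independence every such term factorizes into an expectation of a product of signal factors times a moment tensor of $\epsilon$ of some order $m \le k$. The strategy is to show that the only distribution-dependent quantities that survive are the \emph{even} moments of $\epsilon$ (and, in the isFA case, of $y$), so that two models agreeing on those even moments produce term-by-term identical moment equations.

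The key step is a reflection argument. Since $\epsilon$ is symmetric, $\epsilon$ and $-\epsilon$ have the same distribution, and $\epsilon$ is independent of $s$; hence any term containing an odd number $m$ of noise slots carries an $m$-th order moment tensor $\Eb[\epsilon^{\otimes m}]$, which under $\epsilon \mapsto -\epsilon$ is multiplied by $(-1)^m = -1$ while remaining unchanged in distribution, and therefore vanishes. This is precisely the cancellation already invoked in the Gaussian derivation --- the linear-in-$\epsilon$ term dropping from $M_2$, the single- and triple-$\epsilon$ terms dropping from $M_3$ --- but nothing in the argument uses Gaussianity: only the hypothesis $\Eb[\epsilon^i]=0$ for odd $i$ is needed. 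An identical argument applied through the Hadamard structure $z \odot y$ shows that the symmetry of $y$ kills every term of odd multiplicity in $y$, giving $S_1 = 0$ and $S_3 = 0$ for \eq{eq:linzy} regardless of whether $p(y)$ is Gaussian, Laplacian, or any other symmetric law.

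It then remains to read off the surviving terms. For moments up to order four the only noise-dependent constants left are $\Eb[\epsilon^2] = \sigma^2$ and $\Eb[\epsilon^4]$ (entering through $m_1$, $m_4$, and the $\sigma^2 \symm_6[\,\cdot\,]$ corrections), and for the prior the only surviving constants are $\Eb[y^2]$ and $\Eb[y^4]$, which are exactly the quantities $c$ and $f(\pi)$ appearing in \eq{eq:isfa-tensor-1} and \eq{eq:isfa-tensor-4}. Consequently any two models of the stated form whose even moments agree yield identical expressions for $M_1,\dots,M_4$, and hence identical diagonalizable tensors $S_1,\dots,S_4$. I expect the main obstacle to be purely bookkeeping: verifying in the fully symmetrized expansions (the $\symm_3$, $\symm_4$, $\symm_6$ groupings) that no monomial with an odd number of $\epsilon$ --- or odd multiplicity of $y$ in any coordinate --- is inadvertently retained, and confirming that the even-order terms combine with the correct combinatorial multiplicities. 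The reflection argument disposes of the vector-valued nature of the odd-moment vanishing cleanly, so the remaining work is to match coefficients rather than to prove any new cancellation.
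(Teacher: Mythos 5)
Your proposal is correct and takes essentially the same route as the paper's own (two-sentence) proof: both rest on the independence of $z$, $\epsilon$ and $y$ together with the vanishing of all odd-order moments of the symmetric variables, so that the moment computations carried out in the Gaussian/Laplacian case go through unchanged, with only the even moments $\Eb[\epsilon^2]$, $\Eb[\epsilon^4]$, $\Eb[y^2]$, $\Eb[y^4]$ surviving as distribution-dependent constants. Your reflection argument and term-by-term bookkeeping simply make explicit the cancellation that the paper asserts in one line.
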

\begin{proof}
  This follows directly from the fact that $z$, $\epsilon$ and $y$
  are independent and that the latter two have zero mean and are
  symmetric. Hence the expectations carry through regardless of the
  actual underlying distribution.
\end{proof}
\subsection{Tensorial Moments for the HDP}
\label{sec:spec}
To construct tensors for the HDP,  a crucial step is to derive the orthogonally
decomposable tensors from the moments. 
\begin{description*}
\item[Order 1 tensor:] The first-order moment is equivalent to the weighted sum of latent topics using a topic distribution under node $\ib,$ so it is simply the weighted combination of $\Phi$ where the weight vector is $\pi_0,$ i.e,
\begin{align}
M_1 := \Eb \sbr{x} = \Eb \sbr{\Phi h_{\ib j}} = \Eb \sbr{\Phi \pi_{\ib}} = \Phi \pi_{0}.
\end{align}
The last equation uses the fact that, for $\pi \sim \mathrm{Dirichlet}( \gamma_0 \pi_0), $ $\Eb[\pi] = \pi_0.$
 \item[Order 2 tensor:] For such variable $\pi,$ using the definition of Dirichlet distribution, we have $\Eb[[\pi^2]_{ii}] = \frac{\gamma_0}{\gamma_0 + 1} \pi_{0i} (\pi_{0i}+1)$ and $\Eb[[\pi^2]_{ij}] = \frac{\gamma_0}{\gamma_0 + 1} \pi_{0i} \pi_{0j}.$ The second-order moment thus becomes
\begin{align}
M_2 :=\Eb \sbr{x_1 \otimes x_2} = \Eb \sbr{\Phi h_{\ib 1} h_{\ib 2}^T \Phi^T } = \Phi \Eb \sbr{ \pi_{\ib} \pi_{\ib}^T }\Phi^T = \Phi A \Phi^T, 
\end{align}
where $[A]_{ii} = \frac{1}{\gamma_0 + 1} \pi_{0i} (\gamma_0\pi_{0i}+1) $ and $[A]_{ij} = \frac{\gamma_0}{\gamma_0 + 1} \pi_{0i} \pi_{0j}.$ Matrix $A$ can be decompose as the summation of a diagonal matrix and a symmetric matrix, $\pi_0 \otimes \pi_0.$ By replacing $A$ with these two matrices, the second-order moment can be re-written as 
\begin{align}
 M_2 = \Phi A \Phi^T &= \Phi \rbr{\frac{\gamma_0}{\gamma_0 + 1}\pi_0 \otimes \pi_0 + \frac{1}{\gamma_0 + 1} \mathrm{diag}(\pi_0) }\Phi^T,
\end{align}
where $\Phi \pi_0$ in the first term can be further replaced with $M_1.$ Thus, we define the second term as the second-order tensor, which is a rank-$k$ matrix, 
\begin{align}
S_2 := M_2 - \frac{\gamma_0}{\gamma_0 + 1} M_1 \otimes M_1 = \Phi \rbr{\frac{1}{\gamma_0 + 1} \mathrm{diag}(\pi_0) }\Phi^T  = \sum \limits_{i = 1}^K \frac{\pi_{0i}}{\gamma_0+1} \phi_i \otimes \phi_i.
\end{align}
\item[Order 3 tensor:] The third-order tensor is defined in the form of $S_3 := \sum_{i = 1}^K C_6 \cdot \pi_{0i} \cdot \phi_i \otimes \phi_i \otimes \phi_i,$ and can be derived using $M_1,$ $M_2$ and $M_3$ by applying the same technique of decomposing matrix or tensor into the summation of symmetric tensors and diagonal tensor. The derivation details for a multi-layer HDP tensor is provided in the Appendix. 

\end{description*}

\noindent Before stating the generalized tensors for the HDP, we define $M_r^{\ib}$ as the $r$-th moment at node $\ib$. The moment can be obtained by averaging corresponding moments of its child nodes.

\begin{align}
\label{eq:mi-general}
M_r^{\ib} := \frac{1}{|c(\ib)|} \sum \limits_{\jb \in c(\ib)} M_r^{\jb}
\end{align}

starting with $M_r^{\ib} = \Eb \sbr{\otimes_{s=1}^r x_{\ib s}} $ whenever $\ib$ represents an leaf node. In other words, for a $L$-layer model, after obtaining moments at the leaf nodes (e.g. moments on layer $L-1$), we are able to calculate moments, $M_r^{\ib}$, for node $\ib$ on layer $L-2$, by averaging the associated moments over all of its children.

Lemma \ref{lem:hdp3moments_general} shows the generalized tensors for HDP with different number of layers. Using Lemma \ref{lem:hdp3moments_general}, we found that the coefficient and moment for different hierarchical tree can be derived recursively using a bottom-up approach, i.e., coefficient for $k-$layer HDP can be derived using the coefficient of $(k-1)$-layer HDP and moments at a node $\ib$ can be derived using the moments calculating under its children, $c(\ib)$. The recursive rule is provided in Lemma \ref{lem:hdp3moments_general}.

\vspace{-2mm}
\begin{lemma}[Symmetric Tensors of the HDP]
  \label{lem:hdp3moments_general}
Given a $L$-level HDP, with hyperparameters $\gamma_{\ib}$, the symmetric Tensors for a node $\ib$ at layer $l$ can be expressed as:
\begin{align*}
   	 S_1^{\ib} &:= M_1^{\ib} = T(\pi_{\ib}, \Phi ),\,\, S_2^{\ib} := M_2^{\ib} - C_2^{l} \cdot S_1^{\ib} {S_1^{\ib}}^T = T( C_3^{l} \cdot \mathrm{diag}\rbr{\pi_{\ib} }, \Phi,\Phi),\\
	S_3^{\ib} & :=  M_3^{\ib} - C_4^{l}  \cdot S_1^{\ib} \otimes S_1^{\ib} \otimes S_1^{\ib} -C_5^{l}  \cdot \symm_3\sbr{ S_2^{\ib} \otimes M_1^{\ib}}= T(  C_6^{l} \cdot  \mathrm{diag}\rbr{\pi_{\ib}}, \Phi,\Phi,\Phi),
  \end{align*}
where
\begin{align}
C_2^{(l)} &= \frac{\gamma_{l+1} }{ \gamma_{l+1} +1 } C_2^{(l+1)},\,\, C_3^{(l)} =  C_3^{(l+1)} + \frac{C_2^{(l)}}{\gamma_{l+1} }, \,\,C_4^{(l)} = \frac{{\gamma_{l+1}}^2  }{ \rbr{\gamma_{l+1} +1}\rbr{\gamma_{l+1} +2} }  C_4^{(l+1)} \nonumber\\
C_5^{(l)} &= \frac{\gamma_{l+1} }{ \rbr{\gamma_{l+1} +1} } \frac{C_3^{(l+1)}}{C_3^{(l)}} C_5^{(l+1)} + \frac{1}{\gamma_{l+1}C_3^{(l)}}C_4^{(l)}, \,\,C_6^{(l)} = C_6^{(l+1)} +  3 \cdot \frac{ C_5^{(l)}C_3^{(l)}}{\gamma_{l+1}} - \frac{C_4^{(l)}}{\gamma_{l+1}^2} \nonumber
\end{align}
with initialization on the bottom layer ($(L - 1)$-layer) being $C_2^{L-1} = 1, $ $C_3^{L-1} =0 ,$ $C_4^{L-1} =1 ,$ $C_5^{L-1} =0,$
and $C_6^{L-1} = 0.$
\end{lemma}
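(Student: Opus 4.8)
The plan is to prove the three identities by downward induction on the level $l$, propagating the moment expressions from the leaves (level $L-1$) up to the root, exactly as the recursive definition \eq{eq:mi-general} suggests. The two ingredients are the interpretation of the averaging step in \eq{eq:mi-general} as a conditional expectation, and the Dirichlet moment identities linking a child's mixing vector $\pi_{\jb}$ to its parent's $\pi_{\ib}$. For the first, the children $\jb \in c(\ib)$ are conditionally i.i.d.\ given $\pi_{\ib}$, so in the population limit averaging $M_r^{\jb}$ over $c(\ib)$ computes $\Eb[M_r^{\jb} \mid \pi_{\ib}]$. For the second, since a Dirichlet process restricted to the fixed shared atom set behaves as a finite Dirichlet, $\pi_{\jb} \mid \pi_{\ib} \sim \mathrm{Dirichlet}(\gamma_{l+1}\pi_{\ib})$, which yields $\Eb[\pi_{\jb}\mid\pi_{\ib}] = \pi_{\ib}$, the second moment $\Eb[\pi_{\jb}\pi_{\jb}^\top\mid\pi_{\ib}] = \frac{\gamma_{l+1}}{\gamma_{l+1}+1}\pi_{\ib}\pi_{\ib}^\top + \frac{1}{\gamma_{l+1}+1}\mathrm{diag}(\pi_{\ib})$, and the analogous third moment. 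Note that the concentration governing $\pi_{\jb}\mid\pi_{\ib}$ is $\gamma_{l(\jb)} = \gamma_{l+1}$, which is exactly the parameter appearing in the stated recursions.

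For the base case, observe that at a leaf $\pi_{\ib}$ is fixed and the observations are conditionally i.i.d.\ with mean $\Phi\pi_{\ib}$, so $M_r^{\ib}$ factorizes as the $r$-fold tensor power of $S_1^{\ib} = \Phi\pi_{\ib}$. Hence $S_2^{\ib} = M_2^{\ib} - S_1^{\ib}\otimes S_1^{\ib} = 0$ and $S_3^{\ib} = 0$, matching the initialization $C_2^{L-1} = C_4^{L-1} = 1$ and $C_3^{L-1} = C_5^{L-1} = C_6^{L-1} = 0$. The nontrivial diagonal signal therefore appears only as one ascends the tree and integrates out successive Dirichlet layers.

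For the inductive step, I assume the claimed forms of $M_1^{\jb}, M_2^{\jb}, M_3^{\jb}$ at level $l+1$, substitute them into $M_r^{\ib} = \Eb[M_r^{\jb}\mid\pi_{\ib}]$, push the conditional expectation through $T(\cdot,\Phi,\ldots,\Phi)$ by multilinearity, and replace each moment of $\pi_{\jb}$ by the Dirichlet identities above. Order $1$ is immediate. For order $2$, the rank-one piece $C_2^{(l+1)}S_1^{\jb}\otimes S_1^{\jb}$ contributes a new rank-one term and, through the diagonal part of the second Dirichlet moment, a $\mathrm{diag}(\pi_{\ib})$ term; collecting coefficients reproduces $C_2^{(l)} = \frac{\gamma_{l+1}}{\gamma_{l+1}+1}C_2^{(l+1)}$ and $C_3^{(l)} = C_3^{(l+1)} + C_2^{(l)}/\gamma_{l+1}$, and setting $S_2^{\ib} = M_2^{\ib} - C_2^{(l)}S_1^{\ib}\otimes S_1^{\ib}$ isolates the claimed diagonal tensor.

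The order-$3$ computation is where the real work lies, and I expect it to be the main obstacle. Here I would expand $\Eb[(S_1^{\jb})^{\otimes 3}\mid\pi_{\ib}]$ via the third Dirichlet moment, which splits into a $\pi_{\ib}^{\otimes 3}$ part, symmetrized $\mathrm{diag}(\pi_{\ib})\otimes\pi_{\ib}$ cross parts, and a pure triple-diagonal part; simultaneously expand $\Eb[\symm_3[S_2^{\jb}\otimes M_1^{\jb}]\mid\pi_{\ib}]$, whose cross term again invokes the second Dirichlet moment and whose diagonal part feeds the residual. The delicate bookkeeping is to regroup these three contribution types consistently with the $\symm_3$ operator and to renormalize the cross term against $S_2^{\ib}$, which carries the factor $C_3^{(l)}$ rather than the $C_3^{(l+1)}$ inherited from $S_2^{\jb}$; this renormalization is precisely what introduces the ratio $C_3^{(l+1)}/C_3^{(l)}$ into $C_5^{(l)}$. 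Matching the $\pi_{\ib}^{\otimes 3}$ coefficients yields $C_4^{(l)}$ from the Dirichlet leading factor $\gamma_{l+1}^2/[(\gamma_{l+1}+1)(\gamma_{l+1}+2)]$, matching the cross coefficients yields the $C_5^{(l)}$ recursion, and matching the pure-diagonal coefficients yields $C_6^{(l)}$; defining $S_3^{\ib}$ to subtract the rank-one and cross terms then leaves exactly $T(C_6^{(l)}\,\mathrm{diag}(\pi_{\ib}),\Phi,\Phi,\Phi)$, closing the induction.
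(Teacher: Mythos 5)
Your proposal is correct, and it rests on the same ingredients as the paper's derivation: the tower property of conditional expectation down the tree, the reduction of each child's DP (restricted to the shared atoms) to a finite Dirichlet, $\pi_{\jb}\mid\pi_{\ib}\sim\mathrm{Dirichlet}(\gamma_{l+1}\pi_{\ib})$, and entrywise coefficient matching across the three index patterns $i\neq j\neq k$, $i=j\neq k$, $i=j=k$. The organization, however, genuinely differs. The paper's appendix carries out the computation explicitly only for the three-layer case (Lemma \ref{lem:hdp3moments}), working top-down through nested expectations and solving for $C_4,C_5,C_6$ entrywise, and then states the $L$-layer closed-form coefficients (Lemma \ref{lem:hdpgmoments}) without proof; the recursive form that constitutes the present lemma is said to follow ``bottom-up'' but the induction is never performed. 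Your downward induction on the level $l$ supplies exactly that missing argument: the base case is right (at a leaf the words are conditionally i.i.d.\ given $\pi_{\ib}$, so $M_r^{\ib}=(S_1^{\ib})^{\otimes r}$ and the centered tensors vanish, matching $C_2^{L-1}=C_4^{L-1}=1$ and $C_3^{L-1}=C_5^{L-1}=C_6^{L-1}=0$); the order-2 step yields the stated $C_2,C_3$ recursions; and in the order-3 step your observation that re-expressing the cross term against $S_2^{\ib}$ (which carries $C_3^{(l)}$) rather than the inherited $S_2^{\jb}$ (which carries $C_3^{(l+1)}$) is what produces the ratio $C_3^{(l+1)}/C_3^{(l)}$ in $C_5^{(l)}$ is precisely the right bookkeeping --- one can check that two applications of the resulting recursion reproduce the paper's three-layer coefficients exactly. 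What the paper's route buys is concrete closed-form coefficients for the cases it actually uses; what yours buys is a uniform proof of the stated recursion for arbitrary depth $L$ and arbitrary node $\ib$, which is what the lemma actually claims.
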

%

%

\section{Spectral Algorithms for the IBP and the HDP}
\vspace{-2mm}
\label{sec:alg}
Here we introduce a way to estimate moments on the leaf nodes, which 
are used to estimate the diagonalized tensors.  
Next, we provide two simple methods for estimating number of topics, $k.$ 
Finally we review Excess Correlation Analysis (ECA) and several tensor 
decomposition techniques that are used obtained the estimated topic vectors.
\paragraph{Moment estimation} 
For the IBP, we can directly estimate the moments by replacing the theoretical moments  with its emperical version. The interesting part comes in the moment estimation for multi-layer HDP.
A $L$-level HDP could be viewed as a $L$-level tree, where each node
represents a DP.  The estimated moments for the whole model can
be calculated recursively by Equation \eqref{eq:mi-general} and the empirical
$r$-th order moments at the leaf node $\ib$ which are defined as:
\begin{align*}
  \hat{M}_r^{\ib} & := \varphi_r\rbr{\mathbf{x}_{\ib} } \text{ where } 
  \varphi_r(\mathbf{x}_{\ib}) := \frac{(m_\ib-r) !}{m_\ib !}\sum \limits_{j_1,j_2}x_{\ib j_1} \otimes x_{\ib j_2} \cdots  \otimes x_{\ib j_r},
\end{align*}
where $m_\ib$ is the number of words in the observation $x_{\ib}$.
Here $(x_{\ib j_1}, x_{\ib j_2}, \cdots, x_{\ib j_r})$ denote the ordered tuples in
$\mathbf{x_{\ib}}$, with $x_{\ib j}$ encoded as a binary vector, i.e.\ $x_{\ib j}= e_k$
iff the $j$-th data is $k$. The empirical tensors is obtained by plugging in these empirical 
moments to the tensor equations derived in the previous section.
The concentration of measure bounds for these estimated
quantities are given in Section \ref{sec:conc}.

\paragraph{Inferring the number of mixtures} 
We first present the method of inferring the number of
latent features, $K$, which can be viewed as the rank of the covariance
matrix, for models with additive noise. 
An efficient way of avoiding eigen decomposition on a $d \times
d$ matrix is to find a low-rank approximation $R \in \mathbb{R}^{d \times K'}$ 
such that $K < K' \ll d$ and $R$ spans the same space as the covariance
matrix. One efficient way to find such matrix is to set $R$ to be
\begin{equation}
   R = \rbr{M_2 - M_1 \times M_1}\Theta,
\end{equation}
where $\Theta \in \mathbb{R}^{d \times K'}$ is a random matrix with
entries sampled independently from a standard normal. This is
described, e.g. by \cite{HalMarTro09}. Since there is noise in the
data, it is not possible that we get exactly $K$ non-zero eigenvalues
with the remainder being constant at noise floor $\sigma^2$. An
alternative strategy to thresholding by $\sigma^2$ is to determine $K$
by seeking the largest slope on the curve of sorted eigenvalues.

For the HDP, in contrast to the Chinese Restaurant Franchise where the number 
of mixture components, $k,$ is
settled by means of repeated sampling in the sampling-based algorithms, we use an approach that
directly infers $k$ from data itself.
The concatenation of all the first-order moments spans the space of $\Phi$ with high probability. Thus, the number of
linearly independent mixtures $k,$ is close to the rank of the matrix,
$\tilde{M}_1,$ where each column corresponds to the first order 
moments on one of the leaf nodes.
 While direct calculation of the rank of $\tilde{M}_1$
is expensive, one can estimate $k$ by the following procedure:
draw a random matrix $R \in \mathbb{R}^{n_l \times k'}$ for some $k'
\geq k,$ and examine the eigenvalues of $ \tilde{M_1'} =
\rbr{\tilde{M_1}R}^T\rbr{\tilde{M_1}R} \in \mathbb{R}^{k' \times
  k'}$. We estimate the rank of $\tilde{M_1}$ to be the point where the magnitude of eigenvalues decrease abruptly.

\paragraph{Excess Correlation Analysis (ECA)}
We then apply Excess Correlation Analysis (ECA) to infer hidden topics, $\Phi.$
 Dimensionality reduction and whitening is then performed on the diagnoalized tensor 
 at the root node, i.e., $\hat{S}_r^0,$ to make the eigenvectors of it orthogonal and 
 to project to a lower dimensional space. We whiten the observations by multiplying 
 data with a whitening matrix, $W \in \mathrm{R}^{d \times K}$. 
 This is computationally efficient, since we can apply this
directly to $x$, thus yielding third and fourth order tensors $W_3$
and $W_4$ of size $k \times k \times k$ and 
$k \times k \times k \times k$, respectively. 
Moreover, approximately factorizing $S_2$ is a
consequence of the decomposition and random projection techniques
arising from \cite{HalMarTro09}.

To find the singular vectors of $W_3$ and $W_4,$ we use tensor decomposition
techniques to obtain their eigenvectors. From the eigenvectors we
found in the last step, $\Phi$ could be recovered by multiplying a weighted 
inverse matrix, $W^\dag$. The fact that this algorithm
only needs projected tensors makes it very efficient. Streaming
variants of the robust tensor power method are subject of future
research. We introduce 
the tensor decomposition techniques for the need of our algorithms.

\paragraph{Tensor Decomposition} 
With the derived symmetric tensors, we need to separate the hidden
vectors $\Phi,$ the latent distribution $\pi$, and the additive
noise, as appropriate. In a nutshell the approach is as follows: we
first identify the noise floor using the assumption that the number of
nonzero probabilities in $\pi$ is lower than the dimensionality of the
data. Secondly, we use the noise-corrected second order tensor to
whiten the data. This is akin to methods used in ICA
\citep{Cardoso98}. Finally, we perform tensor decomposition on the 
data to obtain \smash{$S_3$ and $S_4$,} or rather, their applications to
data. Note that the eigenvalues in the re-scaled tensors differ slightly
since we use {$S_2^{\dag \frac{1}{2}} x$} directly rather than $x$.

There are several tensor decomposition algorithms that can be applied. 
 \cite{AnaGeHsuKakTel12} showed that robust  tensor power method has 
 nice theoretical convergence property. However, this approaches is slow in 
 practice. An alternative is alternating least square (ALS), which expend the 
 third order tensor into matrix and treat the tensor decomposition as 
  a least square problem. However, ALS is not stable and does not guarantee 
  to converges to the global minima. Recently, \cite{WanTunSmoAna15} proposed 
  a fast tensor power method using count sketch with FFT. The method is 
  shown to be faster than the robust tensor power method by a factor of $10$ to $100.$. 
  In this work, we show how different solvers affect the performance in both time and perplexity. We briefly review these solvers.

\paragraph{Tensor Decomposition 1: Robust Tensor Power Method}
\label{sec:rptm}

Our reasoning follows that of \cite{AnaGeHsuKakTel12}. It is our goal
to obtain an \emph{orthogonal} decomposition of the tensors $S_i$
into an orthogonal matrix $V$ together with a set of corresponding
eigenvalues $\lambda$ such that $S_i = T[\mathrm{diag}(\lambda),
V^\top, \ldots, V^\top]$. This is accomplished by generalizing the
Rayleigh quotient and power iterations
described in \citep[Algorithm 1]{AnaGeHsuKakTel12}:
\begin{align}
  \label{eq:tensorpower}
  \theta \leftarrow {T[S, \one, \theta, \ldots,
    \theta]}
  \text{ and }
  \theta \leftarrow \nbr{\theta}^{-1} \theta.
\end{align}
In a nutshell, we use a suitable number of random initialization $L$,
perform a few iterations ($T$) and then proceed with the most
promising candidate for another $T$ iterations. The rationale for
picking the best among $L$ candidates is that we need a high
probability guarantee that the selected initialization is
non-degenerate. After finding a good candidate and normalizing its
length we deflate (i.e.\ subtract) the term from the tensor $S$.

\paragraph{Tensor Decomposition 2: Alternating Least Square (ALS)}
\label{sec:td2}
Another commonly used method for
solving tensor decomposition is alternating least square method. The
main idea is to concatenate the tensors into a matrix and then
minimize the Frobenius norm of the difference:
\begin{align}
\label{eq:als}
  &\min \nbr{ [S_{3}(W,W,W)]_{(1)} - V \mathrm{diag}({\bf \lambda}) (V\odot V)^T}_F 
  \end{align}
where the definition of the operators used are:
\begin{align}
 &S_{(1)} = \Big[ S[:,:,1] \,\,\, S[:,:,2] \, \cdots \, S[:,:,K]\Big]  \\
 & V \odot V = [v_1 \boxplus v_1 \,\,\,  v_2 \boxplus v_2 \, \cdots \, v_K \boxplus v_K ].
\end{align}
The notation $\odot$ denotes the Khatri-Rao product and $\boxplus$ denotes 
the Kronecker product. Taking the second and third $V$ in the objective function
\eqref{eq:als} as some fixed matrices, we get the closed form solution of the
 optimization problem as:
\begin{align}
  \nonumber
  V \mathrm{diag}({\bf \lambda}) = [S_{3}(W,W,W)]_{(1)}\rbr{V \odot V}\rbr{(V^TV).\wedge2 }^{\dag}
\end{align}
where the notation $.\wedge$ denoting point-wise power. By iteratively updating matrix$V$ until it converges, we solve the optimization problem in \eqref{eq:als}.

\paragraph{Tensor Decomposition 3: Fast Tensor via sketching (FC) }
\label{sectd3}
\cite{WanTunSmoAna15} introduced a tensor CANDECOMP/PARAFAC (CP)
 decomposition algorithm based on tensor sketching. Tensor sketches are constructed 
 by hashing elements into fixed length sketches by their index. With the special property
  of count sketch, power iteration described in Equation \eqref{eq:tensorpower} is
   transformed into convolution operators and can be calculated using FFT and 
   inverse FFT. The method is faster than standard Robust Tensor Power Method by a factor of $10$ to $100$.

\noindent {\bfseries Further Details on the projected tensor power method.}  
Explicitly calculating tensors $M_2,M_3,M_4$ is not practical in high
dimensional data. It may not even be desirable to compute the
projected variants of $M_3$ and $M_4$, that is, $W_3$ and $W_4$ (after
suitable shifts). Instead, we can use kernel tricks to
simplify the tensor power iterations to
\begin{align}
  \nonumber
  W^\top T(M_l, \one, Wu, \ldots,  W u) 
  = \frac{1}{m} \sum_{i=1}^m {W^\top x_i} 
  \inner{ x_i}{W u}^{l-1}
  = \frac{W^\top}{m} \sum_{i=1}^m {x_i} 
  \inner{W^\top x_i}{u}^{l-1}
\end{align}
By using incomplete expansions memory complexity and storage are
reduced to $O(d)$ per term. Moreover, precomputation is $O(d^2)$ and
it can be accomplished in the first pass through the data.
The overall algorithms for the spectral algorithms for linear-Gaussian models with IBP 
prior and the HDP are shown in Algorithm \ref{alg:eca} and 
Algorithm \ref{alg:hdpeca}, respectively. 

\begin{algorithm}[t]
\caption{Excess Correlation Analysis for Linear-Gaussian model with IBP prior
      \label{alg:eca}}
\textbf{Inputs: } the moments $M_1, M_2, M_3, M_4$.
\begin{algorithmic}[1]
   \STATE \textbf{Infer K and $\sigma^2$:}  
   \STATE Optionally find a subspace $R \in \mathbb{R}^{d \times K'}$ with
   $K<K'$ by random projection.
   \[
     \mathsf{Range}\rbr{R} = \mathsf{Range}\rbr{M_2 - M_1 \otimes M_1}
     \text{ and project down to } R
     \]
   \STATE Set $\sigma^2 := \lambda_{\mathrm{min}} \rbr{M_2 - M_1 \otimes M_1}$
   \STATE Set $S_2 = \rbr{M_2 - M_1 \otimes M_1 - \sigma^2 \one}_\epsilon$
   by truncating to eigenvalues larger than $\epsilon$
   \STATE Set $K = \mathop{\mathrm{rank}} S_2$ 
   \STATE Set $W = U\Sigma^{-\frac{1}{2}}$, where $[U, \Sigma] = \mathrm{svd}(S_2)$
   \STATE \textbf{Whitening:} (best carried out by preprocessing $x$)
   \STATE Set $W_3 := T(S_3, W, W, W)$ and $W_4 := T(S_4, W, W,
   W, W)$
   \STATE \textbf{Tensor Decomposition: } 
   \STATE Compute top $K_1$ (eigenvalues, eigenvector) pairs of $W_3$ such that all the eigenvalues has absolute value larger than $1$.
   \STATE Deflate $W_4$ with $(\lambda_i, v_i)$ for all $i \leq K_1$ and obtain top $K - K_1$ (eigenvalue, eigenvector) pairs $(\lambda_i, v_i)$ of deflated $W_4$
   \STATE \textbf{Reconstruction: } With corresponding eigenvalues $\cbr{\lambda_1, \cdots, \lambda_K}$, return the set $A$: 
   \begin{align}
     \Phi= \cbr{\frac{1}{Z_i} \rbr{W^{\dag}}^\top v_i: v_i \in \Lambda} \label{eq:findA}
   \end{align}
     where $Z_i = \sqrt{\pi_i - \pi_i^2}$ with $\pi_i = f^{-1}(\lambda_i)$.  $f(\pi) = \frac{-2\pi+1}{\sqrt{\pi - \pi^2}} $ if $i \in \sbr{K_1}$ and $f(\pi)= \frac{6\pi^2-6\pi+1}{\pi-\pi^2} $ otherwise. (The proof of Equation \eq{eq:findA} is provided in the Appendix.)
\end{algorithmic}
\end{algorithm}

\begin{algorithm}
\caption{Spectral Algorithm for HDP}
\label{alg:hdpeca}
\begin{algorithmic}[1]
    \REQUIRE Observations $\mathbf{x}$
    \STATE \textbf{Inferring mixture number} \\
    Using all leaf nodes $\ib_j$ of the HDP tree compute the rank $k$ of 
    $\tilde{M}_1:=\sbr{ M_1^{\ib_1}, M_1^{\ib_2},\cdots
      M_1^{\ib_{n^{L-2}}}}$.
    \STATE \textbf{Moment estimation} \\
    Compute moment estimates $\hat{M}_r^{0}$ and tensors $\hat{S}_r^{0}$.
   \STATE \textbf{Dimensionality reduction and whitening} \\
   Find $W\in \mathbb{R}^{d\times k}$ such that $W^T\hat{S}_2^0W = I_k$.
   \STATE \textbf{Tensor decomposition} \\
   Obtain eigenvectors $v_i$ and eigenvalues $\lambda_i$ of $\hat{S}_3^0$.
   \STATE \textbf{Reconstruction}
   \begin{align}
     \label{eq:find0}
     \text{Result set }
     \hat{\Phi} = \cbr{ \lambda_i \frac{C_3}{C_6} \rbr{W^{\dag}}^Tv_i},
   \end{align}
   where $C_3$ and $C_6$ are coefficients defined in Lemma
   \ref{lem:hdpgmoments}. 
\end{algorithmic}
\end{algorithm}


\section{Concentration of Measure Bounds}
\label{sec:concentration}
There exist a number of concentration of measure inequalities for
\emph{specific} statistical models using rather specific moments
\citep{AnaFosHsuKakLiu12}. In the following we derive a general tool
for bounding such quantities, both for the case where the statistics
are bounded and for unbounded quantities alike. Our analysis borrows
from \cite{AltSmo06} for the bounded case, and from the average-median
theorem, see e.g. \cite{AloMatSze99}, for dealing with unbounded random variables with bounded higher order moments.
\subsection{Concentration measure of moments}
\subsubsection{Bounded Moments}
We begin with the analysis for bounded moments. Denote by $\phi: \Xcal
\to \Fcal$ a set of statistics on $\Xcal$ and let $\phi_r$ be the
$r$-times tensorial moments obtained from $x$. 
\begin{align}
  \phi_1(x) & := \phi(x);  \; \;\;  \phi_2(x) := \phi(x) \otimes
  \phi(x); \; \;\;  \phi_r(x) := \phi(x) \otimes \ldots \otimes \phi(x)
\end{align}
In this case we can define inner
products via 
\begin{align*}
  k_l(x,x') := \inner{\phi_r(x)}{\phi_r(x')} = 
  T[\phi_r(x), \phi(x'), \ldots, \phi(x')] = 
  \inner{\phi(x)}{\phi(x')}^r = k^r(x,x')
\end{align*}  
as reductions of the statistics of order $l$ for a kernel $k(x,x') :=
\inner{\phi(x)}{\phi(x')}$. Finally, denote by
\begin{align}
  \label{eq:mi-general2}
  M_r := \Eb_{x \sim p(x)}[\phi_r(x)]
  \text{ and }
  \hat{M}_r := \frac{1}{m} \sum_{j=1}^m \phi_r(x_j)
\end{align}
the expectation and empirical averages of $\phi_r$. Note that these
terms are identical to the statistics used in
\cite{GreBorRasSchetal12} whenever a polynomial kernel is used. It is
therefore not surprising that an analogous concentration of measure
inequality to the one proven by \cite{AltSmo06} holds:
\begin{theorem}
\label{th:momentbounds}
Assume that the sufficient statistics are bounded via $\nbr{\phi(x)}
\leq R$ for all $x \in \Xcal$. With probability at most $1 - \delta$
the following guarantee holds: 
  \begin{align}
    \nonumber
    \Pr\cbr{\sup_{u: \nbr{u} \leq 1} \abr{ T(M_r, u,\cdots, u) -T(\hat{M}_r, u, \cdots, u)} > \epsilon_l} \leq \delta
    \text{ where }
    \epsilon_r \leq \frac{\sbr{2 + \sqrt{-2\log\delta}}{R^r}}{\sqrt{m}}.
  \end{align}
\end{theorem}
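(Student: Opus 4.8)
The plan is to recognize that the quantity $\sup_{u:\nbr{u}\leq 1}\abr{T(M_r,u,\dots,u)-T(\hat M_r,u,\dots,u)}$ is controlled by the operator-norm deviation of $\hat M_r$ from $M_r$, and to bound the latter using a McDiarmid-style bounded-differences argument exactly as in \cite{AltSmo06}. First I would observe that $T(M_r,u,\dots,u)=\inner{M_r}{u^{\otimes r}}=\inner{M_r}{\phi_r(u)}$ under the kernel reduction established in the paragraph preceding the statement, so that the supremum over the unit ball equals $\nbr{M_r-\hat M_r}$ in the appropriate (tensor operator, or equivalently RKHS) norm. Thus it suffices to show $\Pr\cbr{\nbr{M_r-\hat M_r}>\epsilon_r}\leq\delta$ with the stated $\epsilon_r$.

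The core of the argument is to define the scalar function $g(x_1,\dots,x_m):=\nbr{M_r-\frac1m\sum_{j=1}^m\phi_r(x_j)}$ and to verify the bounded-differences condition. Since $\nbr{\phi(x)}\leq R$, the tensorial statistic satisfies $\nbr{\phi_r(x)}=\nbr{\phi(x)}^r\leq R^r$; replacing a single $x_j$ changes the empirical average by at most $\frac{2R^r}{m}$ in norm, hence $g$ changes by at most $c_j=\frac{2R^r}{m}$ by the triangle inequality. McDiarmid's inequality then gives a deviation bound of the form $\exp(-2\epsilon'^2/\sum_j c_j^2)=\exp(-m\epsilon'^2/(2R^{2r}))$ for $g$ exceeding $\Eb[g]+\epsilon'$. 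The second ingredient is to bound the expectation $\Eb[g]$; a standard symmetrization plus Jensen argument yields $\Eb\nbr{M_r-\hat M_r}\leq\Eb^{1/2}\nbr{M_r-\hat M_r}^2\leq R^r/\sqrt m$, using that the $\phi_r(x_j)$ are i.i.d.\ with norm at most $R^r$ and that the variance of the empirical mean scales as $1/m$. Combining the $2/\sqrt m$ expectation term with the McDiarmid tail, setting the tail probability equal to $\delta$ and solving $\epsilon'=\sqrt{-2\log\delta}\,R^r/\sqrt m$, produces the claimed $\epsilon_r\leq\frac{(2+\sqrt{-2\log\delta})R^r}{\sqrt m}$.

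The step I expect to be the main obstacle is making the identification between the supremum over $u$ and a clean operator norm fully rigorous for $r\geq 3$: for symmetric tensors the map $u\mapsto T(M_r,u,\dots,u)$ is a degree-$r$ form, and controlling its supremum over the unit ball by the injective (or RKHS) tensor norm requires care, since in general the two differ by dimension-dependent constants. The saving grace here is the kernel reduction $k_l(x,x')=k^r(x,x')$, which embeds the whole problem into a reproducing kernel Hilbert space where $\phi_r(x)$ genuinely has norm $R^r$ and the form becomes a bona fide inner product against $\phi_r(u)$; this is precisely why the statement and the embedding are set up as they are, and it lets the bound avoid any explicit dependence on the ambient dimension. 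A secondary point needing attention is the direction of the probability statement as written (it reads ``with probability at most $1-\delta$''), which I would reconcile with the standard ``with probability at least $1-\delta$'' reading when transcribing the final inequality.
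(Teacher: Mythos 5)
Your proposal is correct and follows essentially the same route as the paper's proof: McDiarmid's bounded-differences inequality with increment $2R^r/m$ (giving the tail $\exp(-m\epsilon^2/(2R^{2r}))$ and hence the $\sqrt{-2\log\delta}\,R^r/\sqrt{m}$ term), combined with an $O(R^r/\sqrt{m})$ bound on the expectation of the deviation, exactly the \cite{AltSmo06} template the paper uses. The only minor differences are cosmetic: you apply McDiarmid to the dominating Hilbert-space norm $\nbr{M_r-\hat{M}_r}$ rather than to the sup-functional itself, which is fine because Cauchy--Schwarz gives $\sup_{\nbr{u}\leq 1}\abr{\inner{M_r-\hat{M}_r}{u^{\otimes r}}}\leq \nbr{M_r-\hat{M}_r}$ (an inequality, not the equality you assert, but it is the direction you need), and you bound the expectation by a direct Jensen/variance argument where the paper uses ghost-sample Rademacher symmetrization; both yield a constant compatible with the stated $\rbr{2+\sqrt{-2\log\delta}}R^r/\sqrt{m}$.
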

\begin{proof}
Denote by $X$ the $m$-sample used in generating $\hat{M}_r$. Moreover,
denote by
\begin{align}
  \label{eq:xi-dev}
  \Xi[X] := \sup_{u: \nbr{u} \leq 1} \abr{T[M_r, u,\cdots, u] -T[\hat{M}_r, u, \cdots, u]}
\end{align}
the largest deviation between empirical and expected moments, when
applied to the test vectors $u$. Bounding this quantity directly is
desirable since it allows us to avoid having to derive
\emph{pointwise} bounds with regard to $M_r$. We prove that 
$\Xi[X]$ is concentrated using the bound of 
\cite{McDiarmid89}. Substituting single observations in $\Xi[X]$ yields
\begin{align}
  \abr{\Xi[X] - \Xi[(X \backslash \cbr{x_j}) \cup \cbr{x'}]} 
  & \leq  \frac{1}{m} \sbr{T\sbr{\phi_r(x_j) - \phi_r(x'), u, \ldots u}} \\ 
  & \leq  \frac{1}{m} \sbr{\nbr{\phi(x_j)}^r + \nbr{\phi(x')}^r} 
  \leq \frac{2}{m} R^r.
\end{align}
Plugging the bound of $2 R^r/m$ into McDiarmid's theorem shows that
the random variable $\Xi[X]$ is concentrated for $\Pr\cbr{\Xi[X] -
  \Eb_{X}[\Xi[X]] > \epsilon} \leq \delta$ with probability $\delta
\leq \exp\rbr{-\frac{m \epsilon^2}{2 R^{2r}}}$. Solving the bound for
$\epsilon$ shows that with probability at least $1-\delta$ we have
that $\epsilon \leq \sqrt{-2 \log \delta/m} R^r$.

The next step is to bound the expectation of $\Xi[X]$. For this we
exploit the ghost sample trick and the convexity of expectations. This
leads to the following:
\begin{align}
  \Eb_{X}\sbr{\Xi[X]} 
  \leq&  \Eb_{X, X'} \sbr{\sup_{u: \nbr{u} \leq 1}
    \abr{T[M_r, u,\cdots, u] -T[\hat{M}_r, u, \cdots, u]}} \nonumber \\
  =&  \Eb_{\sigma} \Eb_{X, X'} \sbr{\sup_{u: \nbr{u} \leq 1}
    \abr{\frac{1}{m} \sum_{j=1}^m \sigma_j \rbr{T[\phi_r(x_j),u,\cdots, u] -
          T[\phi_r(x_j')u,\cdots, u ]}}} \nonumber \\
    \leq & \frac{2}{m} \Eb_{\sigma} \Eb_{X} \sbr{\sup_{u: \nbr{u} \leq 1}
      \abr{\sum_{j=1}^m \sigma_j T[\phi_r(x_j), u,\cdots, u]}} \\
    \leq & \frac{2}{m} \Eb_{\sigma} \Eb_{X} \sbr{\nbr{\sum_{j=1}^m
        \sigma_j \phi_r(x_j)}}
    \leq \frac{2}{m} \Eb_{X} \sbr{\Eb_{\sigma} \sbr{\nbr{\sum_{j=1}^m
        \sigma_j \phi_r(x_j)}^2}}^{\frac{1}{2}} 
  \leq \frac{2 R^r}{\sqrt{m}}
  \end{align}
  Here the first inequality follows from convexity of the
  argument. The subsequent equality is a consequence of the fact that
  $X$ and $X'$ are drawn from the same distribution, hence a swapping
  permutation with the ghost-sample leaves terms unchanged; The
  following inequality is an application of the triangle
  inequality. Next we use the Cauchy-Schwartz inequality, convexity
  and last the fact that $\nbr{\phi(x)} \leq R$. Combining both bounds
  yields $\epsilon \leq \sbr{2 + \sqrt{-2 \log \delta}} R^r/\sqrt{m}$.
\end{proof}

\vspace{-5mm}
Using tensor equations derived in Section \ref{sec:moments}, this means that we have concentration of
measure immediately for the symmetric tensors $S_1, \ldots S_4$. 
 In particular, we need a chaining result that allows us to compute bounds for products of terms
efficiently. 
To prove the guarantees for tensors, we rely on the triangle
inequality on tensorial reductions
\begin{align}
  \sup_u \abr{T(A+B,u) -T(A'+B',u)}
 \leq
  \sup_u \abr{T(A,u) - T(A',u)} + 
  \sup_u \abr{T(B,u) - T(B',u)} \nonumber
\end{align}
and moreover, the fact that for products of bounded random variables
the guarantees are additive, as stated in the lemma below:
\begin{lemma}
  \label{lem:chaining}
  Denote by $f_i$ random variables and by $\hat{f}_i$ their
  estimates. Moreover, assume that each of them is bounded via $|f_i|
  \leq R_i$ and $|\hat{f}_i| \leq R_i$ and 
  \begin{align}
    \label{eq:bound-eff}
    \Pr\cbr{|\Eb[f_i] - \hat{f}_i| > \epsilon_i} \leq \delta_i.
  \end{align}
  In this case the product is bounded via
  \begin{align}
    \label{eq:product}
    \Pr\cbr{\abr{\prod_i \Eb[f_i] - \prod_i \hat{f}_i} > \epsilon} \leq
    \sum_i \delta_i \;\;\;
 \text{ where }
    \epsilon = \sbr{\prod_i R_i} \sbr{\sum_i \frac{\epsilon_i}{R_i}}
  \end{align}
\end{lemma}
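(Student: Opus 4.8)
The plan is to reduce the deviation of the product to a telescoping sum of single-factor deviations and then clean up with a union bound. First I would expand the difference of products using the standard telescoping identity
\begin{align}
  \prod_i \Eb[f_i] - \prod_i \hat{f}_i
  = \sum_k \rbr{\Eb[f_k] - \hat{f}_k}
    \rbr{\prod_{i<k} \Eb[f_i]}
    \rbr{\prod_{i>k} \hat{f}_i},
\end{align}
which rewrites the total deviation as a sum over ``which factor is switched'' from its estimate to its expectation, with the factors before the switch carrying expectations and those after carrying estimates (any fixed convention works, provided it is applied consistently).

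The second step is to control the two partial products by the $R_i$. Since $|f_i| \leq R_i$ forces $\abr{\Eb[f_i]} \leq R_i$ by Jensen's inequality, and $\abr{\hat{f}_i} \leq R_i$ holds by hypothesis, each of the partial products $\prod_{i<k}\Eb[f_i]$ and $\prod_{i>k}\hat{f}_i$ is bounded in absolute value by $\prod_{i \neq k} R_i$. Taking absolute values and applying the triangle inequality to the telescoping sum then yields
\begin{align}
  \abr{\prod_i \Eb[f_i] - \prod_i \hat{f}_i}
  \leq \sum_k \abr{\Eb[f_k] - \hat{f}_k} \prod_{i \neq k} R_i.
\end{align}

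Third, I would condition on the good event $\bigcap_k \cbr{\abr{\Eb[f_k] - \hat{f}_k} \leq \epsilon_k}$. On this event each deviation is replaced by $\epsilon_k$, and since $\prod_{i \neq k} R_i = \rbr{\prod_i R_i}/R_k$, the right-hand side collapses to exactly $\rbr{\prod_i R_i}\sum_k \epsilon_k/R_k = \epsilon$, matching the stated value. By the union bound the complementary (bad) event has probability at most $\sum_k \delta_k$ from assumption \eq{eq:bound-eff}, so with probability at least $1 - \sum_k \delta_k$ the product deviation is at most $\epsilon$, which is precisely \eq{eq:product}.

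The only genuinely delicate point is the bookkeeping in the telescoping decomposition: one must ensure that every partial product appearing in the sum is built only from quantities for which an $R_i$ bound is actually available, so that no uncontrolled factor slips in. Once the ``before/after'' convention is fixed and both partial products are seen to be dominated by $\prod_{i\neq k}R_i$, the remaining arithmetic converting $\sum_k \epsilon_k \prod_{i\neq k} R_i$ into $\sbr{\prod_i R_i}\sbr{\sum_i \epsilon_i/R_i}$ is immediate, and the union bound is routine.
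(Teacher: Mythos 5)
Your proof is correct and follows essentially the same route as the paper: the paper proves the $n=2$ case by splitting off one difference at a time (exactly your telescoping term) and then remarks that the general case follows by a chain of such inequalities, each involving one difference $\Eb[f_k]-\hat f_k$ and $n-1$ bounded factors. Your explicit telescoping identity and union bound simply carry out in full the step the paper declares ``straightforward to prove (and tedious)'' and omits.
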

\begin{proof}
  We prove the claim for two variables, say $f_1$ and $f_2$. We have
  \begin{align*}
    \abr{\Eb[f_1] \Eb[f_2] - \hat{f}_1 \hat{f}_2} &\leq
    \abr{(\Eb[f_1] - \hat{f}_1) \Eb[f_2]} + \abr{\hat{f}_1 (\Eb[f_2] - \hat{f}_2)}
    \leq
    \epsilon_1 R_2 + R_1 \epsilon_2
  \end{align*}
  with probability at least $1 - \delta_1 - \delta_2$, when applying
  the union bound over $\Eb[f_1] - \hat{f}_1$ and $\Eb[f_2] -
  \hat{f}_2$ respectively. Rewriting terms yields the claim for $n =
  2$. To see the claim for $n > 2$ simply use the fact that we can
  decompose the bound into a chain of inequalities involving exactly
  one difference, say $\Eb[f_i] - \hat{f}_i$ and $n-1$ instances of
  $\Eb[f_j]$ or $\hat{f}_j$ respectively. We omit details since they
  are straightforward to prove (and tedious).
\end{proof}
By utilizing an approach similar to
\cite{AnaFosHsuKakLiu12}, overall guarantees for 
reconstruction accuracy can be derived.

\subsubsection{Unbounded Moments}
We are interested in proving concentration of measure for the following four tensors in \eq{eq:gaussian-tensor-2}, \eq{eq:gaussian-tensor-3},
\eq{eq:gaussian-tensor-4} and one scalar in
\eq{eq:gaussian-variance-1}. Whenever the statistics are unbounded,
concentration of moment bounds are less trivial and require the use of
subgaussian and gaussian inequalities \citep{HsuKakZha09}. We derive a
bound for fourth-order subgaussian random variables (previous work
only derived up to third order bounds). Lemma \ref{lem:submoment} and
\ref{lem:moments} has details on how to obtain such guarantees. 

\paragraph{Concentration measure of unbounded moments for the spectral IBP}
 Here we demonstrate and example for linear model with Gaussian noise. The concentration behavior is more complicated than that of the bounded moments in Theorem \ref{th:momentbounds} due to the additive Gaussian noise. Here we restate the model as
  \begin{align}
    x= \Phi z + \epsilon.
  \end{align}
In order to utilize the bounds for Gaussian random vectors, we need to bound the difference between empirical moments and expectations. The bounds for observations generated by different $z$ are examined separately. Let $B = \cbr{x_1, x_2, \cdots, x_n}$ and, for a specific $z_i \in \{0,1\}^K$, write $B_{z_i} := \cbr{x \in B: z = z_i}$ and $\hat{w}_{z_i} = \abr{B_{z_i}}/\abr{B}$ for $i \in \cbr{0,1 \cdots 2^K-1}$ and $z_i = \mathrm{binary} (i)$. Define the conditional moments t and their corresponding empirical moments as
  \begin{align}
    M_{r, z_i} &:= \Eb \sbr{x^{\otimes r} | z = z_i}, \,\,\, \hat{M}_{r, z_i} := \abr{B_{z_i}}^{-1} \sum \limits_{x \in B_{z_i}} x^{\otimes r}.
    \end{align}

\begin{lemma}{(Concentration of conditional empirical moments)}
  \label{lem:submoment}
  Given scalars $r, K, \delta, w, n,$ and $l,$ we define four functions
  \begin{align*}
  b_1(r, K, \delta, w, n, l) &= \sqrt{ \frac{r +2\sqrt{r \ln \rbr{l \cdot 2^K/\delta} }  
  + 2 \ln \rbr {l \cdot 2^K/\delta} }{w \cdot n} },\\
  b_2(r, K, \delta, w, n, l) &=  \sqrt{\frac{128 \rbr{r \ln 9 + \ln \rbr{l \cdot 2^{K+1}/\delta}}}{w \cdot n} }+ 
   \frac{4 \rbr{r \ln9+ \ln \rbr{l \cdot 2^{K+1}/\delta}}}{w \cdot n} ,\\
  b_3(r, K, \delta, w, n, l) &=   \sqrt{ \frac{108e^3 \lceil r \ln13+\ ln \rbr{l \cdot2^K/\delta} \rceil^3}{w \cdot n} } ,\\
  b_4(r, K, \delta, w, n, l) &=      \sqrt{\frac{8192 \rbr{r\ln 17 +\ln \rbr{l \cdot \cdot 2^{K+1} /\delta}}^2 }{(w \cdot n)^2}  +\frac{32\rbr{r\ln 17 +\ln \rbr{l\cdot 2^{K+1}/\delta}}^3 }{(w \cdot n)^3} }  ,
  \end{align*} 
  With probability greater than $1 - \delta$, pick any $\delta \in \rbr{0,1}$ and any random matrix $V \in \mathrm{R}^{d \times r}$ of rank $r$, the following guarantee holds\\
1. For the first-order moments, we have, for $ i \in \cbr{0,1 \cdots 2^K-1},$ 
  \begin{align*}
    \nbr{T \rbr{\hat{M}_{1,z_i} - M_{1,z_i}, V}}_2 \leq \sigma \nbr{V}_2 b_1(r, K, \delta, \hat{w}_{z_i}, n, 1).
  \end{align*}
2. For the second-order moments, we have, for $ i \in \cbr{0,1 \cdots 2^K-1},$
  \begin{align*}
    \nbr{ T\rbr{\hat{M}_{2,z_i} - M_{2,z_i},V,V}}_2  \leq
    & \sigma^2  \nbr{V}_2^2 b_2(r, K, \delta, \hat{w}_{z_i}, n, 2)\\
     &+  2 \sigma \nbr{V}_2 \nbr{V^\top M_{1,z_i}}_2 b_1(r, K, \delta, \hat{w}_{z_i}, n, 2).
  \end{align*}
3. For the third-order moments, we have, for $ i \in \cbr{0,1 \cdots 2^K-1},$
  \begin{align*}
    \nbr{ T\rbr{\hat{M}_{3,z_i} - M_{3,z_i},V,V,V}}_2 \leq &\sigma^3 \nbr{V}_2^3  b_3(r, K, \delta, \hat{w}_{z_i}, n, 3) \\
    &+ 3 \sigma^2  \nbr{V^\top M_{1,z_i}}_2  \nbr{V}_2^2 b_2 (r, K, \delta, \hat{w}_{z_i}, n, 3)\\
& + 3\sigma \nbr{V^\top M_{1,z_i}}_2^2 \nbr{V}_2 b_1 (r, K, \delta, \hat{w}_{z_i}, n, 3).
  \end{align*}
4. For the fourth-order moments, we have, for $ i \in \cbr{0,1 \cdots 2^K-1},$
  \begin{align*}
    \nbr{ T \rbr{\hat{M}_{4,z_i} - M_{4,z_i},V,V,V,V}}_2 \leq& \sigma^4 \nbr{V}_2^4  b_4(r, K, \delta, \hat{w}_{z_i}, n, 4)\\
    &+ 4  \sigma^3 \nbr{V^\top M_{1,z_i}}_2 \nbr{V}_2^3 b_3(r, K, \delta, \hat{w}_{z_i}, n, 3) \nonumber\\
    & + 6\sigma^2 \nbr{V^\top M_{1,z_i}}_2^2  \nbr{V}_2^2 b_2(r, K, \delta, \hat{w}_{z_i}, n, 4)\nonumber\\
    & + 4 \sigma \nbr{V^\top M_{1,z_i}}_2^3  \nbr{V}_2 b_1(r, K, \delta, \hat{w}_{z_i}, n, 4).
  \end{align*} 
\end{lemma}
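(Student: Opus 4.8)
The plan is to condition on the latent code and reduce everything to the concentration of empirical moments of a standard Gaussian vector. Fix a bin index $i$ and condition on $z=z_i$. Under the model $x=\Phi z+\epsilon$ the conditional law of $x$ is Gaussian with mean $\Phi z_i = M_{1,z_i}$ and covariance $\sigma^2\one$, so we may write $x = M_{1,z_i} + \sigma g$ with $g\sim\Ncal(0,\one)$. Applying the reduction $V$ on every mode gives $T(M_{r,z_i},V,\dots,V)=\Eb[(V^\top x)^{\otimes r}\mid z=z_i]$ and, for the empirical version, the corresponding average over $x\in B_{z_i}$. Writing $u:=V^\top M_{1,z_i}$ and $h:=V^\top g$, the quantity to control is the deviation of the empirical mean of $(u+\sigma h)^{\otimes r}$ from its expectation, where the number of samples is $m=\abr{B_{z_i}}=\hat w_{z_i}\,n$.

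First I would expand by the tensor binomial theorem, $(u+\sigma h)^{\otimes r}=\sum_{s=0}^{r}\binom{r}{s}\sigma^s\,\symm\!\big[u^{\otimes(r-s)}\otimes h^{\otimes s}\big]$. Because $u$ is deterministic given $z_i$, the $s=0$ term is non-random and cancels in the difference $\hat M_{r,z_i}-M_{r,z_i}$, while the $s\ge 1$ terms survive. Applying the triangle inequality on tensorial reductions and pulling out the deterministic factor $\nbr{u}_2^{\,r-s}=\nbr{V^\top M_{1,z_i}}_2^{\,r-s}$, the problem reduces, for each $s\in\{1,\dots,r\}$, to bounding the operator norm of the centred empirical $s$-th moment of $h=V^\top g$. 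Since applying $V$ contributes at most a factor $\nbr{V}_2^{s}$, this isolates the centred empirical $s$-th Gaussian moment of $g$ and produces exactly the product $\binom{r}{s}\,\sigma^s\,\nbr{V}_2^{s}\,\nbr{V^\top M_{1,z_i}}_2^{\,r-s}$ multiplying a Gaussian-moment concentration factor; the binomial weights $1$, $(2)$, $(3,3)$, $(4,6,4)$ are precisely the coefficients appearing in the four cases of the statement.

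It then remains to identify each Gaussian-moment factor with the functions $b_1,\dots,b_4$. For $s=1$ the centred empirical mean satisfies $m\nbr{\bar g}_2^2\sim\chi^2_r$, and the Laurent--Massart tail $\Pr[\chi^2_r\ge r+2\sqrt{rt}+2t]\le e^{-t}$ with $t=\ln(l\cdot 2^K/\delta)$ gives exactly $b_1$. For the even orders $s=2,4$ I would bound the operator norm through an $\epsilon$-net over the unit sphere in $\RR^r$ (whose cardinality is $C^{\,r}$, accounting for the $r\ln 9$ and $r\ln 17$ terms) combined with sub-exponential, respectively sub-Weibull, tail bounds for the degree-$s$ Gaussian chaos $\inner{h}{\theta}^s$; this produces the two-term shape of $b_2$ and $b_4$ (a $\sqrt{\cdot}$ Bernstein term plus a higher-power remainder), and the net doubling is what turns $2^K$ into $2^{K+1}$. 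The odd order $s=3$ is handled by the direct subgaussian tensor-moment bound of Lemma~\ref{lem:moments}, giving $b_3$. Finally, a union bound allocating the budget $\delta$ across the $2^K$ possible codes $z_i$ and across the $l$ moment orders accounts for the $\ln(l\cdot 2^K/\delta)$ and $\ln(l\cdot 2^{K+1}/\delta)$ factors inside each $b_s$, and assembling the $s=1,\dots,r$ contributions yields the four stated inequalities.

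The main obstacle is the high-order pieces $b_3$ and $b_4$: unlike the first-order term, the third- and fourth-order empirical tensor moments of a Gaussian are degree-$3$ and degree-$4$ polynomials in $g$, hence only sub-Weibull rather than subgaussian, and their operator norm must be controlled uniformly over the sphere. This is exactly the fourth-order subgaussian concentration that the paper singles out as new; I would isolate it as the separate Lemma~\ref{lem:moments} (a Hanson--Wright / Lata\l{}a-type Gaussian-chaos tail together with the covering argument) and invoke it as a black box here, so that the present proof is purely the binomial bookkeeping and triangle-inequality assembly described above.
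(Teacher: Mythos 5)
Your proposal matches the paper's own proof essentially step for step: condition on $z=z_i$, expand $x^{\otimes r}$ binomially around the conditional mean $M_{1,z_i}$, apply the triangle inequality to pull out the binomial coefficients times $\sigma^s\nbr{V}_2^{s}\nbr{V^\top M_{1,z_i}}_2^{r-s}$, invoke the existing subgaussian bounds of \cite{HsuKak12} for the centred Gaussian moments of orders one through three, prove the new fourth-order bound via a Chernoff-type chaos tail combined with an $\epsilon$-net over the sphere, and finish with a union bound over the $2^K$ codes and moment orders. The only slip is a label: the third- and fourth-order Gaussian concentration you invoke as a black box is not Lemma \ref{lem:moments} (that lemma converts conditional into unconditional moment bounds) but the paper's tail-inequality Lemmas \ref{lem:normal4} and \ref{lem:random4}; this mix-up does not affect the correctness of your argument.
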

The proof is provided in the Appendix. We finish the proof by adding the bounds for every term. By using inequalities for conditional moments, we get the bounds for moments by the following Lemma.

\begin{lemma}{( Lemma 6 in \cite{HsuKak12}; Concentration of empirical moments)}
 \label{lem:moments}
For a fixed matrix $V \in \mathbb{R}^{d \times r}$,
  \begin{align*} 
    &\nbr{T\rbr{\hat{M}_i - M_i,V, \cdots, V}}_2 \\
    &\leq (1+ 2^{K/2} \epsilon_w) \max_{z_j } \nbr{T\rbr{\hat{M}_{i,z_j} - M_{i,z_j},V, \cdots, V}}_2  + 2^{K/2}\max_{z_j} \nbr{T\rbr{M_{i,z_j}, V, \cdots, V}}_2\varepsilon_{w} \\
    &\,\,\, \forall i \in [4],\forall j \in \cbr{0,1 \cdots 2^K-1}
  \end{align*}
where $\varepsilon_{w} = \rbr{ \sum \limits_{z_j} \rbr{ \hat{w}_{z_j}-w_{z_j} }^2}^{\frac{1}{2}} \leq \frac{1+\sqrt{\ln \rbr{1/\delta}}}{\sqrt{n}}$.

\end{lemma}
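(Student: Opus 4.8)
The plan is to exploit the fact that conditioning on the latent binary vector $z$ partitions both the empirical and the population $i$-th moments into mixtures over the $2^K$ possible values of $z$. Writing $w_{z_j} := \Pr\cbr{z = z_j}$, we have the exact decompositions $M_i = \sum_{z_j} w_{z_j} M_{i,z_j}$ and $\hat{M}_i = \sum_{z_j} \hat{w}_{z_j} \hat{M}_{i,z_j}$, where the sums run over all $2^K$ binary vectors (empty cells $B_{z_j}$ contribute $\hat{w}_{z_j}=0$ and drop out). Since the contraction $T(\cdot, V, \ldots, V)$ is linear in its tensor argument, the error $T(\hat{M}_i - M_i, V, \ldots, V)$ inherits this mixture structure, and the whole argument reduces to bounding a sum of $2^K$ vector-valued terms. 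This is exactly where the argument mirrors Lemma 6 of \cite{HsuKak12}.

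First I would insert the telescoping term $\sum_{z_j}\hat{w}_{z_j} T(M_{i,z_j}, V,\ldots,V)$ to split the error into a \emph{conditional-moment} part $\sum_{z_j}\hat{w}_{z_j} T(\hat{M}_{i,z_j} - M_{i,z_j}, V,\ldots,V)$ and a \emph{weight} part $\sum_{z_j}(\hat{w}_{z_j} - w_{z_j}) T(M_{i,z_j}, V,\ldots,V)$, bounding each by the triangle inequality. For the conditional-moment part I would write $\hat{w}_{z_j} = w_{z_j} + (\hat{w}_{z_j} - w_{z_j})$: the $w_{z_j}$ piece is a convex combination and is therefore at most $\max_{z_j}\nbr{T(\hat{M}_{i,z_j} - M_{i,z_j}, V, \ldots, V)}_2$, while the deviation piece is handled by Cauchy--Schwarz over the $2^K$ coordinates, using $\rbr{\sum_{z_j}(\hat{w}_{z_j}-w_{z_j})^2}^{1/2} = \varepsilon_w$ together with $\rbr{\sum_{z_j}a_{z_j}^2}^{1/2} \leq 2^{K/2}\max_{z_j} a_{z_j}$. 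This produces the prefactor $(1 + 2^{K/2}\varepsilon_w)$. The weight part is bounded by the same Cauchy--Schwarz step applied to $\nbr{T(M_{i,z_j}, V,\ldots,V)}_2$, yielding $2^{K/2}\varepsilon_w \max_{z_j}\nbr{T(M_{i,z_j}, V,\ldots,V)}_2$. Summing the two bounds gives the displayed inequality, uniformly in $i \in [4]$ since nothing in the argument depends on the tensor order beyond linearity of $T$.

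It remains to control $\varepsilon_w = \nbr{\hat{w} - w}_2$, the $\ell_2$ deviation of the empirical category frequencies over $2^K$ cells. I would treat $\varepsilon_w$ as a function of the $n$ independent draws and apply a bounded-differences (McDiarmid) argument: replacing a single observation moves at most two coordinates of $\hat{w}$ by $1/n$, so the value of $\varepsilon_w$ changes by at most $\sqrt{2}/n$, giving $\Pr\cbr{\varepsilon_w - \Eb\sbr{\varepsilon_w} > t} \leq \exp(-n t^2)$. The mean is controlled by Jensen and the variance identity $\Eb\sbr{\varepsilon_w^2} = \sum_{z_j}\mathrm{Var}(\hat{w}_{z_j}) = \tfrac{1}{n}\sum_{z_j} w_{z_j}(1-w_{z_j}) \leq \tfrac{1}{n}$, so $\Eb\sbr{\varepsilon_w}\leq 1/\sqrt{n}$; choosing $t = \sqrt{\ln(1/\delta)/n}$ then yields $\varepsilon_w \leq (1 + \sqrt{\ln(1/\delta)})/\sqrt{n}$ with probability at least $1-\delta$, as claimed.

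The decomposition and triangle/Cauchy--Schwarz steps are essentially bookkeeping. The main thing to get right is the $2^{K/2}$ accounting: one must verify that Cauchy--Schwarz over $2^K$ cells is precisely what converts the $\ell_2$ weight-deviation $\varepsilon_w$ into a bound on the $\ell_1$-type sum of per-cell errors, and that the same $2^{K/2}$ factor appears consistently in both the conditional-moment and the weight parts. The concentration of $\varepsilon_w$ is the only genuinely probabilistic ingredient, and there the only subtlety is pinning down the bounded-difference constant as $\sqrt{2}/n$ (not $1/n$), so that the McDiarmid exponent comes out as $-n t^2$ and the final rate matches the stated $(1+\sqrt{\ln(1/\delta)})/\sqrt{n}$.
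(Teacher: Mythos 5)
Your proposal is correct: the mixture decompositions $M_i=\sum_{z_j}w_{z_j}M_{i,z_j}$ and $\hat{M}_i=\sum_{z_j}\hat{w}_{z_j}\hat{M}_{i,z_j}$, the telescoping split into a conditional-moment part and a weight part, the Cauchy--Schwarz step over the $2^K$ cells producing the $2^{K/2}$ factors, and the McDiarmid bound (with bounded-difference constant $\sqrt{2}/n$ and $\Eb[\varepsilon_w]\leq 1/\sqrt{n}$) all check out and reproduce the stated inequality exactly. The paper itself offers no proof of this lemma --- it simply imports it as Lemma 6 of \cite{HsuKak12} --- and your argument is precisely the standard proof underlying that cited result, so there is nothing to reconcile.
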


\subsection{Concentration of Measure of the IBP}
Using the results of unbounded moments, we further get the bounds for the tensors based on the concentration of
moment in Lemma \ref{lem:sigbounds} and
\ref{lem:spectralbounds}. Bounds for reconstruction accuracy of our
algorithm are provided. The full proof is given in the Appendix.
\begin{theorem}{(Reconstruction Accuracy)}
  \label{th:reconstruction} Let $\varsigma_k \sbr{S_2}$ be the $k-th$ largest singular value of $S_2$. Define  $\pi_{min} = \argmax_{i \in [K]} \abr{\pi_i - 0.5}$, $\pi_{max} = \argmax_{i \in [K]}  \pi_i$ and $\tilde{\pi} = \prod_{\{i:\pi_i \leq 0.5\}} \pi_i \prod_{\{i:\pi_i > 0.5\}} (1-\pi_i)$.  Pick any $\delta, \epsilon \in (0,1)$. There exists a polynomial $\mathrm{poly}(\cdot)$ such that if sample size $m$ statisfies 
\begin{align*}
  m \geq \mathrm{poly}\Biggl({d, K, \frac{1}{\epsilon}, \log(1/\delta), \frac{1}{\tilde{\pi}},\frac{ \varsigma_1\sbr{S_2}}{ \varsigma_K\sbr{S_2}}, \frac{\sum \limits_{i=1}^K \nbr{\Phi_i}_2^2}{  \varsigma_K\sbr{S_2} }, \frac{\sigma^2}{  \varsigma_K\sbr{S_2}},\frac{1}{\sqrt{\pi_{min}-\pi_{min^2}}}, \frac{\pi_{max}}{\sqrt{\pi_{max}-\pi_{max}^2 }}}\Biggr)
\end{align*}
with probability greater than $1-\delta$, there is a permutation $\tau$ on $[K]$ such that the $\hat{A}$ returns by Algorithm \ref{alg:eca} satifies
$\displaystyle
    \nbr{\hat{\Phi}_{\tau(i)} - \Phi_i} \leq \rbr{\nbr{\Phi_i}_2 + \sqrt{ \varsigma_1 \sbr{S_2}}}\epsilon
    \text{ for all } i \in [K]$.
\end{theorem}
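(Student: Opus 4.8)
The plan is to follow the perturbation-analytic framework of \cite{AnaGeHsuKakTel12}, propagating error from the empirical moments through the whitening and tensor-power steps of Algorithm \ref{alg:eca} down to the final reconstruction. The argument splits into four stages: (i) moment concentration, (ii) tensor concentration, (iii) whitening and whitened-tensor perturbation, and (iv) the power-method guarantee together with the reconstruction map. The global strategy is to show each intermediate error term is $O(\epsilon)$ times its natural scale, and then read off the sample size $m$ that makes this happen.

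First I would invoke the concentration results already in hand. Lemmas \ref{lem:submoment} and \ref{lem:moments} bound $\nbr{T(\hat{M}_r - M_r, V, \ldots, V)}_2$ for $r \in [4]$; the dependence on $1/\tilde{\pi}$ enters precisely here, since the conditional bounds require each configuration bucket $B_{z_i}$ to be adequately populated, and $\tilde{\pi} = \prod_{\{i:\pi_i \le 0.5\}}\pi_i \prod_{\{i:\pi_i>0.5\}}(1-\pi_i)$ is exactly the probability of the least-likely configuration, hence a lower bound on the smallest weight $w_{z_i}$. Combining these deviations with the triangle inequality on tensorial reductions and the chaining Lemma \ref{lem:chaining} — legitimate because each $S_r$ from \eq{eq:gaussian-tensor-2}--\eq{eq:gaussian-tensor-4} is a fixed polynomial in $M_1,\ldots,M_r$ (and the auxiliary scalars $m_1,m_4$) — yields perturbation bounds $\nbr{\hat{S}_r - S_r} \le \eta_r(m,\delta)$ that decay at rate $m^{-1/2}$, in the same operator-norm sense as Theorem \ref{th:momentbounds}.

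The whitening stage is where I expect the main obstacle. The empirical whitening matrix $\hat{W} = \hat{U}\hat{\Sigma}^{-1/2}$ is a nonlinear (inverse-square-root) function of $\hat{S}_2$, so $\nbr{\hat{W} - W}$ is controlled by the spectral gap of $S_2$ and is amplified by its condition number $\varsigma_1[S_2]/\varsigma_K[S_2]$; this is exactly why that ratio, together with $\sum_i \nbr{\Phi_i}_2^2/\varsigma_K[S_2]$ and $\sigma^2/\varsigma_K[S_2]$, appears in the sample-complexity polynomial. I would use Weyl and Davis--Kahan perturbation bounds to control $\nbr{\hat{W}-W}$, then chain the three (resp.\ four) factors of $\hat{W}$ against $\hat{S}_3$ (resp.\ $\hat{S}_4$) via Lemma \ref{lem:chaining} to bound the deviation of the whitened tensor $\hat{W}_3 = T(\hat{S}_3, \hat{W}, \hat{W}, \hat{W})$ from the true orthogonally decomposable tensor. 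The delicate point is that these two sources of error — tensor perturbation and whitening perturbation — couple multiplicatively, so the estimate $\eta_2$ must be kept well below $\varsigma_K[S_2]$ before the whitening bound is meaningful.

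Finally, with $\hat{W}_3$ established as a small perturbation of an orthogonally decomposable tensor, I would apply the robust tensor power method guarantee \cite[Theorem 5.1]{AnaGeHsuKakTel12} to bound the $\ell_2$ error in each recovered eigenpair $(\lambda_i, v_i)$ up to a permutation $\tau$ of $[K]$, treating the degeneracy at $\pi_i = \tfrac{1}{2}$ by recovering the affected factors from $\hat{W}_4$ instead (the eigenvalue threshold in line 10 of Algorithm \ref{alg:eca} selects between the two). The eigenvectors are then mapped back by $\hat{\Phi}_i = Z_i^{-1}(\hat{W}^{\dagger})^\top v_i$; tracking error through this inverse map introduces the factor $1/\sqrt{\pi_{\min}-\pi_{\min}^2}$ from the normalizer $Z_i = \sqrt{\pi_i - \pi_i^2}$ evaluated at the most extreme probability, and $\pi_{\max}/\sqrt{\pi_{\max}-\pi_{\max}^2}$ from the sensitivity of the inversion $\pi = f^{-1}(\lambda)$, while $\nbr{(\hat{W}^{\dagger})^\top} \asymp \sqrt{\varsigma_1[S_2]}$ reintroduces $\sqrt{\varsigma_1[S_2]}$. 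Collecting the direction error (scaled by $\nbr{\Phi_i}_2$) and the magnitude error (scaled by $\sqrt{\varsigma_1[S_2]}$) gives the stated bound $\nbr{\hat{\Phi}_{\tau(i)} - \Phi_i} \le \rbr{\nbr{\Phi_i}_2 + \sqrt{\varsigma_1[S_2]}}\epsilon$, and forcing every accumulated term to be $O(\epsilon)$ yields the polynomial lower bound on $m$.
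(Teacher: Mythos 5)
Your proposal follows essentially the same route as the paper's own proof: concentration of the (conditional) moments via Lemmas \ref{lem:submoment} and \ref{lem:moments}, propagation to the tensors $\hat{S}_r$ by triangle-inequality/chaining arguments (the paper's Lemmas \ref{lem:sigbounds} and \ref{lem:spectralbounds}), perturbation of the whitening matrix and whitened tensors (the paper imports Lemma 10 of \cite{HsuKak12} rather than re-deriving Davis--Kahan, its Lemmas \ref{lem:whiten} and \ref{lem:Wbounds}), and finally the eigendecomposition/power-method guarantee with the $W_3$-versus-$W_4$ split at $\pi_i \approx \tfrac{1}{2}$ and the $Z_i^{-1}(\hat{W}^{\dagger})^\top v_i$ reconstruction map (the paper's Lemma \ref{lem:overall}, following Appendix C.7 of \cite{HsuKak12}). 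Your identification of where $\tilde{\pi}$, the condition number $\varsigma_1[S_2]/\varsigma_K[S_2]$, and the $\pi_{min}$, $\pi_{max}$ factors enter the sample-complexity polynomial matches the paper's bookkeeping.
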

\subsection{Concentration of measure of the HDP}
\label{sec:conc}

We derive theoretical guarantees for the spectral inference algorithms
in an HDP. Specifically we provide guarantees for moments $M_r^{\ib}$,
tensors $S_r^{\ib}$, and latent factors $\Phi$. The technical
challenge relative to conventional models is that the data are not
drawn iid. Instead, they are drawn from a predefined hierarchy and
they are only exchangeable within the hierarchy. We address this by
introducing a more refined notion of effective sample size which
borrows from its counterpart in particle filtering \citep{DouFreGor01}.
We define $n_\ib$ to be the effective sample
size, obtained by hierarchical averaging over the HDP tree. This yields
\begin{align}
  \label{eq:nibr}
  n_\ib := \begin{cases}
    1 & \text{ for leaf nodes}
    \\
    |c(\ib)|^2 \sbr{\sum_{\jb \in c(\ib)} \frac{1}{n_\ib}}^{-1}
    & \text{ otherwise}
  \end{cases}
\end{align}
One may check that in the case where all leaves have an equal number
of samples and where each vertex in the tree has an equal number of
children, $n_{\ib}$ is the overall sample size. The intuition is that, for a balanced tree, every leaf nodes should contribute equally to the overall moments, which can be viewed as a two layer model with all the leaf nodes connected directly to the root node. Using similar approach for obtaining concentration measure for bounded moments, we extend the results that apply to moments for different hierarchical structure as in Theorem \ref{theorem:momentbounds}. 

\begin{theorem}
\label{theorem:momentbounds}
For any node $\ib$ in a $L$-layer HDP with $r$-th order moment
$M_r^{\ib}$ and for any $\delta \in (0,1)$ the following bound holds
for the tensorial reductions $M_r(u) :=T(M_r^{\ib}, u, \cdots, u)$ and
its empirical estimate $\hat{M}_r := T(\hat{M}_r^{\ib}, u, \cdots, u)$.
\begin{align}
  \nonumber
  \Pr\cbr{\sup_{u: \nbr{u} \leq 1} \abr{ M_r(u)-\hat{M}_r(u)} \leq
    \frac{2 + \sqrt{-\ln \delta}}{\sqrt{n_\ib}}} \geq 1 - \delta 
\end{align}
\end{theorem}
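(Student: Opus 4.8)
The plan is to follow the template of Theorem~\ref{th:momentbounds}: bound the tensorial deviation $\Xi_\ib := \sup_{\nbr{u}\le 1}\abr{M_r(u) - \hat M_r(u)}$ by combining (i) a McDiarmid-type concentration of $\Xi_\ib$ around its mean with (ii) a symmetrization bound on $\Eb[\Xi_\ib]$. The key simplification relative to the general bounded case is that every observation $x_{\ib j}$ is an indicator vector $e_k$, so $\nbr{\phi(x_{\ib j})}=1$ and we may take $R=1$; the two contributions then reassemble into the additive $2$ and the $\sqrt{-\ln\delta}$ in the numerator, with the flat sample size $m$ replaced throughout by the effective sample size $n_\ib$.

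First I would unroll the recursive definition \eqref{eq:mi-general} so that both $M_r^\ib$ and $\hat M_r^\ib$ are written as weighted combinations over the leaves descending from $\ib$, each leaf $\jb$ carrying the path weight $w_\jb = \prod \abr{c(\cdot)}^{-1}$ accumulated along the path from $\ib$ down to $\jb$. Treating the leaf observations as the perturbed coordinates, changing a single word in leaf $\jb$ moves $\hat M_r(u)$ by at most $w_\jb$ times the per-observation sensitivity of the leaf U-statistic $\varphi_r$, which is $O(1/m_\jb)$ since $\nbr{\phi}\le 1$. The crucial arithmetic step is to verify that the sum of squared bounded-difference coefficients telescopes, through the recursion $1/n_\ib = \abr{c(\ib)}^{-2}\sum_{\jb\in c(\ib)} 1/n_\jb$, to exactly $1/n_\ib$: this recursion is precisely the variance-of-an-average identity with base case $n_\ib=1$ at the leaves. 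Feeding this variance proxy into McDiarmid's inequality yields $\Pr\{\Xi_\ib - \Eb[\Xi_\ib] > \epsilon\} \le \exp(-c\,\epsilon^2 n_\ib)$ and hence a deviation of order $\sqrt{-\ln\delta}/\sqrt{n_\ib}$, matching the stated constant.

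For the expectation term I would reproduce the ghost-sample symmetrization of Theorem~\ref{th:momentbounds}, but carried out over the exchangeable children at each node rather than over a flat iid sample. Because sibling subtrees are identically distributed given their common parent measure, swapping a subtree with an independent copy leaves the law of $\Xi_\ib$ unchanged, so the Rademacher symmetrization goes through level by level; combined with Cauchy--Schwarz and $R=1$ this should give $\Eb[\Xi_\ib]\le 2/\sqrt{n_\ib}$, supplying the leading $2$. Adding the two pieces and solving for $\epsilon$ completes the argument.

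The main obstacle is the non-iid structure flagged in the text: the leaf words are only conditionally independent given the latent measures $G_\ib$, so a naive McDiarmid over words is not immediately licensed and the symmetrization must exploit exchangeability within the hierarchy rather than global independence. The delicate point will be to confirm that $n_\ib$ is genuinely the correct variance proxy at every internal node --- i.e.\ that the dependence introduced by the shared $G_\ib$ either cancels under the centering by $M_r^\ib$ or is absorbed into the same $1/n_\ib$ bookkeeping --- rather than contributing an extra, uncontrolled bias term. I would therefore expect most of the real work to lie in justifying the bounded-difference/symmetrization steps under exchangeability and in the telescoping verification, with the final assembly being routine.
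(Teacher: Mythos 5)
Your proposal is correct and follows essentially the same route as the paper's proof: the paper likewise unrolls $M_r^{\ib}$ and $\hat{M}_r^{\ib}$ into sums over leaves weighted by a recursively constructed vector $\eta^{(\ib,r)}$, applies McDiarmid's inequality with bounded differences proportional to the leaf weights (taken at the level of whole leaf observations, which matches your base case $n_{\ib}=1$), bounds the expectation by $2\nbr{\eta^{(\ib,r)}}_2$ via the same ghost-sample/Rademacher symmetrization with Cauchy--Schwarz, and identifies $\nbr{\eta^{(\ib,r)}}_2^{-2} = n_{\ib}$ through exactly the variance-of-an-average recursion you describe. The non-iid concern you flag receives no additional treatment in the paper---McDiarmid and symmetrization are applied to the leaf observations directly---so your outline is faithful to, and on that point more candid than, the published argument.
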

As indicated, $n_\ib$ plays the role of an effective sample
size. Note that an unbalanced tree has a smaller effective sample size
compared to a balanced one with same number of leaves.  


\begin{theorem}
\label{theorem:tensorbounds}
Given a $L$-layer HDP with symmetric tensor $S_r^{\ib}$. Assume that
$\delta \in (0, 1)$ and denote the tensorial reductions as before
$S_r(u) := T(S_r^{\ib} u, \cdots, u)$ and $\hat{S}_r(u) :=
T(\hat{S}_r^{\ib}, u, \cdots, u)$. Then we have for $r \in \cbr{2, 3}$ and any node $\ib$ in the $L$-layer HDP, 
\begin{align}
  \label{eq:bounddeviation2}
  \Pr\cbr{\sup_{u: \nbr{u} \leq 1} \abr{ S_r -\hat{S_r}} \leq c_r n_{\ib}^{-\frac{1}{2}} \sbr{2+\sqrt{\ln
      (3/\delta)}}} \geq 1-\delta
\end{align}
where $c_2 = 3$ and $c_3$ is some constant.

\end{theorem}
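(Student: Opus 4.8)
The plan is to reduce everything to the already-established moment concentration in Theorem~\ref{theorem:momentbounds} together with the product rule of Lemma~\ref{lem:chaining}. The key observation is that each symmetric tensor $S_r^{\ib}$ from Lemma~\ref{lem:hdp3moments_general}, once contracted against a single test vector $u$ with $\nbr{u}\le 1$, becomes a \emph{polynomial} in the scalar contractions $M_1(u):=T(M_1^{\ib},u)$, $M_2(u):=T(M_2^{\ib},u,u)$ and $M_3(u):=T(M_3^{\ib},u,u,u)$. Concretely, writing $C_\cdot$ for the coefficients of Lemma~\ref{lem:hdp3moments_general}, using $S_1^{\ib}=M_1^{\ib}$, and the fact that contracting $\symm_3\sbr{S_2^{\ib}\otimes M_1^{\ib}}$ by $(u,u,u)$ yields $3\,S_2(u)M_1(u)$, I would first record
\begin{align*}
S_2(u) &= M_2(u) - C_2\, M_1(u)^2, \\
S_3(u) &= M_3(u) - C_4\, M_1(u)^3 - 3C_5\, S_2(u)M_1(u),
\end{align*}
and the analogous identities for $\hat S_r(u)$.

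Next I would bound $\sup_u\abr{S_r(u)-\hat S_r(u)}$ term by term using the triangle inequality for tensorial reductions quoted just before Lemma~\ref{lem:chaining}. Each bare moment term $\abr{M_r(u)-\hat M_r(u)}$ is controlled uniformly in $u$ by Theorem~\ref{theorem:momentbounds}, giving $n_{\ib}^{-1/2}\rbr{2+\sqrt{-\ln\delta}}$. For the nonlinear terms $M_1(u)^2$, $M_1(u)^3$ and the product $S_2(u)M_1(u)$ I would invoke Lemma~\ref{lem:chaining}: since the data are one-hot categorical vectors, every contraction obeys $\abr{M_r(u)},\abr{\hat M_r(u)}\le 1$ for $\nbr{u}\le1$, and $\abr{S_2(u)}\le 1+C_2\le 2$, so all factors are bounded by explicit constants $R_i$. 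Lemma~\ref{lem:chaining} then converts a first-moment deviation $\epsilon_1$ into $2R_1\epsilon_1$ for the square and $3R_1^2\epsilon_1$ for the cube, and into an additive $R_{S_2}\epsilon_1+R_1\epsilon_{S_2}$ for the product, with the failure probabilities simply summing.

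To assemble the final statement I would run the moment bounds at level $\delta/3$ (the relevant moments being $M_1,M_2$ for $S_2$ and $M_1,M_2,M_3$ for $S_3$), so that $\sqrt{-\ln(\delta/3)}=\sqrt{\ln(3/\delta)}$ appears, and collect the bounded coefficients into $c_r$. For $r=2$ one obtains $\abr{S_2-\hat S_2}\le(1+2C_2)\,n_{\ib}^{-1/2}\sbr{2+\sqrt{\ln(3/\delta)}}$, and since the recursion $C_2^{(l)}=\frac{\gamma_{l+1}}{\gamma_{l+1}+1}C_2^{(l+1)}$ with $C_2^{(L-1)}=1$ forces $C_2\le 1$, this is at most $3\,n_{\ib}^{-1/2}\sbr{2+\sqrt{\ln(3/\delta)}}$, i.e.\ $c_2=3$. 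For $r=3$ the constant $c_3$ absorbs $C_4$, $C_5$, the symmetrization factor $3$, and the $R_i$, all finite, so $c_3$ is merely ``some constant'' as stated.

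The main obstacle I anticipate is handling the uniformity over $u$ correctly: Lemma~\ref{lem:chaining} is stated for fixed random variables, whereas I need the product bounds to hold simultaneously for all $u$. The clean way around this is to condition on the intersection of the finitely many $\sup_u$ events supplied by Theorem~\ref{theorem:momentbounds}; on that event the per-factor deviation bounds hold for \emph{every} $u$ at once, so the deterministic inequality underlying Lemma~\ref{lem:chaining} may be applied pointwise in $u$ and only then supremized, without any additional covering argument over $u$. The remaining work, namely verifying the constants $R_i$ and checking that each coefficient recursion in Lemma~\ref{lem:hdp3moments_general} stays bounded along the tree, is routine.
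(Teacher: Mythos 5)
Your proposal is correct and follows essentially the same route as the paper's own proof: expand $S_2^{\ib}$ and $S_3^{\ib}$ via Lemma \ref{lem:hdp3moments_general}, apply the triangle inequality, control each moment deviation uniformly in $u$ by Theorem \ref{theorem:momentbounds} run at level $\delta/3$ (whence the $\sqrt{\ln(3/\delta)}$), and collect the bounded coefficients $C_2, C_4, C_5$ into $c_r$. The only cosmetic difference is that the paper inlines the product-difference expansion (picking up harmless higher-order deviation terms like $\|\hat S_1-S_1\|^2$) rather than citing Lemma \ref{lem:chaining}; your use of the chaining lemma with $R_i=1$ keeps the bound linear in the moment deviations and, if anything, delivers the constant $c_2=3$ slightly more cleanly.
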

This shows that not only the moments but also the symmetric tensors
directly related to the statistics $\Phi$ are directly available. 
The following theorem guarantees the accurate reconstruction of the
latent feature factors in $\Phi$. Again, a detailed proof is relegated
to Appendix \ref{proof:reconstruction}.

\begin{theorem}
\label{theorem:reconstruction}
Given a $L$-layer HDP with hyperparamter $\gamma_{\ib}$ at node
$\ib$. Let $\sigma_k \rbr{\Phi}$ denote the smallest non-zero singular
value of $\Phi$, and $\phi_i$ denote the $i$-th column of $\Phi$. 
For sufficiently large sample size, and for suitably chosen $\delta
\in (0,1)$, i.e.
$$3 n_{\ib}^{-\frac{1}{2}}
\sbr{2+\sqrt{\ln (3/\delta)}}\leq\frac{ C_3  \gamma_0 \min_{j}
  \pi_{0j} \sigma_k \rbr{\Phi}^2}{6}$$ 
we have
$\displaystyle \Pr\cbr{\nbr{\Phi_i -\hat{\Phi}_{\sigma\left(i\right)}}
  \leq   \epsilon} \geq 1-\delta$ where
\begin{align*}
\epsilon :=\frac{ck^3  \rbr{\min_{\ib} \gamma_{\ib}+2}^2}{ \delta
  \min_j \pi_{0j} \sigma_k \rbr{\Phi}^3 } n_\ib^{-\frac{1}{2}} \sbr{2+\sqrt{\ln(3/\delta)}}
\end{align*}
Here $\cbr{ \hat{\Phi}_1, \hat{\Phi}_2, \cdots, \hat{\Phi}_k}$ is the
set that Algorithm \ref{alg:hdpeca} returns, for some permutation $\sigma$
of $\cbr{1,2,\cdots, k}$, $i \in {1,2, \cdots, k}$, and some constant
$c$.
\end{theorem}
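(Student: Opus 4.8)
The plan is to follow the Excess Correlation Analysis perturbation framework of \citep{AnaGeHsuKakTel12,AnaFosHsuKakLiu12}, propagating the tensor concentration bounds of Theorem \ref{theorem:tensorbounds} through the three stages of Algorithm \ref{alg:hdpeca}: whitening, orthogonal tensor decomposition, and reconstruction. First I would record the exact population structure at the root. By Lemma \ref{lem:hdp3moments_general}, $S_2^0 = \Phi\,(C_3\,\mathrm{diag}(\pi_0))\,\Phi^\top$ and $S_3^0 = \sum_i C_6\,\pi_{0i}\,\phi_i^{\otimes 3}$, so $\varsigma_k(S_2^0) \geq C_3 \min_j \pi_{0j}\,\sigma_k(\Phi)^2$. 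Letting $W$ be the exact whitening matrix with $W^\top S_2^0 W = I_k$, the vectors $v_i := \sqrt{C_3\pi_{0i}}\,W^\top\phi_i$ are orthonormal and $W_3 := T(S_3^0,W,W,W) = \sum_i \bar\lambda_i\, v_i^{\otimes 3}$ with $\bar\lambda_i = C_6\,C_3^{-3/2}\pi_{0i}^{-1/2}$. A direct substitution then verifies that the reconstruction map of \eqref{eq:find0} is exact in the population, i.e.\ $\bar\lambda_i \tfrac{C_3}{C_6}(W^\dagger)^\top v_i = \phi_i$; this pins down the quantity whose perturbation we must control.

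Next I would bound the whitening perturbation. From Theorem \ref{theorem:tensorbounds}, $\|S_2^0-\hat S_2^0\|_2 \leq E_2 := 3\,n_{\ib}^{-1/2}[2+\sqrt{\ln(3/\delta)}]$, and the side hypothesis forces $E_2$ to be a fixed fraction of $\varsigma_k(S_2^0)$. I would then apply a standard matrix-square-root perturbation argument (Weyl's inequality together with a $\sin\Theta$ bound) to produce an orthogonal alignment $O$ with $\|\hat W - WO\|_2 = O(E_2\,\varsigma_k(S_2^0)^{-3/2})$, a matching bound on $\|\hat W^\dagger - (WO)^\dagger\|_2$, and $\|W^\dagger\|_2 \leq \sqrt{\varsigma_1(S_2^0)}$; the theorem's side condition is precisely what keeps these bounds nondegenerate. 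Writing $\hat W_3 := T(\hat S_3^0,\hat W,\hat W,\hat W)$, I would split $\hat W_3 - W_3$ into a tensor-error piece $T(\hat S_3^0-S_3^0,\cdot)$, controlled by $\|S_3^0-\hat S_3^0\|_2 \leq E_3 := c_3\,n_{\ib}^{-1/2}[2+\sqrt{\ln(3/\delta)}]$, and whitening-error pieces $T(S_3^0,\cdot)$, controlled by the previous step; the multilinear triangle inequality for tensorial reductions stated just before Lemma \ref{lem:chaining} collects these into an operator-norm perturbation $\varepsilon_T$ of the orthogonally decomposable tensor $W_3$.

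Feeding $\varepsilon_T$ into the robust tensor power method guarantee \citep[Theorem 5.1]{AnaGeHsuKakTel12} yields a permutation $\sigma$ with $\|\hat v_{\sigma(i)} - v_i\|_2 = O(\varepsilon_T)$ and $|\hat\lambda_{\sigma(i)} - \bar\lambda_i| = O(\varepsilon_T)$; the $1/\delta$ factor in the stated $\epsilon$ enters here, through a Markov-type bound on the random-restart and projection step needed to guarantee a good initialization into every eigenvector's basin. Finally I would push these errors through $\hat\Phi_{\sigma(i)} = \hat\lambda_{\sigma(i)}\tfrac{C_3}{C_6}(\hat W^\dagger)^\top\hat v_{\sigma(i)}$, invoking the product/chaining estimate of Lemma \ref{lem:chaining} to combine $\|\hat W^\dagger-(WO)^\dagger\|$, $\|\hat v_{\sigma(i)}-v_i\|$ and $|\hat\lambda_{\sigma(i)}-\bar\lambda_i|$ multiplicatively, and reading off the dependence of $\bar\lambda_i$, $\|W^\dagger\|$ and the coefficients $C_3,C_6$ on $\min_j\pi_{0j}$, $\sigma_k(\Phi)$ and $\min_{\ib}\gamma_{\ib}$ to obtain the claimed $\epsilon$.

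The main obstacle is the whitening analysis combined with the final bookkeeping: because the whitening map involves $S_2^{-1/2}$, an additive perturbation of $S_2^0$ is amplified by inverse powers of its smallest singular value, and one must commit to a single orthogonal frame $O$ so that the errors in $\hat W$, $\hat W^\dagger$ and the recovered $\hat v_i$ are all expressed consistently before they are multiplied. Pinning down the precise polynomial — in particular the exponents on $k$, on $\sigma_k(\Phi)$, and the factor $(\min_{\ib}\gamma_{\ib}+2)^2$ — requires tracking how $C_3^{(l)}$ and $C_6^{(l)}$ scale with the concentration parameters through the recursion of Lemma \ref{lem:hdp3moments_general}, which is the most delicate and error-prone part of the argument.
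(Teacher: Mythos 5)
Your proposal is correct and follows essentially the same route as the paper's proof: both rescale to $\tilde{\Phi}=T(\sqrt{C_3^{0}}\cdot\sqrt{\pi_0},\Phi)$ (your orthonormal $v_i=\sqrt{C_3\pi_{0i}}\,W^\top\phi_i$ and eigenvalues $C_6C_3^{-3/2}\pi_{0i}^{-1/2}$ are exactly the paper's), verify population exactness of \eqref{eq:find0}, propagate the bounds of Theorem \ref{theorem:tensorbounds} through whitening, decomposition, and reconstruction, and pick up the $1/\delta$ factor at the decomposition step. The only cosmetic difference is that the paper black-boxes the whitening and eigenvector-accuracy perturbation bounds as Lemmas C.1, C.3 and C.6 of \cite{AnaFosHsuKakLiu12}, whereas you sketch re-deriving the whitening bound via a $\sin\Theta$ argument and invoke the robust tensor power method guarantee of \cite{AnaGeHsuKakTel12} instead of the ECA-style SVD-accuracy lemma.
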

The theorem gives the guarantees on $l_2$ norm accuracy for the
reconstruction of latent factors. Note that all the bounds above are
functions of the effective sample sizes $n_\ib$. The latter are a
function of both the number of data and the structure of the
tree.

\section{Experiments}
\label{sec:experiment}
\subsection{IBP}

We evaluate the algorithm on a number of problems suitable for the two
models of \eq{eq:lingalaf} and \eq{eq:linzy}. The problems are largely
identical to those put forward in \cite{GriGha11} in order to keep our
results comparable with a more traditional inference approach. We demonstrate 
that our algorithm is faster, simpler, and achieves comparable or superior accuracy.
\paragraph{Synthetic data}

Our goal is to demonstrate the ability to recover latent structure of
generated data. Following \cite{GriGha11} we generate images via
linear noisy combinations of $6 \times 6$ templates.  That is, we use
the binary additive model of \eq{eq:lingalaf}.  The goal is to recover
both the above images and to assess their respective presence in
observed data. Using an additive noise variance of $\sigma^2 = 0.5$ we
are able to recover the original signal quite accurately (from left to
right: true signal, signal inferred from 100 samples, signal inferred
from 500 samples). Furthermore, as the second row indicates, our
algorithm also correctly infers the attributes present in the images.

\noindent
\includegraphics[width=0.29\columnwidth]{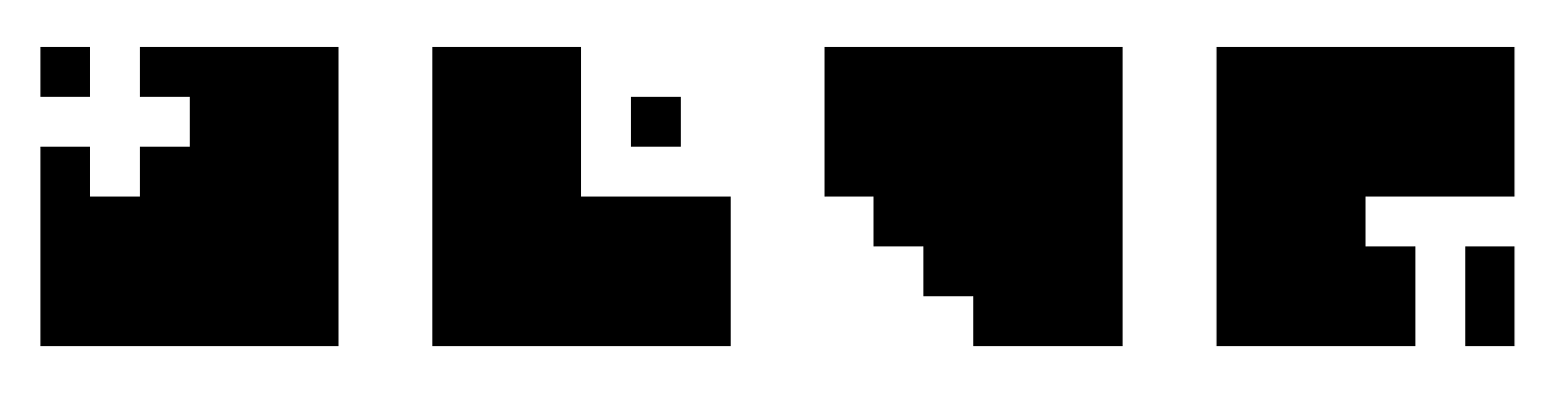}
\hspace{0.05\columnwidth}
\includegraphics[width=0.29\columnwidth]{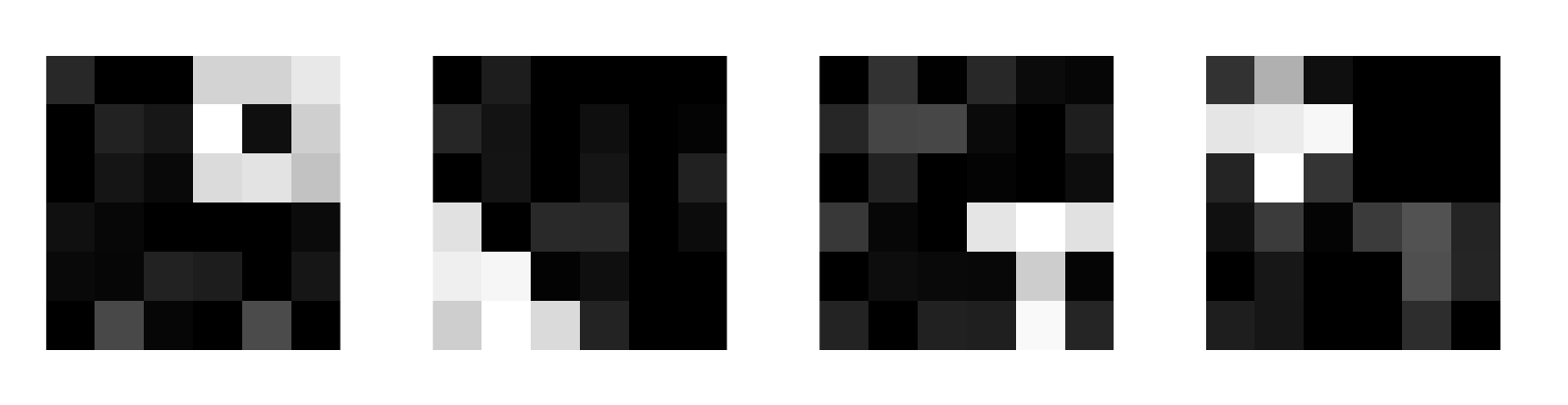} 
\hspace{0.05\columnwidth}
\includegraphics[width=0.29\columnwidth]{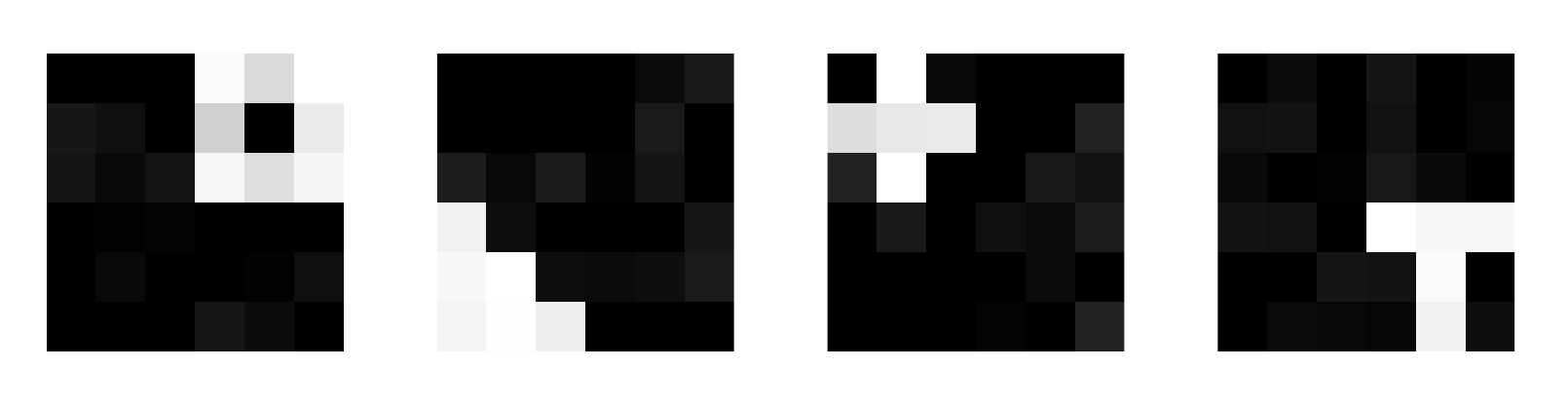} \\[-5mm]

\noindent
\hspace{0.34\columnwidth}
\includegraphics[width=0.29\columnwidth]{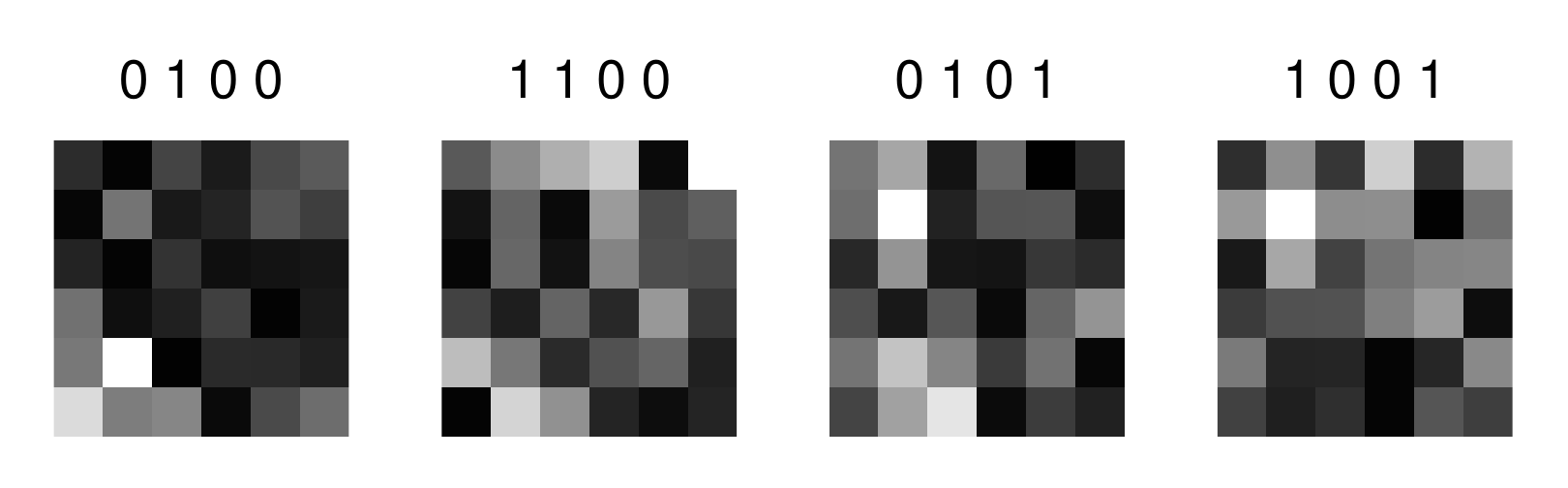}
\hspace{0.05\columnwidth}
\includegraphics[width=0.30\columnwidth]{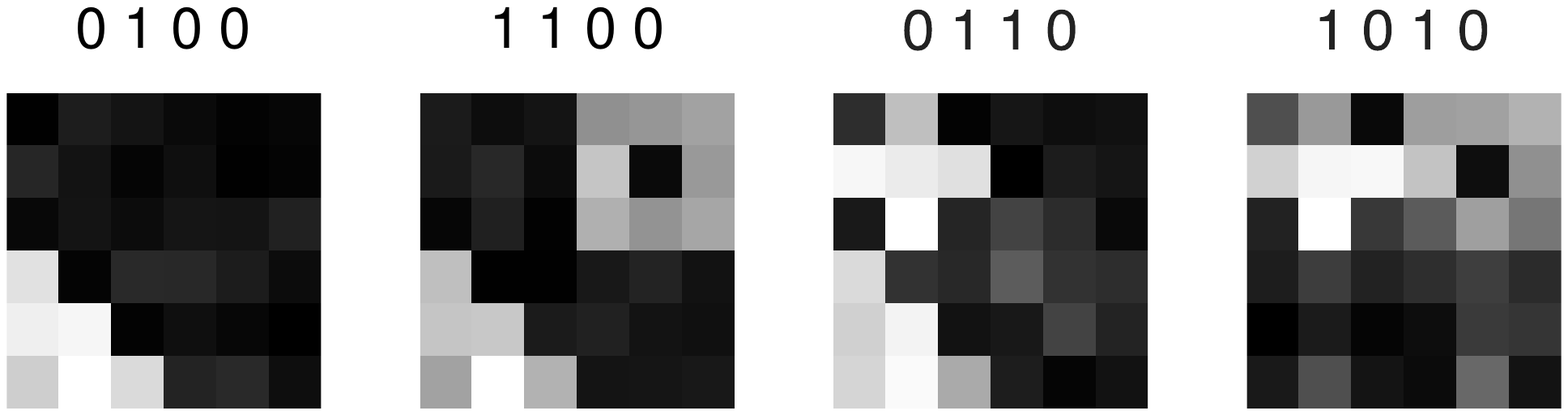}

For a more quantitative evaluation we compared our results to the
infinite variational algorithm of \cite{DosMilGaeTeh09}. The data is
generated using $\sigma \in \cbr{0.1, 0.2, 0.3, 0.4, 0.5}$ and with
sample size $n \in \cbr{100, 200, 300, 400, 500}$. Figure
\ref{fig:compare} shows that our algorithm is faster and comparatively accurate.

\begin{figure}[ht]
\begin{center}
\centerline{\includegraphics[width=1.2\columnwidth]{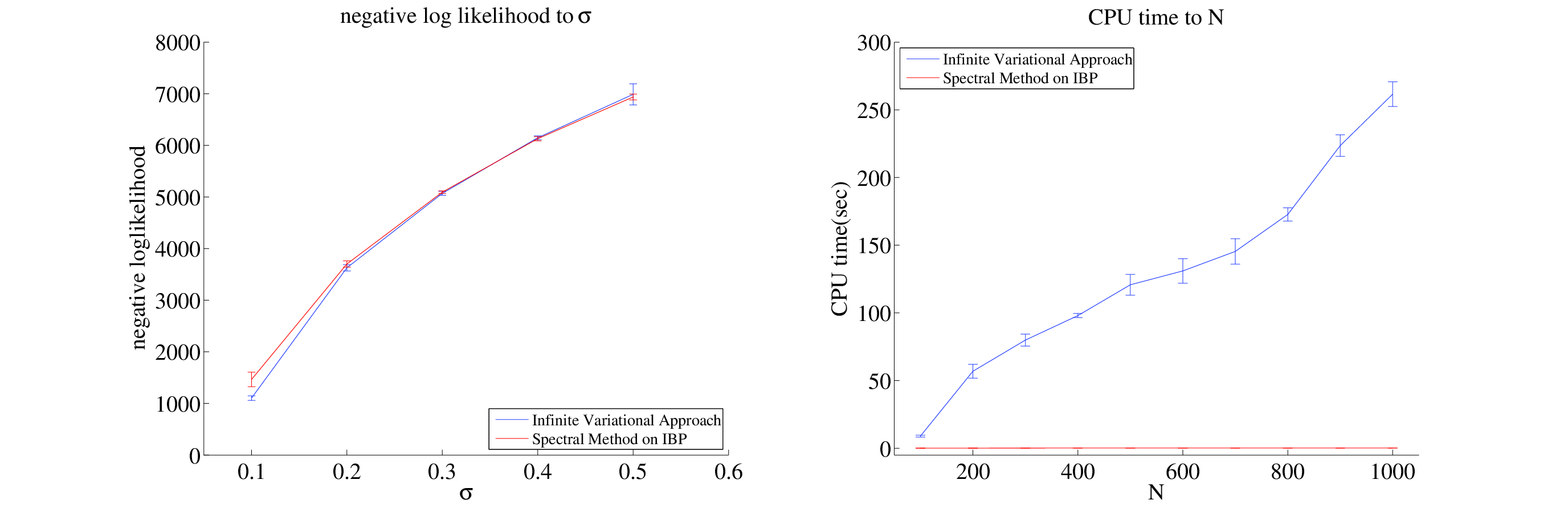}}
\caption{ Comparison to infinite variational approach. The first plot compares the test negative log likelihood training on $N=500$ samples with different $\sigma$. The second plot shows the CPU time to data size, $N$, between the two methods.
 }
\label{fig:compare}
\end{center}
\vskip -0.2in
\end{figure} 

\paragraph{Image Source Recovery}

We repeated the same test using $100$ photos from 
\cite{GriGha11}. We first reduce dimensionality on the data set by
representing the images with 100 principal components and apply our
algorithm on the 100-dimensional dataset (see Algorithm~\ref{alg:eca}
for details).  Figure~\ref{fig:griff} shows the result. We used $10$ initial
iterations $50$ random seeds and $30$ final
iterations $50$ in the Robust Power Tensor Method. The total runtime was $0.3$s on an intel Core i7 processor (3.2GHz).

\begin{figure}[tb]
\vspace{-5mm}
\begin{center}
    \includegraphics[width=0.6\columnwidth]{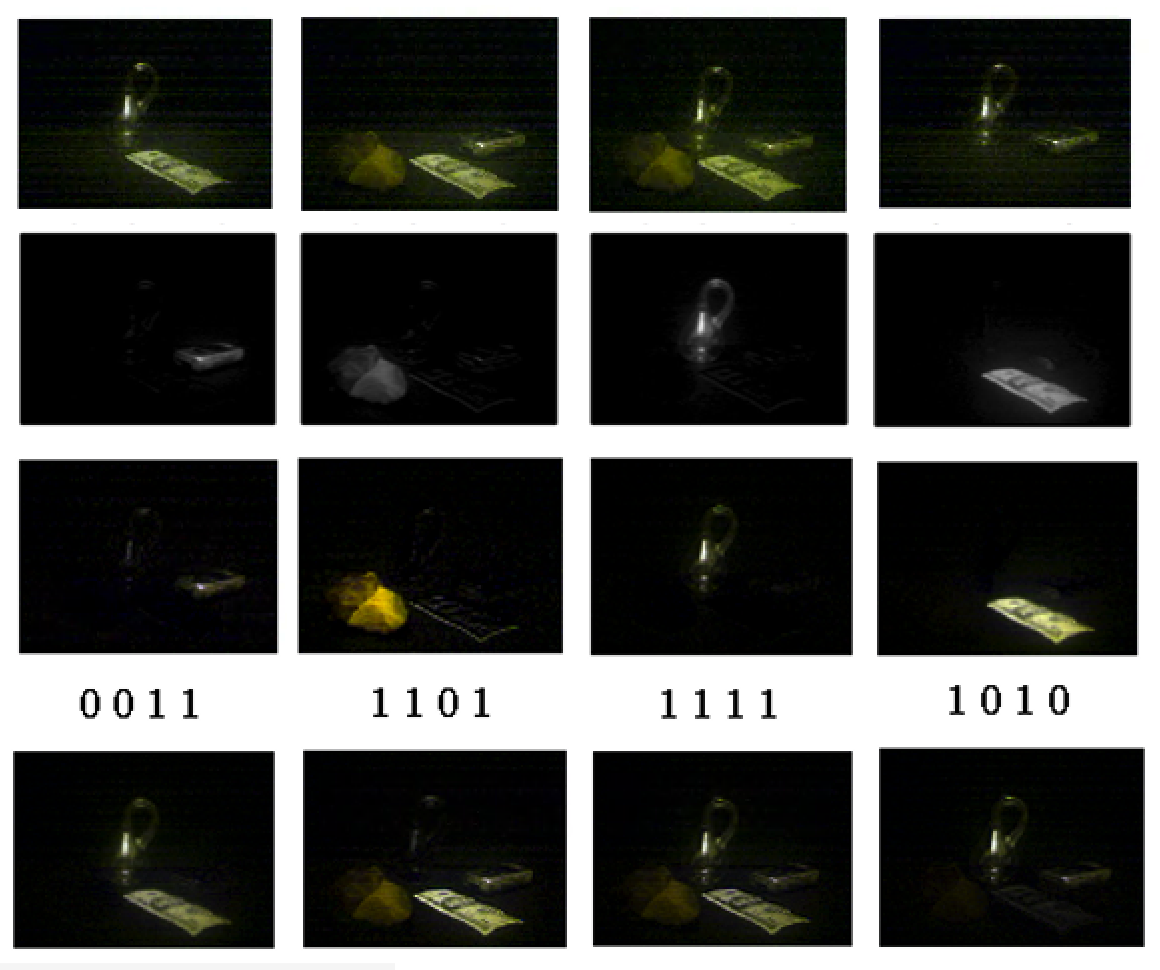}
    \end{center}
  \hfill
    \vspace{-5mm}
    \caption{Results of modeling 100 images from \cite{GriGha11} of size $240 \times 320$
      by model \eq{eq:lingalaf}. Row 1: four sample
      images containing up to four objects (\$20 bill, Klein bottle,
      prehistoric handaxe, cellular phone). An object basically
      appears in the same location, but some small variation noise is
      generated because the items are put into scene by hand; 
      Row 2: Independent attributes, as determined by infinite
      variational inference of \cite{DosMilGaeTeh09} (note, the
      results in \cite{GriGha11} are black and white only);
      Row 3: Independent attributes, as determined by spectral IBP;
      Row 4: Reconstruction of the images via spectral IBP. The binary
      superscripts indicate the items identified in the image.
      \label{fig:griff}}
  \vspace{-3mm}
\end{figure} 

\paragraph{Gene Expression Data}

As a first sanity check of the feasibility of our model for
\eq{eq:linzy}, we generated synthetic data using $x \in \RR^7$ with
$k=4$ sources and $n=500$ samples, as shown in
Figure~\ref{fig:results_syn}.

\begin{figure}[tbh]
  \vspace{-1mm}
 \begin{center}
    \includegraphics[width=0.4\columnwidth]{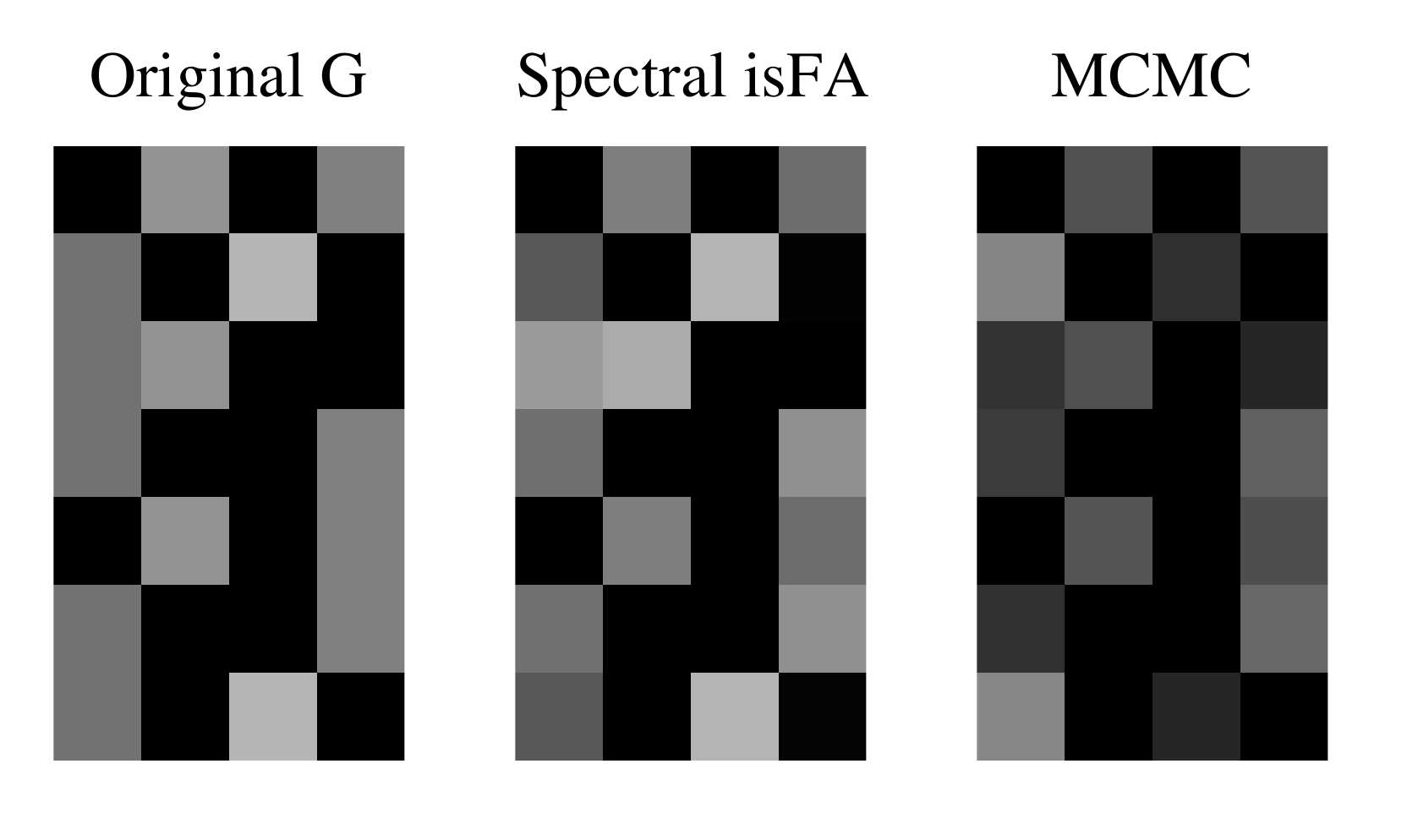}    
    \end{center}
  \hfill
    \vspace{-10mm}
    \caption{Recovery of the source matrix $A$ in model \eq{eq:linzy}
      when comparing MCMC sampling and spectral methods. MCMC sampling
      required $1.72$ seconds and yielded a Frobenius distance $\nbr{A
        - A_{\mathrm{MCM}}}_F = 0.77$. Our spectral
      algorithm required $0.77$ seconds to achieve a distance $\nbr{A
        - A_{\mathrm{Spectral}}}_F = 0.31$.
      \label{fig:results_syn}
    }

  \vspace{-2mm}
 \begin{center}
    \includegraphics[width=0.7\columnwidth]{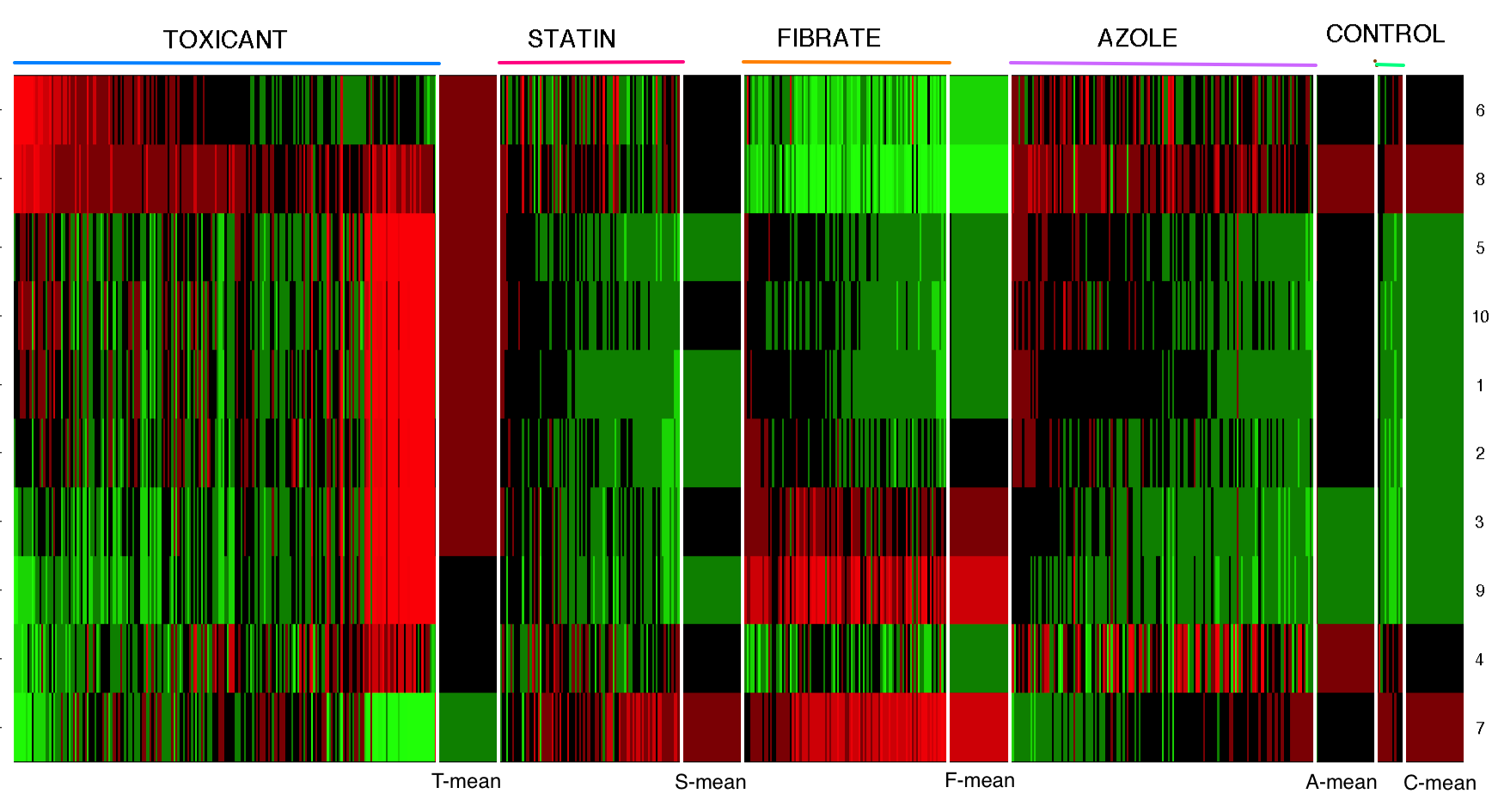}
    \end{center}
    \vspace{-10mm}
  \hfill
    \caption{Gene signatures derived by the spectral IBP. They show that
  there are common hidden causes in the observed expression levels,
  thus offering a considerably simplified representation.
  \label{fig:ibp500}}
  \vspace{-5mm}
\end{figure} 

For a more realistic analysis we used a microarray dataset. The data
consisted of 587 mouse liver samples detecting 8565 gene probes,
available as dataset GSE2187 as part of NCBI's Gene Expression Omnibus
\url{www.ncbi.nlm.nih.gov/geo}. There are four main types of
treatments, including Toxicant, Statin, Fibrate and Azole. Figure
\ref{fig:ibp500} shows the inferred latent factors arising from
expression levels of samples on 10 derived gene signatures. According
to the result, the group of fibrate-induced samples and a small group
of toxicant-induced samples can be classified accurately by the
special patterns. Azole-induced samples have strong positive signals
on gene signatures 4 and 8, while statin-induced samples
have strong positive signals only on the 9 gene signatures.


\subsection{HDP}
\label{sec:exp}

An attractive application of HDP is topic modelling in a corpus
where in documents are grouped naturally. We use the Enron email corpus \citep{KliYan04} and the Multi-Domain Sentiment Dataset \citep{BliDrePer07} to validate our algorithm. After the usual cleaning steps (stop word removal, numbers, infrequent
words), our training dataset for Enron consisted of $167,851$ emails sent with $10,000$ vocabulary size and average $91$ words in each email. Among these, $126,697$ emails are sent internally within Enron 
and $41154$ are from external sources. In order to show that the topics are able to cover topics from external and internal sources and are not biased toward the larger group, we have $537$ internal emails and $4,63$ external email in our test data. To evaluate the computational efficiency of the spectral algorithms
using fast count sketch tensor decomposition(FC) \citep{WanTunSmoAna15}, robust tensor method (RB) and alternating least square
(ALS), we compare the CPU time and per-word likelihood among these approaches.


\vspace{-3mm}

\begin{table}[tbh]\centering
\footnotesize
  \caption{Results on Enron
    dataset with different tree structures and different solvers: spectral method for the HDP using fast count sketch method (sketch length is set to $10$), alternating least square (ALS) and robust tensor power method (RB). 
    \label{tab:runtime} }
\begin{tabular}{@{}lccccc@{}}
Tree &K & &sHDP (FC) &sHDP (ALS) & sHDP (RB) \\ 
\hline
Enron 2-layer  &50		& like./time		&8.09/{\bf 67} 	&7.86/119&   7.86/2641  \\         
					&100		&like./time 		&8.16/{\bf 104} 	& 7.82/668&	7.82/5841 \\
Enron 3-layer &50		& like./time		&7.93/{\bf 68}		&  7.78/121&  {\bf 7.77}/2710 \\ 
		 		    &100		&like./time		&8.18/{\bf 101}	&7.69/852	&{\bf  7.68}/5782  \\
\end{tabular} 
\end{table}    
We further compare spectral method under balanced/unbalanced tree structure of data on Multi-Domain Sentiment Dataset. The dataset contains reviews from Amazon reviews that fall into four categories: books, DVD, electronics and kitchen. We generate $2$ training datasets where one has balanced number of reviews under each categories (1900 reviews for each category) and the other has highly unbalanced number of examples at the leaf node (1900/1500/700/300 reviews for the four categories), while the test dataset consisted of 100 reviews for each categories. The result in Table  \ref{tab:senti} show that spectral algorithm with multi-layers structure will perform even better than with flat model when the tree structure is unbalanced. 

\begin{table}[tbh]\centering
\footnotesize
  \caption{Results on Multi-Domain Sentimental dataset. Train data 1 is selected so that the there are balanced numbers of reviews under each category. Train data 2 is selected to have highly unbalanced child number at the leaf nodes.
    \label{tab:senti} }
\begin{tabular}{@{}lcccccc@{}}
Tree 						& train data 1	& K=50 	 	& K=100 	&train data 2 &K=50 	& k= 100 \\ 
\hline	
Sentiment 2-layer  	&  like./time		&{\bf 7.9}/38		& 7.99/151		&	like./time		&8.23/36	&  8.14/147\\         
Sentiment  3-layer  &like./time		&7.92/37		& {\bf 7.96}/150		&  like./time		&{\bf 8.17}/38&	{\bf 8.07}.148\\ 
\end{tabular} 
\end{table}



The results of the experiments throw light on two key points. First, leveraging the
information in the form of hierarchical structure of documents, instead of
blinding grouping the documents into a single-layer model like LDA,
will result in better performance (i.e. higher log-likelihood) under
different settings. The tree structure is able to eliminate the
pernicious effects caused by unbalanced data. For example, a 2-layer model like LDA
considers every email to be equally important, and so for a topic
related to external events it will perform worse, as most of the
emails are exchanged within the company and they are unlikely to
possess topics related to the external emails. Second, although spectral method cannot obtain a solution that has higher performance in perplexity, it can be used as a tool for picking up a nice initial point.

\section{Conclusion}
The IBP and the HDP mixture models are useful and most popular nonparametric 
Bayesian tool. Unfortunately the computational complexity of the inference
algorithms is high. Thus we propose a spectral algorithm to alleviate
the pain. We first derived the low-order moments for both mixture
model, and then described the algorithm to recover the latent factors
of interest. Concentration of measure for this method is also
provided. We demonstrate the advantages of utilizing structure information.
High performance numerical linear algebra and more advanced
optimization algorithms will improve matters further.

\acks{We would like to acknowledge support for this project
from Oracle and Microsoft. }



\vskip 0.2in
\bibliography{main}

\newpage
\appendix
\label{app:theorem}
\section{Proof of Symmetric Tensors}

{\bf Symmetric Tensors for the HDP}\\
We begin our analysis by deriving the moments for a three layer
HDP. This allows us to provide detail without being hampered by
cumbersome notation. After that, we analyze the general expansion. 

\subsection{Three Layers}
The three layer HDP is structurally similar to LDA. Its tensors 
are derived in \cite{AnaFosHsuKakLiu12}. We begin by considering 
a three model to gain intuition of how to obtain the
general format of the tensors. The goal is to reconstruct the latent
factors in $\Phi$. In the case of topic modeling, the $j$-th column
denotes the word distribution of the $j$-th topic.  

\begin{lemma}[Symmetric tensors of $3$-layer HDP]
 \label{lem:hdp3moments}
 Given a $3$-layer HDP with hyperparameters $\gamma_1$ and $\gamma_2$
 at layers $1$ and $2$ respectively, the symmetric tensors are given by
 \begin{align}
   S_1 &:= M_1 = T(\pi_0, \Phi), \nonumber\\
	S_2  &:= M_2 - C_2 S_1 S_1^T = M_2 - \frac{\gamma_2 \gamma_1}{\rbr{\gamma_2+1}\rbr{\gamma_1+1}} S_1 S_1^T =  T\rbr{  C_3 \cdot \mathrm{diag}\rbr{ \pi_0} , \Phi, \Phi},\nonumber \\
	S_3  &:=  M_3 - C_4  \cdot S_1 \otimes S_1 \otimes S_1 -C_5  \cdot \symm_3\sbr{ S_2 \otimes M_1}  = T\rbr{    C_6 \cdot \mathrm{diag}\rbr{\pi_0} ,\Phi,\Phi,\Phi} \nonumber,\label{eq:S}
\end{align}
where 
\begin{align*}
C_3&=\frac{\gamma_2 + \gamma_1 + 1}{ \rbr{\gamma_1+1} \rbr{\gamma_2+1}},\\
C_4 &= \frac{\gamma_1^2\gamma_2^2}{\rbr{\gamma_1+1}\rbr{\gamma_1+2}\rbr{\gamma_2+1}\rbr{\gamma_2+2}},\\
C_5 &=  \frac{\gamma_2\gamma_1\rbr{\gamma_1+\gamma_2+2}}{\rbr{\gamma_1+2}\rbr{\gamma_2+2}\rbr{\gamma_1+\gamma_2+1}},\\
C_6 &= \frac{ 6\gamma_1 +6 \gamma_2 +2\gamma_1^2+ 2\gamma_2^2 +3 \gamma_2 \gamma_1 + 4 }{ \rbr{\gamma_1+1}\rbr{\gamma_1+2}\rbr{\gamma_2+1}\rbr{\gamma_2+2}}.
\end{align*}
\end{lemma}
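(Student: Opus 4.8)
The plan is to reduce the statement to the moments of the per-node weight vector $\pi_\ib$ and then to evaluate those by applying a single-layer Dirichlet identity recursively up the two stochastic layers of the tree, treating the root weights $\pi_0$ as fixed. First I would observe that, conditioned on $G_\ib$, the tokens are i.i.d.\ draws whose mean is the topic mixture: since $\theta_{\ib j}\sim G_\ib=\sum_v \pi_{\ib v}\delta_{\phi_v}$ and $x_{\ib j}\sim\mathrm{Categorical}(\theta_{\ib j})$ in one-hot encoding, we get $\Eb[x_{\ib j}\mid G_\ib]=\sum_v \pi_{\ib v}\phi_v=\Phi\pi_\ib$. Distinct tokens use independent $\theta$'s, so for an ordered tuple of distinct indices $\Eb[x_{\ib j_1}\otimes\cdots\otimes x_{\ib j_r}\mid G_\ib]=(\Phi\pi_\ib)^{\otimes r}=T(\pi_\ib^{\otimes r},\Phi,\dots,\Phi)$. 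Taking the outer expectation and pulling $\Phi$ out of $T(\cdot)$ yields $M_r=T(\Eb[\pi_\ib^{\otimes r}],\Phi,\dots,\Phi)$, so the task collapses to computing $\Eb[\pi_\ib^{\otimes r}]$ for $r\le 3$ at a leaf $\ib$ and expressing it through $\pi_0$.

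Next I would record the single-layer identity. By the finite-partition marginal of $\mathrm{DP}(\gamma,G_{p(\ib)})$, the restriction of $\pi_\ib$ to the shared atoms is $\mathrm{Dirichlet}$ with base $p:=\pi_{p(\ib)}$ and concentration $\gamma$, and the raw Dirichlet moments give
\begin{align*}
 \Eb[\pi_\ib\mid p]&=p,\qquad
 \Eb[\pi_\ib^{\otimes2}\mid p]=\tfrac{\gamma}{\gamma+1}\,p\otimes p+\tfrac{1}{\gamma+1}\,\mathrm{diag}(p),\\
 \Eb[\pi_\ib^{\otimes3}\mid p]&=\tfrac{\gamma^2}{(\gamma+1)(\gamma+2)}\,p^{\otimes3}
 +\tfrac{\gamma}{(\gamma+1)(\gamma+2)}\,\symm_3\!\sbr{\mathrm{diag}(p)\otimes p}
 +\tfrac{2}{(\gamma+1)(\gamma+2)}\,\mathrm{diag}(p),
\end{align*}
where in the order-$3$ line $\mathrm{diag}(p)$ denotes the fully diagonal third-order tensor with entries $p_i$. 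I would verify this by matching the three entry types --- all-equal $(i,i,i)$, two-equal $(i,i,j)$, and all-distinct $(i,j,k)$ --- against the closed forms $\Eb[\pi_i^a\pi_j^b\cdots]=\alpha_i(\alpha_i+1)\cdots/\bigl(\alpha_0(\alpha_0+1)\cdots\bigr)$ with $\alpha_i=\gamma p_i$ and $\alpha_0=\gamma$; these are exactly the three tensor building blocks appearing in the claim.

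Then I would run the recursion. Applying the order-$3$ identity at the leaf with $\gamma=\gamma_2$ and base $p=\pi_{p(\ib)}$, I push $\Eb$ over the layer-$1$ weights through $\symm_3$ (with which it commutes) and substitute the order-$2$ and order-$3$ identities at $\gamma=\gamma_1$ with base $\pi_0$. Collecting into $\pi_0^{\otimes3}$, $\symm_3[\mathrm{diag}(\pi_0)\otimes\pi_0]$, and $\mathrm{diag}(\pi_0)$ produces coefficients $\alpha=\tfrac{\gamma_1^2\gamma_2^2}{(\gamma_1+1)(\gamma_1+2)(\gamma_2+1)(\gamma_2+2)}$, a cross term $\beta$, and a diagonal term $\kappa$. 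Coloring by $\Phi$ and using $S_1^{\otimes3}=T(\pi_0^{\otimes3},\Phi,\Phi,\Phi)$ together with $\symm_3[S_2\otimes M_1]=T(C_3\,\symm_3[\mathrm{diag}(\pi_0)\otimes\pi_0],\Phi,\Phi,\Phi)$, I set $C_4:=\alpha$ and $C_5:=\beta/C_3$ so the first two blocks cancel, leaving $S_3=T(\kappa\,\mathrm{diag}(\pi_0),\Phi,\Phi,\Phi)$ with $C_6:=\kappa$. The analogous but shorter order-$2$ computation gives $C_2$ and $C_3$, and order $1$ is immediate from $\Eb[\pi_\ib]=\pi_0$.

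The main obstacle is purely the algebraic bookkeeping in this last collection step, not anything conceptual. The delicate point is that once $\Eb$ is moved inside $\symm_3[\mathrm{diag}(\rho)\otimes\rho]$, the second Dirichlet moment splits $\Eb[\mathrm{diag}(\rho)\otimes\rho]$ into a $\mathrm{diag}(\pi_0)\otimes\pi_0$ piece and a fully-diagonal remainder, and the cyclic $\symm_3$ \emph{triples} the latter (since $\symm_3[\mathrm{diag}(\pi_0)]=3\,\mathrm{diag}(\pi_0)$), which is precisely what supplies the factor $3$ in the $C_6$ recursion. One then has to check that the pooled cross-term coefficient factors as $\beta=\tfrac{\gamma_1\gamma_2(\gamma_1+\gamma_2+2)}{(\gamma_1+1)(\gamma_1+2)(\gamma_2+1)(\gamma_2+2)}=C_5C_3$ and that the pooled diagonal coefficient collapses to the stated $C_6$ numerator $6\gamma_1+6\gamma_2+2\gamma_1^2+2\gamma_2^2+3\gamma_1\gamma_2+4$; both are routine factorizations once the terms are laid out over the common denominator.
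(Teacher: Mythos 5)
Your proposal is correct and takes essentially the same route as the paper: both iterate the conditional Dirichlet moment formulas up the two stochastic layers via the tower property, and then fix $C_4$, $C_5$, $C_6$ by matching the all-distinct, two-equal, and all-equal index patterns (which are exactly your three tensor blocks $\pi_0^{\otimes 3}$, $\symm_3[\mathrm{diag}(\pi_0)\otimes\pi_0]$, and $\mathrm{diag}(\pi_0)$). The only difference is presentational --- you package the per-layer moments as tensor identities and collect blocks recursively, while the paper computes entry-wise case by case --- and your coefficients, including the factorization $\beta = C_5 C_3$ and the factor $3$ produced by $\symm_3$ acting on the fully diagonal remainder, all agree with the lemma.
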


\begin{proof}
  By definition of the Dirichlet Process, the means match
  that of the reference measure. This means that we can integrate over
  the hierarchy
  \begin{align}
\Eb[x] &= \Eb_{G_{p(\ib)} | G_{p(p(\ib))}, \gamma(p(\ib))}  \sbr{\Eb_{x|G_{p(\ib)}, \gamma(\ib)}\sbr{\Phi \pi_{\ib}}} \\
         &= \Eb_{G_{p(\ib)} |G_{p(p(\ib))}, \gamma(p(\ib))} \sbr{\Phi \pi_{p(i)}}\\
         &= \Phi \pi_{p(p(\ib))} = \Phi \pi_0 
\end{align}
Then deriving the first-order tensor is straightforward, 
  \begin{align}
     S_1 &:= M_1 = \Eb_x \sbr{x_1} = \Phi \pi_0  = T(\pi_0, \Phi)
  \end{align}
Similarly, to derive the second order tensor, we first need the
following terms: for $i \neq j$ we have
\begin{align}
\Eb_{G_{p(\ib)} | G_{p(p(\ib))}, \gamma(p(\ib))}  \sbr{\Eb_{x|G_{p(\ib)}, \gamma(\ib)}\sbr{\Phi_i \pi_{\ib i} \pi_{\ib j}^T \Phi_j^T}} &= \Eb_{G_{p(\ib)} | G_{p(p(\ib))}, \gamma(p(\ib))}   \sbr{\Phi_i\frac{\gamma_2 \pi_{p(\ib) i} \pi_{p(\ib) j}^T }{\gamma_2 + 1} \Phi_j^T}\\
&= T\rbr{\frac{\gamma_2}{\gamma_2 + 1}\frac{\gamma_1 \pi_{0i}
  \pi_{0j}}{\gamma_1 + 1}, \Phi_i, \Phi_j}
  \end{align}
  Likewise, when the indices match, we obtain
  \begin{align}
&\Eb_{G_{p(\ib)} | G_{p(p(\ib))}, \gamma(p(\ib))}  \sbr{\Eb_{x|G_{p(\ib)}, \gamma(\ib)}\sbr{\Phi_i \pi_{\ib i} \pi_{\ib i}^T  \Phi_i^T }} \\
&= \Eb_{G_{p(\ib)} | G_{p(p(\ib))}, \gamma(p(\ib))}  \sbr{\Phi_i\frac{(\gamma_2  \pi_{p(\ib) i}+ 1)  \pi_{p(\ib) i}}{(\gamma_2 + 1)} \Phi_i^T}\\
&=  T\rbr{\frac{\gamma_2}{\rbr{\gamma_2+1}} \frac{ \gamma_1 \pi_{0i}^2
    + \pi_{0i}}{\rbr{\gamma_1 +1}} +\frac{1}{\rbr{\gamma_2 + 1}}
  \pi_{0i}, \Phi_i, \Phi_i}
\end{align}
Then the moment $M_2$ could be written as
    \begin{align}
     M_2 =& \Eb_x \sbr{x_1\otimes x_2} \\
     =& \Eb\sbr{\Eb_x\sbr{x_1 \otimes x_2 | G_{\ib}}}  \\
            =& \Eb \sbr{\Eb_x \sbr{x_1| G_{\ib}} \otimes  \Eb_x \sbr{x_2| G_{\ib}} } \\
           =&  \Phi \Eb \sbr{ \Eb \sbr{\pi_{\ib}\pi_{\ib}^T}}\Phi ^T \\
         =&\frac{\gamma_2  \gamma_1}{\rbr{\gamma_2+1}\rbr{\gamma_1+1}} S_1 \otimes S_1 
         +  T\rbr{ \frac{\gamma_2 + \gamma_1 + 1}{ \rbr{\gamma_1+1}
             \rbr{\gamma_2+1} } \cdot \mathrm{diag}\rbr{\pi_0} , \Phi ,\Phi}
    \end{align}
   The second-order symmetric tensor $S_2$ could then be obtained by defining
  \begin{align}
    S_2 &:= M_2 - \frac{\gamma_2  \gamma_1}{\rbr{\gamma_2+1}\rbr{\gamma_1+1}} S_1 \otimes S_1= T\rbr{ \frac{\gamma_2 + \gamma_1 + 1}{\gamma_1
               \rbr{\gamma_1+1} \rbr{\gamma_2+1} } \cdot
             \mathrm{diag}\rbr{G_0} , \Phi,\Phi }.
  \end{align}
Before deriving the third-order tensor, we derive $\Eb[(\Phi_i \pi_{\ib i}) \otimes (\Phi_j \pi_{\ib j}) \otimes (\Phi_j \pi_{\ib j})]$ for the following three cases. First, for $i = j = k,$ we have:
\begin{equation}
\begin{aligned}
	&\Eb_{G_{p(\ib)} | G_{p(p(\ib))}, \gamma(p(\ib))}  \sbr{\Eb_{x|G_{p(\ib)}, \gamma(\ib)} \sbr{(\Phi_i \pi_{\ib i})^{\otimes 3} }} \\
	&= \Eb \sbr{T(\frac{\pi_{p(\ib) i} \rbr{\gamma_2\pi_{p(\ib) i}+1} \rbr{\gamma_2\pi_{p(\ib) i} +2}}{ \rbr{\gamma_2+1}\rbr{\gamma_2+2}}, \Phi_i, \Phi_i, \Phi_i)}\\
	&= T \bigg( \frac{\gamma_2^2}{\rbr{\gamma_2+1}\rbr{\gamma_2+2}}  \frac{\pi_{0i} \rbr{\gamma_1\pi_{0i}+1} \rbr{\gamma_1 \pi_{0i}+2}}{ \rbr{\gamma_1+1}\rbr{\gamma_1+2}} + \frac{3\gamma_2}{\rbr{\gamma_2+1}\rbr{\gamma_2+2}}  \frac{\pi_{0i}\rbr{\gamma_1 \pi_{0i}+1} }{ \rbr{\gamma_1+1}} \\\
	&\,\,\,\,\,\,\,\,\,\,\,\, \,\,\,\,\,\,\,\,\,+ \frac{2 \pi_{0i}}{\rbr{\gamma_2+1}\rbr{\gamma_2+2}}, \Phi_i,\Phi_i,\Phi_i \bigg)
\end{aligned}
\end{equation}
Second, for $i = j \neq k,$ we have:
\begin{equation}
\begin{aligned}
\label{eq:moment_2}
	&\Eb_{G_{p(\ib)} | G_{p(p(\ib))}, \gamma(p(\ib))} \sbr{\Eb_{x|G_{p(\ib)}, \gamma(\ib)}\sbr{(\Phi_i \pi_{\ib i})^{\otimes 2}\otimes (\Phi_k \pi_{\ib k})}}\\
	& = \Eb \sbr{T\bigg( \frac{ \pi_{p(\ib)i}\rbr{\gamma_2 \pi_{p(\ib)i}+1} \gamma_2 \pi_{p(\ib)k} }{ \rbr{\gamma_2+1}\rbr{\gamma_2+2}}, \Phi_i, \Phi_i, \Phi_k \bigg)}\\
	 &= T \bigg( \frac{\gamma_2^2}{\rbr{\gamma_2+1}\rbr{\gamma_2+2}}  \frac{ \pi_{0i} \rbr{\gamma_1 \pi_{0i}+1}\gamma_1 \pi_{0k}}{ \rbr{\gamma_1+1}\rbr{\gamma_1+2}}+ \frac{\gamma_2}{\rbr{\gamma_2+1}\rbr{\gamma_2+2}}  \frac{\gamma_1 \pi_{0i}\pi_{0k}}{ \rbr{\gamma_1+1}} , \Phi_i,\Phi_i,\Phi_k \bigg)
\end{aligned}
\end{equation}
Third, for $i \neq j \neq k,$ we have:

\begin{align}
\label{eq:moment_3}
	&\Eb \sbr{\Eb_{x|G_{p(\ib)}, \gamma(\ib)}\sbr{(\Phi_i \pi_{\ib i})\otimes (\Phi_j \pi_{\ib j})\otimes (\Phi_k \pi_{\ib k})}} \\
	&= \Eb \sbr{T \bigg( \frac{ \gamma_2^2   \pi_{p(\ib)i}  \pi_{p(\ib)j}  \pi_{p(\ib)k} }{ \rbr{\gamma_2+1}\rbr{\gamma_2+2}} , \Phi_i,\Phi_j,\Phi_k \bigg) }\\
	 &= T
        \bigg(\frac{\gamma_2^2}{\rbr{\gamma_2+1}\rbr{\gamma_2+2}}
        \frac{\gamma_1^2  \pi_{0i}  \pi_{0j}  \pi_{0k} }{
          \rbr{\gamma_1+1}\rbr{\gamma_1+2}}, \Phi_i,\Phi_j,\Phi_k \bigg).
\end{align}
Defining
\begin{align}
S_3 &:=  M_3 - C_4  \cdot S_1 \otimes S_1 \otimes S_1 -C_5  \cdot \symm_3\sbr{ S_2 \otimes M_1}\\
        &= T\rbr{    C_6 \cdot \mathrm{diag}\rbr{\pi_0} ,\Phi,\Phi,\Phi}
\end{align}
we solve $C_4, C_5, C_6$ as follows.

Note that for $i\neq j \neq k,$ $[S_3]_{ijk} = 0$ and $\sbr{\symm_3[S_2\otimes M_1]}_{ijk} = 0$. Thus
\begin{align}
	C_4 &= \frac{[M_3]_{ijk}}{[S_{1}]_i [S_{1}]_j [S_{1}]_k}\\
	&= \frac{\gamma_2^2}{\rbr{\gamma_2+1}\rbr{\gamma_2+2}}  \frac{\gamma_1^2 \pi_{0i}  \pi_{0j} \pi_{0k} }{ \rbr{\gamma_1+1}\rbr{\gamma_1+2}} \cdot \frac{1}{\pi_{0i}  \pi_{0j} \pi_{0k}}\\
	&=  \frac{\gamma_2^2 \gamma_1^2}{\rbr{\gamma_2+1}\rbr{\gamma_2+2}\rbr{\gamma_1+1}\rbr{\gamma_1+2}}\label{eq:c2}
\end{align}
Similarly, for $i = j \neq k,$ $[S_3]_{ijk} = 0$. Thus
\begin{align}
C_5 &= \frac{[M_3]_{iik} - C_4 [S_1]_i [S_1]_i [S_1]_k }{[S_2]_{ii}[{M_1}]_k}\\
=&\frac{\gamma_2^2 \gamma_1 \pi_{0i}  \pi_{0k} + \gamma_2 \rbr{\gamma_1+2} \gamma_1 \pi_{0i}  \pi_{0k}}{\rbr{\gamma_2+1}\rbr{\gamma_2+2} \rbr{\gamma_1+1}\rbr{\gamma_1+2}} \cdot \frac{ \rbr{\gamma_1+1}\rbr{\gamma_2+1}}{\pi_{0i}  \pi_{0k} \rbr{\gamma_2 +\gamma_1 + 1}}\\
=&\frac{\gamma_2\gamma_1\rbr{\gamma_1+\gamma_2+2}}{\rbr{\gamma_1+2}\rbr{\gamma_2+2}\rbr{\gamma_1+\gamma_2+1}}\label{eq:c3}
\end{align}
Finally, 
\begin{align}
C_6 &= \frac{[M_3]_{iii} - C_4[S_1]_i [S_1]_i [S_1]_i - 3 C_5[S_2]_{ii} [M_1]_i}{\gamma_i}\nonumber\\
&= \frac{2\gamma_2^2 \pi_{0i}  + 3\gamma_2\rbr{\gamma_1+2}\pi_{0i} + 2\rbr{\gamma_1+1}\rbr{\gamma_1+2}\pi_{0i}}{\rbr{\gamma_2+1}\rbr{\gamma_2+2} \rbr{\gamma_1+1} \rbr{\gamma_1 +2}\gamma_1 \pi_{0i}}\nonumber \\
&=  \frac{ 6\gamma_1 +6 \gamma_2 +2\gamma_1^2+ 2\gamma_2^2 +3 \gamma_2 \gamma_1 + 4 }{\gamma_1 \rbr{\gamma_1+1}\rbr{\gamma_1+2}\rbr{\gamma_2+1}\rbr{\gamma_2+2}}\label{eq:c4}
\end{align}
\end{proof}

\subsection{Multiple Layer HDP}

\begin{lemma}[Symmetric Tensors of HDP]
\label{lem:hdpgmoments}
   For an $L$-level HDP, with hyperparameters, $\gamma_1, \gamma_2$,
   \ldots $\gamma_{L-1}$ we have
   \begin{align*}
    	 S_1 &:= M_1 = T(\pi_0, \Phi),\\
 	S_2 &:= M_2 - C_2 \cdot S_1 S_1^T = T( C_3 \cdot \mathrm{diag}\rbr{\pi_0}, \Phi,\Phi),\\
 	S_3 & :=  M_3 - C_4  \cdot S_1 \otimes S_1 \otimes S_1 -C_5  \cdot \symm_3\sbr{ S_2 \otimes M_1}= T(  C_6 \cdot  \mathrm{diag}\rbr{\pi_0}, \Phi,\Phi,\Phi) ,
   \end{align*}
   The key difference is that here the coefficients $C_i$ are
   recursively defined since we need to take expectations all the way
   up to the root node. This yields
 \begin{align*}
 C_2 =& \frac{\prod \limits_{i=1}^{L-1} \gamma_i }{ \prod
   \limits_{i=1}^{L-1} \rbr{\gamma_i +1} }, \,\,\,\,\,\,\,C_3 =   \sum \limits_{i=1}^{L-1} \frac{\prod \limits_{j=1}^{i-1} \gamma_{L-j} }{ \prod \limits_{j=1}^i \rbr{\gamma_{L-j} +1}  }; \nonumber \\
 C_4 =& \frac{\prod \limits_{i=1}^{L-1} {\gamma_i}^2  }{ \prod \limits_{i=1}^{L-1} \rbr{\rbr{\gamma_i +1}\rbr{\gamma_i +2}} }, \,\,\,\,\,\,C_5 = \sum \limits_{i=1}^{L-1}  \frac{\prod \limits_{j=1}^{i-1} {\gamma_{L-j}} \prod \limits_{j=1}^{L-1} \gamma_j }{ \prod \limits_{j=1}^i \rbr{\gamma_{L-j} +2}  \prod \limits_{j=1}^{L-1} \rbr{\gamma_j +1} }/C_3 \nonumber\\
 C_6 =& 3 \sum \limits_{i=1}^{L-2} \sum \limits_{j = i+1}^{L} \frac{ \prod \limits_{k=1}^{i-1}\gamma_{L-k} \prod \limits_{k=1}^{j-1} \gamma_{L-k} }{ \prod \limits_{k=1}^i \rbr{\gamma_{L-k} +2}  \prod \limits_{k=1}^j \rbr{\gamma_{L-k}+1} } + 2 \sum \limits_{i=1}^{L-1} \frac{\prod \limits_{j=1}^{i-1} {\gamma_{L-j}}^2  }{ \prod \limits_{j=1}^i \rbr{ \rbr{\gamma_{L-j} +1} \rbr{\gamma_{L-j} +2}}}\nonumber
 \end{align*}
 \end{lemma}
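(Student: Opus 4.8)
The plan is to prove Lemma \ref{lem:hdpgmoments} by reducing the $L$-level problem to a single-step recursion on the coefficients and then solving that recursion in closed form. The backbone is the observation, already exploited in the three-layer case of Lemma \ref{lem:hdp3moments}, that the mean of a Dirichlet process matches its base measure, so expectations can be pushed up the tree one level at a time. Concretely, I would let $C_r^{(l)}$ denote the coefficient $C_r$ for the sub-HDP rooted at a level-$l$ node and show that integrating out one DP layer maps $(C_2^{(l+1)},\ldots,C_6^{(l+1)})$ to $(C_2^{(l)},\ldots,C_6^{(l)})$ exactly through the update rules of Lemma \ref{lem:hdp3moments_general}, with base case at the leaf layer $l=L-1$ given by $C_2=C_4=1$ and $C_3=C_5=C_6=0$. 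Induction on the depth then reduces the whole claim to unrolling these recursions.

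The single-step derivation is where the probabilistic content lives; I would obtain it by repeating the three-case bookkeeping of the three-layer proof, now conditioning a generic level-$l$ node on its parent $p(\ib)$. The needed inputs are the standard Dirichlet-process moment identities: for $\pi_{\ib}\sim\mathrm{DP}(\gamma_{l+1},\pi_{p(\ib)})$ one has $\Eb[\pi_{\ib i}]=\pi_{p(\ib) i}$, the second-order identities $\Eb[\pi_{\ib i}\pi_{\ib j}]=\tfrac{\gamma_{l+1}}{\gamma_{l+1}+1}\pi_{p(\ib) i}\pi_{p(\ib) j}$ for $i\neq j$ and $\Eb[\pi_{\ib i}^2]=\tfrac{\gamma_{l+1}\pi_{p(\ib) i}+1}{\gamma_{l+1}+1}\pi_{p(\ib) i}$, together with their third-order analogues in the three index patterns $i\neq j\neq k$, $i=j\neq k$, and $i=j=k$. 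Substituting these into $M_1^{\ib}$, $M_2^{\ib}$, $M_3^{\ib}$ and collecting terms shows that the rank-one part ($\pi\otimes\pi$ and $\pi^{\otimes 3}$), the mixed part ($\symm_3[S_2^{\ib}\otimes M_1^{\ib}]$, matching the $i=j\neq k$ pattern), and the fully diagonal part are each preserved; reading off their coefficients produces the update rules for $C_2,\ldots,C_6$. The structural fact to verify carefully is that no new tensor shapes appear, i.e.\ the diagonalizable form of $S_2^{\ib}$ and $S_3^{\ib}$ is closed under the layer-integration operation (this step may also simply be cited from Lemma \ref{lem:hdp3moments_general}).

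With the recursion in hand, the remaining work is algebraic. The coefficients $C_2$ and $C_4$ telescope immediately into the products $\prod_i \tfrac{\gamma_i}{\gamma_i+1}$ and $\prod_i \tfrac{\gamma_i^2}{(\gamma_i+1)(\gamma_i+2)}$, and $C_3$, whose recursion merely adds $C_2^{(l)}/\gamma_{l+1}$ at each step, telescopes into the single sum claimed. The genuine obstacle is $C_5$ and $C_6$: their recursions are not pure telescopes, since $C_5^{(l)}$ carries the ratio $C_3^{(l+1)}/C_3^{(l)}$ and both couple back to the already-solved $C_3$ and $C_4$. I would handle this by clearing the $C_3$ denominator, tracking the product $C_3^{(l)}C_5^{(l)}$ rather than $C_5^{(l)}$ alone, so the recursion becomes linear with telescoping factors of $\tfrac{\gamma}{\gamma+1}$; the proposed nested-sum formulas for $C_5$ and $C_6$ would then be verified by induction on the layer index, checking they satisfy both the recursion and the $l=L-1$ base case. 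As a final consistency check I would specialize to $L=3$ and confirm that the summation formulas collapse to the explicit $C_3,\ldots,C_6$ of Lemma \ref{lem:hdp3moments}, which simultaneously validates the index conventions in the double sum defining $C_6$ and guards against off-by-one errors in the products over $\gamma_{L-j}$.
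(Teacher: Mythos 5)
Your proposal is correct and follows essentially the same route as the paper: the paper derives the three-layer case by the identical three-index-pattern DP-moment computation (Lemma \ref{lem:hdp3moments}), states the one-step bottom-up recursion with the same leaf initialization $C_2^{L-1}=C_4^{L-1}=1$, $C_3^{L-1}=C_5^{L-1}=C_6^{L-1}=0$ (Lemma \ref{lem:hdp3moments_general}), and presents the closed forms of Lemma \ref{lem:hdpgmoments} as the unrolling of that recursion. If anything your plan is more complete than the paper's own treatment, which never writes out the telescoping/induction step (your trick of tracking $C_3^{(l)}C_5^{(l)}$ is exactly what makes the $C_5$ and $C_6$ formulas fall out), and your $L=3$ consistency check would also flag the off-by-one in the stated double sum for $C_6$, whose upper limit should be $j\le L-1$ rather than $j\le L$ since $\gamma_{L-k}$ is undefined for $k=L$.
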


\section{Proof of reconstruction formula for the spectral HDP}

\subsection{ Proof of reconstruction formula}

\label{sec:eigenvalue}
 For simplicity in the proof, in Equation \eq{eq:gaussian-tensor-2} \eq{eq:gaussian-tensor-3} \eq{eq:gaussian-tensor-4}, we define the diagonal coefficients for $S_i$ to be $C_i \in \mathbb{R}^K$, i.e., $C_2 = \pi -\pi^2$, $C_3 = \pi -3\pi^2+ 2\pi^3$ and $C_4 = \pi - 7\pi^2 +12 \pi^3 -6 \pi^4$, so that 
  \begin{align*}
    S_2 &= T(\mathrm{diag} \rbr{C_2}, \Phi, \Phi), \; \;\; 
    S_3 =T(\mathrm{diag} \rbr{C_3}, \Phi, \Phi, \Phi), \;\;\; 
    S_4 =T(\mathrm{diag} \rbr{C_4}, \Phi, \Phi, \Phi, \Phi).
  \end{align*}
Following step 6 in Algorithm \ref{alg:eca}, we obtain whitening matrix $W$ by doing svd on $S_2$. Suppose the svd of matrix $ T( \mathrm{diag} \rbr{\sqrt{C_2}},\Phi) = U\Sigma^{1/2}V^\top$,  we have $S_2 = U\Sigma^{1/2}V^\top V\Sigma^{1/2}U^\top = USU^T$ and $W = U\Sigma^{-1/2}$. Using the fact that $$S_3 = T\rbr{\mathrm{diag} \rbr{C_3C_2^{-3/2}}, \mathrm{diag} \rbr{ \sqrt{C_2}}\Phi,\mathrm{diag}  \rbr{ \sqrt{C_2}} \Phi,\mathrm{diag}  \rbr{ \sqrt{C_2}}\Phi},$$ we have
  \begin{align}
    W_3 &=T \rbr{S_3, W,W,W} \nonumber \\
    &= T\rbr{\mathrm{diag} \rbr{C_3C_2^{-3/2}}, \Sigma^{-1/2}U^\top( U\Sigma^{1/2}V^\top),\Sigma^{-1/2}U^\top( U\Sigma^{1/2}V^\top),\Sigma^{-1/2}U^\top( U\Sigma^{1/2}V^\top)} \nonumber\\
    & = T\rbr{\mathrm{diag} \rbr{C_3C_2^{-3/2}}, V^\top,V^\top,V^\top}.
  \end{align}
The diagonalized tensor $W_3$, with some permutation $\tau$ on $[K]$ and $s_i \in \cbr{\pm 1}$, has eigenvalues and eigenvectors:
  \begin{align}
    \label{eq:lambda-s3}
    \lambda_i &= s_i C_{3,i}C_{2,i}^{-3/2}, \,\,\, v_i = s_i (V^\top), e_{\tau \rbr{i}},
  \end{align}
 where $C_{i,j}$ representing the $j$-th element in $C_i$.
After obtaining $v_i$, we multiply $v_i$ by $(W^{\dag})^\top$ to rotate it back to $\Phi_i$ as describing in step 15 in Algorithm \ref{alg:eca}, where $W^{\dag} = (W^\top W)^{-1}W^\top = \Sigma^{1/2}U^\top$, we get 
  \begin{align}
    \label{eq:W}
    (W^{\dag})^\top v_i &= s_i U\Sigma^{1/2}V^\top e_{\tau \rbr{i}} = s_i T( \mathrm{diag} \rbr{\sqrt{C_2}}, \Phi) e_{\tau \rbr{i}} = s_i \sqrt{C_{2,i}} \Phi_{\tau \rbr{i}},  
  \end{align}
which yields $\Phi_{\tau \rbr{i}} =  \frac{(W^{\dag})^\top v_i }{s_i \sqrt{C_{2,i}}}$. With the fact that $s_i = C_{3,i}C_{2,i}^{-3/2} \lambda_i^{-1}$ from Equation \eq{eq:lambda-s3}, we have
  \begin{align}
    \Phi_{\tau \rbr{i}} &=  \frac{\lambda_i }{\rbr{ C_{3,i}C_{2,i}^{-1}}}  (W^{\dag})^\top v_i = C_{2,i}^{-1/2} (W^{\dag})^\top v_i .
  \end{align}
Plug in the definition of $C_2$, we get the scale factor for $i \in [K_1]$. For $\Phi_i$ which are recovered by conducting tensor decomposition on $W_4$, we first examine
  \begin{align}
    W_4 = T \rbr{S_4, W,W,W,W} &=T\rbr{\mathrm{diag} \rbr{C_{4,i} C_{2,i}^{-2}}, V^\top ,V^\top ,V^\top ,V^\top},
  \end{align}
and obtain
  \begin{align}
    \label{eq:lambda-s4}
    \lambda_i & =  C_{4,i} C_{2,i}^{-2},\,\,\, v_i = s_i (V^\top ) e_{\tau \rbr{i}}.
  \end{align}
By using the fact that $s_i = s_i C_{4,i} C_{2,i}^{-2} {\lambda_i}^{-1}$ and Equation \eq{eq:W}, we have
  \begin{align}
    \Phi_{\tau \rbr{i}}& =  \frac{(W^{\dag})^\top v_i }{s_i \sqrt{C_{2,i}}}= \frac{s_i }{\rbr{C_{4,i} C_{2,i}^{-3/2}} \lambda_i^{-1} } = s_i (W^{\dag})^\top  v_i =s_i  C_{2,i}^{-1/2} (W^{\dag})^\top v_i , \,\,\,\, \forall i \in \sbr{K_1+1, \cdots, K}.
  \end{align}
   Note that the value of $\pi_i$ used to construct $C_j$ can be recovered by Equation \eq{eq:lambda-s3} and \eq{eq:lambda-s4} after obtaining $\lambda_i$.

\subsection{ Proof of reconstruction formula for the spectral HDP}

\label{proof:reconstruction2}
Using the results in the previous section, the corresponding eigenvalues $\lambda_i$ and eigenvectors $v_i$, with some permutation ${\bf \pi}$, are
\begin{align}
\lambda_i &= s_i \frac{C_6}{C_3\sqrt{C_3}}\mathrm{diag}\rbr{\frac{1}{\sqrt{\gamma_{\pi_i}}}}, \,\,v_i = s_iV^Te_{\pi_i}
\label{eq:lambda_i}
\end{align}
Therefore
\begin{align}
W^+ &= \rbr{W^TW}^{-1}W^T = SU^T\\
\rbr{W^+}^Tv_i &= s_i(US)V^Te_{\pi_i} = s_i \sqrt{C_3}\sqrt{\gamma_{\pi_i}}\Phi_{\pi_i}\\
\Phi_{\pi_i} &= \frac{\rbr{W^+}^Tv_i}{s_i \sqrt{C_3}\sqrt{\gamma_{\pi_i}}}
\end{align}
Rearranging Equation \ref{eq:lambda_i}, we have
\begin{align}
s_i &= \frac{C_6}{C_3\sqrt{C_3}\sqrt{\gamma_{\pi_i}}\lambda_i};\,\,\,\,\,\Phi_{\pi_i} = \frac{\lambda_i (W^+)^T v_i}{C_6/C_3}
\end{align}

\section{ Proof of Lemma \ref{lem:submoment}}

\label{proof:reconstruction3}
\begin{proof}
Here we only show the derivation of the fourth-order conditional moments. The other inequalities can be found in \cite{HsuKak12}. Under the stated model, the fourth-order conditional moment can be expended as
  \begin{align*}
    M_{4,z_i} &= M_{1,z_i} \otimes M_{1,z_i} \otimes M_{1,z_i} \otimes M_{1,z_i} + \sigma^2 \symm_6\sbr{M_{1,z_i} \otimes M_{1,z_i} \otimes \one} + \Eb \sbr{\epsilon  \otimes \epsilon  \otimes \epsilon \otimes \epsilon },
  \end{align*}
 which yields
  \begin{align}
  \label{eq:M_4expension}
    &\hat{M}_{4,z_i}-  M_{4,z_i} \nonumber\\
    &= \frac{1}{\hat{w}_{z_i}n} \bigg(\sum \limits_{x \in B_{z_i}} \rbr{x_j-M_{1,z_i}}\otimes\rbr{x_j-M_{1,z_i}}\otimes\rbr{x_j-M_{1,z_i}}\otimes\rbr{x_j-M_{1,z_i}} - \sigma^4 \symm_3 \sbr{\one \otimes \one}\nonumber\\
    &\,\,\,\,\,\,+  \sum \limits_{x \in B_{z_i}} \rbr{ \symm_4 \sbr{M_{1,z_i} \otimes \rbr{x_j - M_{1,z_i}}\otimes \rbr{x_j - M_{1,z_i}}\otimes \rbr{x_j - M_{1,z_i}} } }\nonumber\\
    &\,\,\,\,\,\,+\sum \limits_{x \in B_{z_i}} \rbr{ \symm_6 \sbr{M_{1,z_i} \otimes M_{1,z_i} \otimes \rbr{ \rbr{x_j - M_{1,z_i}}\otimes \rbr{x_j - M_{1,z_i}} - \sigma^2 \one }}} \nonumber\\
    &\,\,\,\,\,\,+\sum \limits_{x \in B_{z_i}}\symm_4 \sbr{ M_{4,z_i}\otimes  M_{4,z_i}\otimes M_{4,z_i}\otimes\rbr{x_j-M_{1,z_i}}  } 
\bigg)
  \end{align}
Suppose $V = V_1 \Sigma V_2^\top $ is the SVD of $V$, where $V_1 \in \mathrm{R}^{d \times r}$ consists of orthonormal columns. With $y_{j,z_i} = V_1^\top \rbr{x_j-M_{1,z_i}}$, applying triangle inequalities to Equation \eqref{eq:M_4expension} yields
  \begin{align*}
    &\nbr{T\rbr{\hat{M}_{4,z_i}-  M_{4,z_i},V,V,V,V}}_2 \\
    &\leq \nbr{V}_2^4 \nbr{  \frac{1}{\hat{w}_{z_i}n} \sum \limits_{x \in B_{z_i}} \rbr{ y_{j,z_i}\otimes y_{j,z_i}\otimes y_{j,z_i}\otimes y_{j,z_i}  -  \sigma^4 \symm_3 \sbr{\one \otimes \one}  }  }_2\\
    &\,\,\,\,\,\,+ 4 \nbr{V^\top M_{1,z_i} }_2 \nbr{  \frac{1}{\hat{w}_{z_i}n} \sum \limits_{x \in B_{z_i}}  y_{j,z_i}\otimes y_{j,z_i}\otimes y_{j,z_i} }_2\\
    &\,\,\,\,\,\,+ 6 \nbr{V^\top M_{1,z_i} }_2^2  \nbr{  \frac{1}{\hat{w}_{z_i}n} \sum \limits_{x \in B_{z_i}} \rbr{ y_{j,z_i}\otimes y_{j,z_i} -  \sigma^2\one}  }_2+ 4 \nbr{R^\top M_{1,z_i} }_2^3 \nbr{  \frac{1}{\hat{w}_{z_i}n} \sum \limits_{x \in B_{z_i}}  y_{j,z_i} }_2,
  \end{align*}
 By using Lemma \ref{lem:random4}, we bound the first term by
  \begin{align}
    \Pr \Bigg[ &\nbr{ \frac{1}{\hat{w}_{z_i}n} \sum \limits_{x \in B_{z_i}} \rbr{ y_{j,z_i}\otimes y_{j,z_i}\otimes y_{j,z_i}\otimes y_{j,z_i}  -  \Eb \sbr{\epsilon \otimes \epsilon \otimes \epsilon \otimes \epsilon }  } }_2\nonumber \\
    & \,\,\,\,\,\,\,\,\,\,\,\,\,\, \,\,\,\,\,\,\,\,\,\,\,\,\,\,\,\,\,\, > \sigma^4 \sqrt{\frac{8192 \rbr{r\ln 17 +\ln \rbr{2K/\delta}}^2 }{n^2}  +\frac{32\rbr{r\ln 17 +\ln \rbr{2K/\delta}}^3 }{n^3} }  \Bigg] \leq \delta.
  \end{align}
\end{proof}

\section{Concentration of Measure for the spectral IBP}
In this section, we provides bounds for tensors of linear gaussian latent feature model. The concentration behavior is more complicated than that of the bounded moments in Theorem \ref{th:momentbounds} due to the additive Gaussian noise. Here we restate the model as
  \begin{align}
    x= \Phi z + \epsilon 
  \end{align}
where $x \in \mathbb{R}^d$ is the observation, $z \in \{0,1\}^K$ is a binary vector indicating the possession of certain latent vector and $\epsilon$ is gaussian noise drawn from $N(0, \sigma^2 \one)$. Using the results for bounded moments, we derive the concentration measure for tensors.


\subsection{Estimation of $\sigma$, $S_2$, $S_3$, $S_4$}
 Note that we have $\sigma^2  = \lambda_{min}\sbr{{M_2} - {M_1} \otimes {M_1}} =
\varsigma_K{\sbr{{M_2} - {M_1} \otimes {M_1}}}$, where $\varsigma_t \sbr{M}$ denoting the $t-th$ singular value of matrix $M$ which is defined in Theorem \ref{th:reconstruction}. Here we define $\hat{S}_{2,K}$ to be the best rank $k$ approximation of $ \hat{M_2} - \hat{M_1} \otimes \hat{M_1} - \hat{\sigma}^2\one$, which is the truncated matrix $S_2$ in Algorithm \ref{alg:eca}. $\hat{S}_i$ denotes the empirical tensors derived from summation of $\hat{M}_i$ and $\hat{\sigma}$. $S_i$ denotes the theoretical values.
\begin{lemma}{(Accuracy of $\sigma^2$, $\sigma^4$ and $M_{2,K}$)}
  \label{lem:sigbounds}
  \begin{align}
    &\abr{\hat{\sigma}^2 - \sigma^2 }  \leq \nbr{\hat{M}_2-M_2}_2 +   \nbr{\hat{M}_1-M_1}_2^2+2\nbr{\hat{M}_1-M_1}_2\nbr{M_1}_2\\
    &\abr{\hat{\sigma}^4 -  \sigma^4}  \leq \abr{ \hat{\sigma}^2 - \sigma^2 }^2 + 2 \sigma^2 \abr{\hat{\sigma}^2 - \sigma^2}\\
    &\nbr{\hat{S}_{2,k}-S_2}_2 \leq 4 \rbr{\nbr{\hat{M}_2 - M_2}_2 + \nbr{\hat{M_1}-M_1}_2^2 + 2\nbr{M_1}_2 \nbr{\hat{M_1}-M_1}_2}
  \end{align}
\end{lemma}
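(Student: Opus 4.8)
The plan is to prove the three inequalities in sequence, since the second and third both build on the first. Throughout I abbreviate the common right-hand side as $E := \nbr{\hat{M}_2-M_2}_2 + \nbr{\hat{M}_1-M_1}_2^2 + 2\nbr{M_1}_2\nbr{\hat{M}_1-M_1}_2$, and I will repeatedly use the product-chaining identity $\hat{M}_1\otimes\hat{M}_1 - M_1\otimes M_1 = \hat{M}_1\otimes(\hat{M}_1-M_1) + (\hat{M}_1-M_1)\otimes M_1$. Taking spectral norms, using $\nbr{a\otimes b}_2 = \nbr{a}_2\nbr{b}_2$ together with $\nbr{\hat{M}_1}_2 \leq \nbr{M_1}_2 + \nbr{\hat{M}_1-M_1}_2$, this gives $\nbr{\hat{M}_1\otimes\hat{M}_1 - M_1\otimes M_1}_2 \leq \nbr{\hat{M}_1-M_1}_2^2 + 2\nbr{M_1}_2\nbr{\hat{M}_1-M_1}_2$, the matrix analogue of Lemma \ref{lem:chaining}.

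For the first bound, recall that $\sigma^2$ and $\hat\sigma^2$ are the smallest eigenvalues of the symmetric matrices $M_2 - M_1\otimes M_1$ and $\hat{M}_2-\hat{M}_1\otimes\hat{M}_1$ (Algorithm \ref{alg:eca}, step 3). Since the smallest eigenvalue is $1$-Lipschitz in the spectral norm by Weyl's inequality, I obtain $\abr{\hat\sigma^2-\sigma^2} \leq \nbr{(\hat{M}_2-\hat{M}_1\otimes\hat{M}_1)-(M_2-M_1\otimes M_1)}_2$. Splitting the right-hand side as $(\hat{M}_2-M_2) - (\hat{M}_1\otimes\hat{M}_1 - M_1\otimes M_1)$, applying the triangle inequality, and invoking the chaining estimate above yields exactly $\abr{\hat\sigma^2-\sigma^2}\leq E$, which is the first claim. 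The second bound is then pure algebra: factoring $\hat\sigma^4-\sigma^4 = (\hat\sigma^2-\sigma^2)(\hat\sigma^2+\sigma^2)$ and substituting $\hat\sigma^2+\sigma^2 = 2\sigma^2+(\hat\sigma^2-\sigma^2)$ gives $\hat\sigma^4-\sigma^4 = (\hat\sigma^2-\sigma^2)^2 + 2\sigma^2(\hat\sigma^2-\sigma^2)$, and taking absolute values with the triangle inequality delivers the stated inequality.

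For the third bound, the governing observation is that the true tensor $S_2 = M_2 - M_1\otimes M_1 - \sigma^2\one$ is \emph{exactly} rank $k$, as shown in \eq{eq:gaussian-tensor-2} where $S_2 = T(\mathrm{diag}(\pi-\pi^2),\Phi,\Phi)$. Writing $A := \hat{M}_2 - \hat{M}_1\otimes\hat{M}_1 - \hat\sigma^2\one$, so that $\hat{S}_{2,k}$ is the best rank-$k$ approximation of $A$, I would first bound $\nbr{A - S_2}_2 \leq \nbr{\hat{M}_2-M_2}_2 + \nbr{\hat{M}_1\otimes\hat{M}_1-M_1\otimes M_1}_2 + \abr{\hat\sigma^2-\sigma^2}$; the first two terms total at most $E$ by chaining, and the last is at most $E$ by the first bound, so $\nbr{A-S_2}_2 \leq 2E$. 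Next I invoke the Eckart--Young--Mirsky theorem, which states that the best rank-$k$ spectral-norm approximation error of $A$ equals $\varsigma_{k+1}(A)$, and combine it with Weyl's inequality and $\varsigma_{k+1}(S_2)=0$ to get $\nbr{\hat{S}_{2,k}-A}_2 = \varsigma_{k+1}(A) \leq \varsigma_{k+1}(S_2) + \nbr{A-S_2}_2 = \nbr{A-S_2}_2$. A final triangle inequality $\nbr{\hat{S}_{2,k}-S_2}_2 \leq \nbr{\hat{S}_{2,k}-A}_2 + \nbr{A-S_2}_2 \leq 2\nbr{A-S_2}_2 \leq 4E$ gives the claim.

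The main obstacle is the third bound, specifically the step that converts the best-rank-$k$ approximation into a perturbation estimate: one must exploit that $S_2$ is genuinely rank $k$, so that its $(k+1)$-th singular value vanishes, in order to turn the Eckart--Young error $\varsigma_{k+1}(A)$ into a quantity controlled by $\nbr{A-S_2}_2$ rather than an uncontrolled spectral tail. Once the matrix chaining estimate and the rank-$k$ structure of $S_2$ are in place, the first two bounds and the remaining triangle-inequality bookkeeping are routine.
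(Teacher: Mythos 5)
Your proposal is correct and follows essentially the same route as the paper's own proof: Weyl's inequality plus the rank-one chaining decomposition of $\hat{M}_1\otimes\hat{M}_1 - M_1\otimes M_1$ for the first bound, the algebraic identity $\hat{\sigma}^4-\sigma^4 = (\hat{\sigma}^2-\sigma^2)^2 + 2\sigma^2(\hat{\sigma}^2-\sigma^2)$ for the second, and for the third the combination of Eckart--Young (best rank-$k$ error equals $\varsigma_{k+1}$), Weyl's inequality with $\varsigma_{k+1}(S_2)=0$, and a final triangle inequality, yielding the same $2\cdot 2E = 4E$ bookkeeping. The only cosmetic difference is that you state Eckart--Young as an equality where the paper writes an inequality; the substance is identical.
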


\begin{proof}

For the first order tensor, the inequality holds trivially due to the
guarantees for $\nbr{\hat{M}_1 - M_1}_2$. Next we bound the difference in
variance estimates. Using the fact that differences in the $k$-th
eigenvalues are bounded by the matrix norm of the difference we have
that
  \begin{align}
    \abr{\hat{\sigma}^2 - \sigma^2 } 
    & = \abr{\varsigma_k \sbr{\hatm_2-\hatm_1 \otimes \hatm_1 } -
    \varsigma_k{\sbr{{M_2} - {M_1} \otimes {M_1}}} }  \\
    & \leq \nbr{\sbr{\hatm_2-\hatm_1 \otimes \hatm_1 } - {\sbr{{M_2} -
        {M_1} \otimes {M_1}}} }_2 \\
    & \leq \nbr{\hat{M}_2-M_2}_2 +   \nbr{\hat{M}_1-M_1}_2^2+2\nbr{\hat{M}_1-M_1}_2\nbr{M_1}_2.
  \end{align}
The second inequality follows the Weyl's inequality and the last inequality is obtained by the triangle inequality.
For estimation of $\hat{\sigma^4}$,
  \begin{align}
    \abr{\hat{\sigma}^4 - \sigma^4} \leq \abr{ ( \hat{\sigma}^2 - \sigma^2)^2 + 2\sigma^2(\hat{\sigma}^2 - \sigma^2) } \leq  \abr{ \hat{\sigma}^2 - \sigma^2 }^2 + 2 \sigma^2 \abr{\hat{\sigma}^2 - \sigma^2}.
  \end{align}
For the last claimed inequality, with Weyl's inequality, 
\begin{align}
 &\nbr{\hat{S}_{2,k} - \rbr{\hat{M_2} - \hat{M_1} \otimes \hat{M_1} - \hat{\sigma}^2 \one}}_2 \leq \varsigma_{k+1}\sbr{\hat{M_2} - \hat{M_1} \otimes \hat{M_1} - \hat{\sigma}^2 \one } \\ &=\nbr{  \varsigma_{k+1}\sbr{\hat{M_2} - \hat{M_1} \otimes \hat{M_1} - \hat{\sigma}^2 \one } -  \varsigma_{k+1}\sbr{M_2 - M_1 \otimes M_1 - \sigma^2 \one } }_2 \\&\leq \nbr{\hat{M_2} - \hat{M_1} \otimes \hat{M_1} - \hat{\sigma}^2 \one - \rbr{M_2 - M_1 \otimes M_1 - \sigma^2 \one}}_2
\end{align}
, which yields
  \begin{align}
    \nbr{\hat{S}_{2,k}-S_2}_2& \leq \nbr{\hat{S}_{2,k} - \rbr{\hat{M_2} - \hat{M_1} \otimes \hat{M_1} - \hat{\sigma}^2 \one}}_2\nonumber\\
    &\,\,\,\,\,\, + \nbr{\hat{M_2} - \hat{M_1} \otimes \hat{M_1} - \hat{\sigma}^2 \one - \rbr{M_2 - M_1 \otimes M_1 - \sigma^2 \one}}_2\\
    &\leq 2 \rbr{\nbr{\hat{M}_2 - M_2}_2 + \nbr{\hat{M_1}-M_1}_2^2 + 2\nbr{M_1}_2 \nbr{\hat{M_1}-M_1}_2 + \abr{\hat{\sigma}^2  -\sigma^2} }\\
    &\leq 4 \rbr{\nbr{\hat{M}_2 - M_2}_2 + \nbr{\hat{M_1}-M_1}_2^2 + 2\nbr{M_1}_2 \nbr{\hat{M_1}-M_1}_2}.
  \end{align} 
\end{proof}
 The inequalities for $\sigma$ can be used for bounding the tensors $S_2$, $S_3$ and $S_4$, which will be shown next, and the inequality for $S_{2,k}$ will be used in bounding whitened tensor in Section \ref{app:whiten}.

\begin{lemma}{(Accuracy of $S_2$, $S_3$ and $S_4$)}
For a fixed matrix $V \in \mathbb{R}^{d \times K}$
  \label{lem:spectralbounds}
  \begin{align}
    &\nbr{T\rbr{\hat{S}_2- S_2,V,V}}_2 \leq \nbr{T\rbr{\hat{M}_2 - M_2,V,V}}_2 + \nbr{T\rbr{ \hat{M}_1 - M_1,V} }_2^2\nonumber\\
    &\,\,\,\,\,\,\,\,\,\,\,\,\,\,\,\,\,\,\,\,\,\,\,\,\,\,\,\,\,\,\,\,\,\,\,\,\,\,\,\,\,\,\,\,\,\,\,\,\,\,\,\,\,\,\,\,\,\,\,\,\,\,\,+ 2 \nbr{T\rbr{M_1,V}}_2\nbr{T\rbr{ \hat{M}_1 - M_1 ,V} }_2+ \nbr{V}_2^2 \abr{\hat{\sigma}^2 - \sigma^2}\\
    &\nbr{T\rbr{\hat{S}_3-S_3,V,V,V}}_2\nonumber \\
    &\leq \nbr{T\rbr{\hat{M}_3-M_3, V,V,V}}_2 + \rbr{ \nbr{T\rbr{\hat{M}_1-M_1,V}}_2 +  \nbr{T\rbr{M_1,V}}_2}^3- \nbr{T\rbr{M_1,V}}_2^3\nonumber\\
    &\,\,\,\,\,\,+ 3\bigg(\nbr{T\rbr{ \hat{M}_1-M_1 ,V  }}_2  \nbr{T\rbr{ \hat{S}_2-S_2 ,V,V  }}_2 +  \nbr{T\rbr{ M_1,V  }}_2  \nbr{T\rbr{ \hat{S}_2-S_2 ,V,V  }}_2 \nonumber\\
    &\,\,\,\,\,\,+ \nbr{T\rbr{ \hat{M}_1-M_1 ,V  }}_2  \nbr{T\rbr{ S_2 ,V,V  }}_2 \bigg) +3\nbr{V}_2^2 \bigg( \abr{\hat{\sigma}^2-\sigma^2} \nbr{T\rbr{\hat{M}_1-M_1,V}}_2\nonumber\\
    &\,\,\,\,\,\, + \sigma^2\nbr{T\rbr{\hat{M}_1-M_1,V}}_2+ \abr{\hat{\sigma}^2-\sigma^2} \nbr{T\rbr{M_1,V}}_2  \bigg)\\
    &\nbr{T\rbr{\hat{S}_4- S_4,V,VV,V}}_2 \nonumber\\
    &\leq  \nbr{T\rbr{\hat{M}_4-M_4,V, V,V,V}}_2+ \rbr{ \nbr{T\rbr{\hat{M}_1-M_1,V}}_2 +  \nbr{T\rbr{M_1,V}}_2}^4 - \nbr{T\rbr{M_1,V}}_2^4\nonumber\\
    &\,\,\,\,\,\,+ 6\nbr{T\rbr{\hat{S}_2-S_2,V,V}}_2 \nbr{T\rbr{M_1,V}}_2^2 + 6 \bigg( \nbr{T\rbr{\hat{S}_2-S_2,V,V}}_2+\nbr{T\rbr{S_2,V,V}}_2 \bigg)\nonumber\\
    &\,\,\,\,\,\, \bigg( 2\nbr{T\rbr{M_1,V}}_2\nbr{T\rbr{\hat{M}_1-M_1,V}}_2+\nbr{T\rbr{\hat{M}_1-M_1,V}}_2^2 \bigg) 
+ 3\bigg(\nbr{T\rbr{\hat{S}_2-S_2,V,V}}_2^2\nonumber\\
    &\,\,\,\,\,\,+2\nbr{T\rbr{\hat{S}_2-S_2,V,V}}_2\nbr{T\rbr{S_2,V,V}}_2 \bigg) + 6\nbr{V}_2^2 \bigg(
\sigma^2\nbr{T\rbr{\hat{S}_2-S_2,V,V}}_2+\nonumber\\
    &\,\,\,\,\,\, +\abr{\hat{\sigma}^2-\sigma^2} \rbr{\nbr{T\rbr{\hat{S}_2-S_2,V,V}}_2+\nbr{T\rbr{S_2,V,V}}_2 } \bigg)+3 \abr{\hat{\sigma}^4 -  \sigma^4}\nbr{V}_2^4 \nonumber\\
    &\,\,\,\,\,\,+4 \bigg( \nbr{T\rbr{\hat{S}_3-S_3,V,V,V}}_2  \nbr{T\rbr{\hat{M}_1-M_1,V}}_2 +\nbr{T\rbr{M_1,V}}_2 \nbr{T\rbr{\hat{S}_3-S_3,V,V,V}}_2  \nonumber\\
    &\,\,\,\,\,\,\,+  \nbr{T\rbr{S_3,V,V,V}}_2  \nbr{T\rbr{\hat{M}_1-M_1,V}}_2
    \bigg) 
  \end{align}
\end{lemma}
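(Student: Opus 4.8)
The plan is to establish each of the three bounds by expanding the defining equation of the symmetric tensor $S_r$ (Equations \eq{eq:gaussian-tensor-2}--\eq{eq:gaussian-tensor-4}) together with its empirical counterpart $\hat{S}_r$, and then controlling the difference $\hat{S}_r - S_r$ term by term. Each $S_r$ is the raw moment $M_r$ minus a sum of symmetrized tensor products of lower-order quantities ($S_1 = M_1$, $S_2$, $S_3$, and the scalars $\sigma^2, \sigma^4$), so $\hat{S}_r - S_r$ is exactly the difference of these two polynomial expressions. The first move is therefore the triangle inequality for tensorial reductions stated just before Lemma \ref{lem:chaining}, which splits $\nbr{T\rbr{\hat{S}_r - S_r, V, \ldots, V}}_2$ into a sum over the individual defining terms, one contribution for each summand of the definition.

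The core computation for each such summand is a difference of products. Whenever a term has the form $A \otimes B \otimes \cdots$, I write each factor as $\hat{a} = a + \rbr{\hat{a}-a}$ and expand the full multilinear product $\prod_i \rbr{a_i + \delta_i} - \prod_i a_i$ into a sum of monomials, each carrying at least one difference factor $\delta_i$. Applying $T\rbr{\cdot, V, \ldots, V}$ and the Frobenius norm, together with the multiplicativity $\nbr{T\rbr{A \otimes B, V, \ldots}}_2 = \nbr{T\rbr{A, \ldots}}_2\,\nbr{T\rbr{B, \ldots}}_2$, turns each monomial into a product of the ``undisturbed'' scalars $\nbr{T\rbr{M_1,V}}_2$, $\nbr{T\rbr{S_2,V,V}}_2$, $\nbr{T\rbr{S_3,V,V,V}}_2$ and the error scalars $\nbr{T\rbr{\hat{M}_1 - M_1,V}}_2$, $\nbr{T\rbr{\hat{S}_2 - S_2,V,V}}_2$, $\nbr{T\rbr{\hat{S}_3 - S_3,V,V,V}}_2$. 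For identity factors from the noise corrections I use $\nbr{T\rbr{\one, V, \ldots, V}}_2 \leq \nbr{V}_2^{p}$ ($p$ the number of $V$'s), and I carry the coefficients $\abr{\hat{\sigma}^2 - \sigma^2}$ and $\abr{\hat{\sigma}^4 - \sigma^4}$ (already controlled in Lemma \ref{lem:sigbounds}) through the same telescoping. Collecting monomials with the symmetrization multiplicities ($\symm_3 \mapsto 3$, $\symm_4 \mapsto 4$, $\symm_6 \mapsto 6$) reproduces the stated right-hand sides; the cleanest illustration is the pure power term, where $\hat{M}_1^{\otimes r} - M_1^{\otimes r}$ contributes exactly $\rbr{\nbr{T\rbr{\hat{M}_1 - M_1,V}}_2 + \nbr{T\rbr{M_1,V}}_2}^r - \nbr{T\rbr{M_1,V}}_2^r$.

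Because $S_3$ is defined through $S_2$, and $S_4$ through both $S_2$ and $S_3$, I carry out the three estimates in this order, treating $\hat{S}_2 - S_2$ and $\hat{S}_3 - S_3$ as already-bounded intermediate quantities once I reach the higher orders; this is precisely the chaining structure of Lemma \ref{lem:chaining}. I expect the main obstacle to be purely combinatorial rather than analytic: the $S_4$ expansion contains the three-factor product $S_2 \otimes S_1 \otimes S_1$ (which generates seven surviving cross terms), the self-product $S_2 \otimes S_2$, and the two noise terms $\sigma^2 \symm_6\sbr{S_2 \otimes \one}$ and $\sigma^4 \symm_3\sbr{\one \otimes \one}$, so tracking which cross terms appear and with what multiplicity is where slips are easiest. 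The key simplification that keeps the final bound compact is to regroup the seven cross terms of $S_2 \otimes M_1 \otimes M_1$ as $\nbr{T\rbr{\hat{S}_2 - S_2,V,V}}_2\,\nbr{T\rbr{M_1,V}}_2^2$ plus $\rbr{\nbr{T\rbr{\hat{S}_2 - S_2,V,V}}_2 + \nbr{T\rbr{S_2,V,V}}_2}\rbr{2\nbr{T\rbr{M_1,V}}_2\,\nbr{T\rbr{\hat{M}_1 - M_1,V}}_2 + \nbr{T\rbr{\hat{M}_1 - M_1,V}}_2^2}$, and analogously to factor the noise corrections as $\rbr{\hat{\sigma}^2 - \sigma^2}$ times a residual plus $\sigma^2$ times a difference, which is exactly the grouping displayed in the statement.
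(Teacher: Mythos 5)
Your proposal matches the paper's own proof in both structure and substance: the paper likewise splits $\hat{S}_r - S_r$ via the triangle inequality over the defining terms, telescopes each product (e.g.\ $\hat{M}_1^{\otimes 3} - M_1^{\otimes 3}$ and $\hat{S}_1 \otimes \hat{S}_2 - S_1 \otimes S_2$) by writing $\hat{a} = a + (\hat{a}-a)$, bounds each monomial by products of the reduced norms with $\nbr{V}_2^p$ for identity factors and the $\abr{\hat{\sigma}^2-\sigma^2}$, $\abr{\hat{\sigma}^4-\sigma^4}$ bounds from Lemma~\ref{lem:sigbounds}, and proceeds in the chained order $S_2 \to S_3 \to S_4$. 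Your combinatorial accounting of the cross terms (including the regrouping that produces the displayed right-hand sides) is consistent with the statement, so the proposal is correct and essentially identical to the paper's argument.
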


\begin{proof}
To bound the second order tensor, we use the inequality for bounding $\hat{\sigma}$ in Lemma \ref{lem:sigbounds} and get
  \begin{align}
    &\nbr{T\rbr{\hat{S}_2, V,V} - T\rbr{S_2,V,V}}_2 \nonumber \\
    & \leq \nbr{T\rbr{\hat{M}_2 - M_2,V,V}}_2 + \nbr{T\rbr{ (\hat{M}_1 - M_1) \otimes (\hat{M}_1 - M_1) ,V,V} }_2 \\
    &\,\,\,\,\,\,+ 2\nbr{T\rbr{ M_1 \otimes (\hat{M}_1-M_1),V,V} }_2 + \nbr{V}_2^2 \abr{\hat{\sigma}^2 - \sigma^2}\nonumber\\
    & \leq \nbr{T\rbr{\hat{M}_2 - M_2,V,V}}_2 + \nbr{T\rbr{ \hat{M}_1 - M_1 ,V,V} }_2^2 + 2 \nbr{T\rbr{M_1,V}}_2\nbr{T\rbr{ \hat{M}_1 - M_1 ,V,V} }_2 \nonumber\\
    &\,\,\,\,\,\,+ \nbr{V}_2^2 \abr{\hat{\sigma}^2 - \sigma^2} .
  \end{align}
Similarly, for $\hat{S}_3$, we have that
  \begin{align}
    &\nbr{T\rbr{\hat{S}_3, V,V,V} - T\rbr{S_3,V,V,V}}_2 \nonumber\\
    & \leq \nbr{T\rbr{\hat{M}_3-M_3, V,V,V}}_2 + \nbr{ T\rbr{ \hat{M}_1 \otimes \hat{M}_1 \otimes \hat{M}_1 - M_1 \otimes M_1 \otimes M_1,V,V,V} }_2\nonumber\\
    &\,\,\,\,\,\,+3 \nbr{T\rbr{\hat{S}_1\otimes \hat{S}_2-S_1 \otimes S_2, V,V,V} }_2 +3 \nbr{T\rbr{\rbr{\hat{\sigma}^2\hat{M}_1 - \sigma^2M_1} \otimes \one,V,V,V}}_2. 
  \end{align}
  Note that the second term can be written as
  \begin{align}
  \label{eq:m1m1m1}
    &\hat{M}_1 \otimes \hat{M}_1 \otimes \hat{M}_1 - M_1 \otimes M_1 \otimes M_1 \nonumber\\
    &= \rbr{\hat{M_1}-M_1} \otimes \rbr{\hat{M_1}-M_1} \otimes \rbr{\hat{M_1}-M_1}\nonumber \\
    &\,\,\,\,\,\,+ \symm_3 \sbr{ M_1\otimes \rbr{\hat{M_1}-M_1} \otimes \rbr{\hat{M_1}-M_1}} + \symm_3 \sbr{ M_1 \otimes M_1\otimes \rbr{\hat{M_1}-M_1} }.
  \end{align}
Using the same expansion trick, the third term becomes
  \begin{align}
  \label{eq:s2s2}
    \hat{S}_1\otimes \hat{S}_2-S_1 \otimes S_2 = (\hat{S}_1-S_1) \otimes(\hat{S}_2-S_2) + S_1 \otimes(\hat{S}_2-S_2) + (\hat{S}_1-S_1) \otimes S_2.
  \end{align}
Using triangle inequality, the bound for Equation \eqref{eq:m1m1m1} is
  \begin{align}
    &\nbr{ T\rbr{ \hat{M}_1 \otimes \hat{M}_1 \otimes \hat{M}_1 - M_1 \otimes M_1 \otimes M_1,V,V,V} }_2 \nonumber\\
    & \leq \nbr{T\rbr{\hat{M}_1-M_1,V}}_2^3 +3 \nbr{T\rbr{M_1,V}}_2\nbr{T\rbr{\hat{M}_1-M_1,V}}_2^2 \nonumber\\
    &\,\,\,\,\,\,+ 3 \nbr{T\rbr{M_1,V}}_2^2\nbr{T\rbr{\hat{M}_1-M_1,V}}_2,
    \end{align}
    and the bound for Equation \eqref{eq:s2s2} is
    \begin{align}
    & \nbr{T\rbr{\hat{S}_1\otimes \hat{S}_2-S_1 \otimes S_2, V,V,V} }_2 \nonumber\\
    &\leq  \nbr{T\rbr{ \hat{S}_1-S_1,V  }}_2  \nbr{T\rbr{ \hat{S}_2-S_2 ,V,V  }}_2 +  \nbr{T\rbr{ S_1,V  }}_2  \nbr{T\rbr{ \hat{S}_2-S_2 ,V,V  }}_2 \nonumber\\
    &\,\,\,\,\,\,+ \nbr{T\rbr{ \hat{S}_1-S_1 ,V  }}_2  \nbr{T\rbr{ S_2 ,V,V  }}_2\\
    & \nbr{T\rbr{\rbr{\hat{\sigma}^2\hat{M}_1 - \sigma^2M_1} \otimes \one,V,V}}_2 \nonumber\\
    &\leq  \nbr{V}_2^2  \rbr{ \abr{\hat{\sigma}^2-\sigma^2} \nbr{T\rbr{\hat{M}_1-M_1,V}}_2 +\sigma^2\nbr{T\rbr{\hat{M}_1-M_1,V}}_2 + \abr{\hat{\sigma}^2-\sigma^2} \nbr{T\rbr{M_1,V}}_2 }.
  \end{align}
By combining all the inequalities, we get the bound for $S_3$. The bound for $S_4$ can be derived by similar procedure. 
\end{proof}
To complete the bounds, we need to examine the bounds for the whitening matrix and also the whitened tensors.

\subsection{Properties with whitening matrix}
\label{app:whiten}

Note that in Algorithm \ref{alg:eca} we have $W_3:= T\rbr{S_3,W,W,W}$, $W_4:= T\rbr{S_4,W,W,W,W}$. To bound $\nbr{W_3}$ and $\nbr{W_4}$, we use the fact stated in Section \ref{sec:eigenvalue} that these tensor are diagonalized so that finding the norm is actually equivalent to finding the largest eigenvalue of $T(S_3, W, W, W)$ and $T(S_4, W, W, W, W)$, respectively. Note that in Algorithm $\ref{alg:eca}$, the first $K_1$ eigenvectors and their corresponding eigenvalues are solved by conducting tensor decomposition on $W_3$, while the others are extracted from $W_4$. With Equation \eq{eq:lambda-s3} and \eq{eq:lambda-s4}, 
  \begin{align}
  \lambda_i = 
    \begin{cases}
      \frac{-2\pi_i+1}{\sqrt{\pi_i-\pi_i^2}}  \quad \phantom{ \frac{6 \pi_i^2-6 \pi_i+1}{\pi_i-\pi_i^2}}\text{if} \,\, i \leq K_1\\
      \frac{6 \pi_i^2-6 \pi_i+1}{\pi_i-\pi_i^2}  \quad \phantom{\frac{-2\pi_i+1}{\sqrt{\pi_i-\pi_i^2}}}\text{otherwise}.
\end{cases}
\end{align}
As we have mentioned previously, eigenvalues of $S_3$ degenerate to zero at the value of $\pi_i = 0.5$ while eigenvalues of $S_4$ degenerate to zero at the value of $\pi_i \approx 0.2, 0.8$. So here we define thresholds, $\pi_{Th_{up}}$ and $\pi_{Th_{down}}$, such that
  \begin{align}
    \frac{-2\pi_{Th_{down}}+1}{\sqrt{\pi_{Th_{down}}-\pi_{Th_{down}}^2}} = 1, \,\,\, \frac{-2\pi_{Th_{up}}+1}{\sqrt{\pi_{Th_{up}}-\pi_{Th_{up}}^2}} = -1.
  \end{align}
In other words, we solve the latent factors by the third-order moments if $\pi_i < \pi_{Th_{down}}$ or $\pi_i >\pi_{Th_{up}}  $, otherwise we turn to the fourth-order moments. Since $\lambda_i$ is a symmetric function of $\pi_i$ on the $\pi_i = 0.5$ axis for $i \in [K]$, we set $\pi_{Th} = \pi_{Th_{down}}$ to simplify the proof. Here we have
  \begin{align}
	1=\abr{\frac{-2\pi_{Th}+1}{\sqrt{\pi_{Th}-\pi_{Th}^2}} }&\leq \abr{\lambda_i } \leq \frac{-2\pi_{min}+1}{\sqrt{\pi_{min}-\pi_{min}^2}} \quad \,\,\, \text{if}\,\,  i \leq K_1\\
         -2  &\leq \lambda_i \leq   \frac{6 \pi_{Th}^2-6 \pi_{Th}+1}{\pi_{Th}-\pi_{Th}^2} \approx -1\quad\,\, \text{otherwise},
  \end{align}
where $\pi_{min} = \argmax_{i \in [K_1]} \abr{\pi_i - 0.5}$. Since $W_3$ and $W_4$ are diagonalized tensor, we have that 
\begin{align}
\label{eq:W3bounds}
\nbr{ W_3}_2  \leq \frac{-2\pi_{min}+1}{\sqrt{\pi_{min}-\pi_{min}^2}},\,\,\, \nbr{W_4}_2 \leq 2.
\end{align}
Next, in order to bound $\sbr{\hat{W}_i - W_i}$, we need to consider the bounds using empirical whitening matrix.
Let $\hat{W}$ denotes the empirical whitening matrix in our algorithm. Here we define $W := \hat{W}(\hat{W}S_2\hat{W})^{-\frac{1}{2}}$ and $\epsilon_{S_2} := \nbr{\hat{S}_{2,k} - S_2}_2/\varsigma_k \sbr{S_2}$ in order to use the bounds for whitening matrix stated in lemma 10 in \cite{HsuKak12}.
\begin{lemma}{(Lemma 10 in \cite{HsuKak12})} 
  \label{lem:whiten}
Assume $\epsilon_{S_2} \leq 1/3.$ We have
  \begin{align*}
    &1.\,\, W^\top S_2W=I, \,\,\,\,\,\, 2.\,\, \nbr{\hat{W}}_2 \leq \frac{1}{\sqrt{(1-\epsilon_{S_2}) \varsigma \sbr{S_2} }},\\
    &3.\,\, \nbr{\rbr{\hat{W}S_2\hat{W}}^{1/2}-I }_2 \leq 1.5 \epsilon_{S_2},   \,\,\,\,\,\,\nbr{\rbr{\hat{W}S_2\hat{W}}^{-1/2}-I }_2 \leq 1.5 \epsilon_{S_2}\\
    &  \,\,\,\,\,\,\nbr{(\hat{W})^\top A\mathrm{diag}(\pi-\pi^2)^{1/2}}_2 \leq \sqrt{1+1.5 \epsilon_{M_2}}, \\
    & \,\,\,\,\,\, \nbr{(\hat{W}-W)^\top A\mathrm{diag}(\pi-\pi^2)^{1/2}}_2 \leq \sqrt{1+1.5 \epsilon_{M_2}}.
  \end{align*}

\end{lemma}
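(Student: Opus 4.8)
Since Lemma~\ref{lem:whiten} is quoted from \cite{HsuKak12}, the plan is to reproduce their argument in the present notation, organised around the single perturbation object $M := \hat{W}^\top S_2 \hat{W}$. Write $B := A\,\mathrm{diag}\rbr{\pi-\pi^2}^{1/2}$, so that $S_2 = BB^\top$, and recall that $\hat{W}$ is constructed to whiten the truncated empirical tensor, i.e.\ $\hat{W}^\top \hat{S}_{2,k}\hat{W} = I$, with $\hat{W} = \hat{U}\hat{\Sigma}^{-1/2}$ from the rank-$k$ SVD $\hat{S}_{2,k} = \hat{U}\hat{\Sigma}\hat{U}^\top$. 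The first claim is immediate from the definition $W := \hat{W}\rbr{\hat{W}^\top S_2\hat{W}}^{-1/2}$: substituting gives $W^\top S_2 W = M^{-1/2}MM^{-1/2} = I$, which is well defined once $M \succ 0$, a fact the subsequent bounds guarantee. For the second claim I would use $\nbr{\hat{W}}_2 = \varsigma_k\sbr{\hat{S}_{2,k}}^{-1/2}$, and then Weyl's inequality together with $\nbr{\hat{S}_{2,k}-S_2}_2 = \epsilon_{S_2}\,\varsigma_k\sbr{S_2}$ to obtain $\varsigma_k\sbr{\hat{S}_{2,k}} \geq \rbr{1-\epsilon_{S_2}}\varsigma_k\sbr{S_2}$, which is exactly the stated bound.

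The heart of the argument is the third claim. Since $\hat{W}^\top \hat{S}_{2,k}\hat{W} = I$, one has $M - I = \hat{W}^\top\rbr{S_2 - \hat{S}_{2,k}}\hat{W}$, so combining claim 2 with the definition of $\epsilon_{S_2}$ and submultiplicativity of the spectral norm yields
\begin{align*}
\nbr{M-I}_2 \leq \nbr{\hat{W}}_2^2\,\nbr{S_2 - \hat{S}_{2,k}}_2 \leq \frac{\epsilon_{S_2}}{1-\epsilon_{S_2}} \leq \tfrac{3}{2}\epsilon_{S_2},
\end{align*}
the last step using $\epsilon_{S_2}\leq 1/3$. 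It then remains to transfer this to $M^{\pm 1/2}$. Diagonalising $M$ and working eigenvalue by eigenvalue, the bounds reduce to the scalar estimates $\abr{\sqrt{\mu}-1}\leq\abr{\mu-1}$ and $\abr{\mu^{-1/2}-1}\leq\abr{\mu-1}$, valid for every eigenvalue $\mu$ of $M$ since these lie in $\sbr{1/2,\,3/2}$; a short calculus check confirms both hold throughout this range, whence $\nbr{M^{1/2}-I}_2 \leq \tfrac{3}{2}\epsilon_{S_2}$ and $\nbr{M^{-1/2}-I}_2 \leq \tfrac{3}{2}\epsilon_{S_2}$.

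Finally, claims 4 and 5 follow by exploiting the orthogonality of $W^\top B$. Because $W$ whitens $S_2 = BB^\top$, the matrix $W^\top B$ satisfies $\rbr{W^\top B}\rbr{W^\top B}^\top = W^\top S_2 W = I$, so $\nbr{W^\top B}_2 = 1$. For claim 4 I would then compute directly $\nbr{\hat{W}^\top B}_2^2 = \nbr{\hat{W}^\top S_2\hat{W}}_2 = \nbr{M}_2 \leq 1 + \nbr{M-I}_2 \leq 1 + \tfrac{3}{2}\epsilon_{S_2}$, which gives the stated $\sqrt{1+\tfrac{3}{2}\epsilon_{S_2}}$ form (the label $\epsilon_{M_2}$ being $\epsilon_{S_2}$). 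For claim 5 I would use $\hat{W} = WM^{1/2}$, so that $\rbr{\hat{W}-W}^\top B = \rbr{M^{1/2}-I}W^\top B$ and hence $\nbr{\rbr{\hat{W}-W}^\top B}_2 \leq \nbr{M^{1/2}-I}_2 \leq \tfrac{3}{2}\epsilon_{S_2}$; this $O\rbr{\epsilon_{S_2}}$ bound is sharper than, and in particular implies, the recorded right-hand side. The main obstacle is the third claim: the matrix square root is not globally Lipschitz, so controlling $M^{\pm 1/2}-I$ requires keeping the eigenvalues of $M$ bounded away from $0$, which is precisely the role of the hypothesis $\epsilon_{S_2}\leq 1/3$; everything else is bookkeeping with Weyl's inequality and submultiplicativity, exactly as in \cite{HsuKak12}.
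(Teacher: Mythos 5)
Your proposal is correct, but note that the paper itself contains no proof of this statement: Lemma~\ref{lem:whiten} is imported verbatim by citation from \cite{HsuKak12}, so there is nothing internal to compare against. Your reconstruction is essentially the standard argument behind that cited result, and every step checks out: claim 1 is immediate from the definition $W = \hat{W}\rbr{\hat{W}^\top S_2\hat{W}}^{-1/2}$; claim 2 follows from Weyl's inequality applied to $\hat{S}_{2,k}$; the key identity $M - I = \hat{W}^\top\rbr{S_2-\hat{S}_{2,k}}\hat{W}$ together with $\nbr{\hat{W}}_2^2 \leq \rbr{(1-\epsilon_{S_2})\varsigma_k\sbr{S_2}}^{-1}$ gives $\nbr{M-I}_2 \leq \epsilon_{S_2}/(1-\epsilon_{S_2}) \leq 1.5\,\epsilon_{S_2}$, and your scalar estimates $\abr{\sqrt{\mu}-1}\leq\abr{\mu-1}$ and $\abr{\mu^{-1/2}-1}\leq\abr{\mu-1}$ are valid on $\sbr{1/2,3/2}$, which is exactly where the hypothesis $\epsilon_{S_2}\leq 1/3$ confines the spectrum of $M$. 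You also correctly treat the paper's typographical slips (the missing transposes in $\hat{W}S_2\hat{W}$, the stray $\epsilon_{M_2}$, and the unsubscripted $\varsigma\sbr{S_2}$), and your bound $\nbr{(\hat{W}-W)^\top B}_2 \leq 1.5\,\epsilon_{S_2}$ in claim 5 is in fact sharper than the $\sqrt{1+1.5\,\epsilon_{S_2}}$ recorded in the statement, so it implies what is claimed.
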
 

Using Lemma \ref{lem:whiten}, we can complete the bounds for empirical whitened tensors.
\begin{lemma}{}
  \label{lem:Wbounds}
Assume $\epsilon_{S_2} \leq 1/3.$ Then
  \begin{align*}
    &\nbr{\hat{W}_3- W_3  }_2\leq \nbr{T\rbr{S_3- \hat{S}_3,\hat{W}, \hat{W},\hat{W}} }_2 + 3  \frac{-2\pi_{min}+1}{\sqrt{\pi_{min}-\pi_{min}^2}} \\
    &\nbr{\hat{W}_4- W_4 }_2\leq \nbr{T\rbr{S_4- \hat{S}_4,\hat{W}, \hat{W},\hat{W},\hat{W}} }_2 + 10
  \end{align*}
\end{lemma}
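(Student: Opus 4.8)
The plan is to separate the total error into two independent sources: the error coming from the symmetric tensor $\hat{S}_r$ itself, and the error coming from using the empirical whitening matrix $\hat{W}$ in place of the exact whitening map $W := \hat{W}(\hat{W}S_2\hat{W})^{-1/2}$ of Lemma \ref{lem:whiten}. For the third-order quantity I would insert the intermediate tensor $T(S_3,\hat{W},\hat{W},\hat{W})$ and apply the triangle inequality,
\begin{align*}
\nbr{\hat{W}_3-W_3}_2 \leq \nbr{T(\hat{S}_3-S_3,\hat{W},\hat{W},\hat{W})}_2 + \nbr{T(S_3,\hat{W},\hat{W},\hat{W})-T(S_3,W,W,W)}_2 .
\end{align*}
The first summand is exactly the term $\nbr{T(S_3-\hat{S}_3,\hat{W},\hat{W},\hat{W})}_2$ appearing in the claim, since the norm is insensitive to the overall sign. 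It therefore remains to show that the pure whitening-error term is bounded by $3\frac{-2\pi_{min}+1}{\sqrt{\pi_{min}-\pi_{min}^2}}$.

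To control the whitening-error term I would telescope over the three tensor modes, replacing $\hat{W}$ by $W$ one mode at a time,
\begin{align*}
T(S_3,\hat{W},\hat{W},\hat{W})-T(S_3,W,W,W) = T(S_3,\hat{W}-W,\hat{W},\hat{W})+T(S_3,W,\hat{W}-W,\hat{W})+T(S_3,W,W,\hat{W}-W),
\end{align*}
and bound each summand using the colored factorization of $S_3$ established in Section \ref{sec:eigenvalue}. Writing $S_3=\sum_v \lambda_v\,\tilde{\phi}_v^{\otimes 3}$ with $\tilde{\phi}_v=\sqrt{C_{2,v}}\,\phi_v$ and $\lambda_v=C_{3,v}C_{2,v}^{-3/2}$, we have $T(S_3,A_1,A_2,A_3)=T(\mathrm{diag}(\lambda),A_1^\top\tilde{\Phi},A_2^\top\tilde{\Phi},A_3^\top\tilde{\Phi})$, so the sub-multiplicative estimate $\nbr{T(\mathrm{diag}(\lambda),B_1,B_2,B_3)}_2\leq\nbr{\lambda}_\infty\prod_i\nbr{B_i}_2$ reduces each term to a product of $\nbr{\lambda}_\infty$ and the factors $\nbr{W^\top\tilde{\Phi}}_2=1$, $\nbr{\hat{W}^\top\tilde{\Phi}}_2$, and $\nbr{(\hat{W}-W)^\top\tilde{\Phi}}_2$. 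Because $W_3$ is orthogonally decomposable, $\nbr{\lambda}_\infty=\nbr{W_3}_2$, which is bounded by $\frac{-2\pi_{min}+1}{\sqrt{\pi_{min}-\pi_{min}^2}}$ via \eqref{eq:W3bounds}, while the last two colored factors are each at most $\sqrt{1+1.5\epsilon_{M_2}}=O(1)$ by the refined bounds of Lemma \ref{lem:whiten} under $\epsilon_{S_2}\leq 1/3$. Summing the three telescoped terms yields the additive constant $3\,\nbr{W_3}_2$.

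The fourth-order bound follows the identical recipe, now telescoping over four modes, invoking $\nbr{W_4}_2\leq 2$ from \eqref{eq:W3bounds}, and using the analogue of \eqref{eq:lambda-s4}; the four whitening-error terms, each carrying the $O(1)$ colored factors of Lemma \ref{lem:whiten}, combine to the stated additive constant $10$. \textbf{The main obstacle} is that $\hat{W}$ and $W$ individually have large operator norm, growing like $\varsigma_K\sbr{S_2}^{-1/2}$, so a naive split $\nbr{(\hat{W}-W)^\top\tilde{\Phi}}_2\leq\nbr{\hat{W}-W}_2\nbr{\tilde{\Phi}}_2$ is hopelessly lossy. The crux is that coloring by the signal factor $\tilde{\Phi}=\Phi\,\mathrm{diag}(\sqrt{\pi-\pi^2})$ cancels this blow-up: Lemma \ref{lem:whiten} ensures the whitening maps stay $O(1)$ when restricted to the range of $\tilde{\Phi}$, and this must be coupled with the orthogonal decomposability of $W_3$ and $W_4$ so that $\nbr{\lambda}_\infty$ reduces to the clean eigenvalue bounds of \eqref{eq:W3bounds}.
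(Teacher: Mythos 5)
Your skeleton is the same as the paper's: the triangle-inequality split isolating $\nbr{T\rbr{S_r-\hat{S}_r,\hat{W},\ldots,\hat{W}}}_2$, then mode-by-mode telescoping of $\hat{W}$ into $W$, with the eigenvalue bounds \eqref{eq:W3bounds} controlling the whitened tensor. But there is a genuine gap in the last step of your accounting. You bound the difference factor by $\nbr{(\hat{W}-W)^\top\tilde{\Phi}}_2 \leq \sqrt{1+1.5\epsilon_{M_2}}$, an $O(1)$ quantity, and then assert that the three telescoped terms sum to $3\nbr{W_3}_2$. They do not: with your bounds, and $\epsilon_{S_2}=1/3$, the three terms cost at most $\nbr{\lambda}_\infty\rbr{1.5^{3/2}+1.5+1.5^{1/2}}\approx 4.56\,\nbr{W_3}_2$, which exceeds the claimed $3\nbr{W_3}_2$; for the fourth order the four terms give roughly $6.81\cdot\nbr{W_4}_2\approx 13.6>10$. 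If the factor carrying $\hat{W}-W$ is only $O(1)$, the stated constants $3$ and $10$ cannot be recovered, because nothing in the sum is small.

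The missing ingredient, and the route the paper takes, is that this factor is in fact of order $\epsilon_{S_2}$. Since $W=\hat{W}\rbr{\hat{W}^\top S_2\hat{W}}^{-1/2}$, one has $\hat{W}-W = W\sbr{\rbr{\hat{W}^\top S_2\hat{W}}^{1/2}-\one}$, hence in your colored notation $(\hat{W}-W)^\top\tilde{\Phi} = \sbr{\rbr{\hat{W}^\top S_2\hat{W}}^{1/2}-\one}W^\top\tilde{\Phi}$, whose norm is at most $1.5\epsilon_{S_2}\leq 1/2$ by part 3 of Lemma \ref{lem:whiten} (the bound you cited, $\sqrt{1+1.5\epsilon_{M_2}}$, is valid but too lossy here). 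Each telescoped term then carries this small multiplier, and the whitening-error sum is bounded by
\begin{align*}
\nbr{T\rbr{S_r,W,\ldots,W}}_2\cdot\rbr{1.5\epsilon_{S_2}}\cdot\sum_{j=0}^{r-1}\rbr{1+1.5\epsilon_{S_2}}^j
\leq
\begin{cases}
\nbr{W_3}_2\cdot 0.5\cdot 4.75\leq 3\nbr{W_3}_2 & r=3,\\
2\cdot 0.5\cdot 8.125\leq 10 & r=4,
\end{cases}
\end{align*}
which is exactly the paper's computation. With this single correction your argument becomes essentially identical to the paper's proof: your colored factors $W^\top\tilde{\Phi}$ and $\hat{W}^\top\tilde{\Phi}=\rbr{\hat{W}^\top S_2\hat{W}}^{1/2}W^\top\tilde{\Phi}$ are the paper's factors $\one$ and $\rbr{\hat{W}^\top S_2\hat{W}}^{1/2}$ composed with the orthogonal matrix $W^\top\tilde{\Phi}$, so the two decompositions coincide term by term.
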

\begin{proof}
Here we only show the second inequality, the first one can be derived with similar procedure. 
  \begin{align}
    \nbr{\hat{W}_4- W_4 }_2 &= \nbr{T\rbr{S_4,W, W,W,W} -T\rbr{ \hat{S}_4,\hat{W},\hat{W}, \hat{W},\hat{W}} }_2\nonumber \\
    &\leq \nbr{T\rbr{S_4,W, W,W,W} -T\rbr{S_4,\hat{W},\hat{W}, \hat{W},\hat{W}} }_2 + \nbr{T\rbr{S_4-\hat{S}_4,\hat{W},\hat{W}, \hat{W},\hat{W}} }_2
  \end{align}
For the first term, using Lemma \ref{lem:whiten} and Equation \eqref{eq:W3bounds}, we have:
  \begin{align}
    &\nbr{T\rbr{S_4,W, W,W,W} -T\rbr{S_4,\hat{W},\hat{W}, \hat{W},\hat{W}} }_2\nonumber\\
     &\leq  \nbr{T\rbr{S_4,\hat{W}-W,\hat{W}, \hat{W},\hat{W}} }_2 +\nbr{T\rbr{S_4,W,\hat{W}-W, \hat{W},\hat{W}} }_2\nonumber\\ 
  &\,\,\,\,\,\,+\nbr{T\rbr{S_4,W,W, \hat{W}-W,\hat{W}} }_2+\nbr{T\rbr{S_4,W,W, W,\hat{W}-W} }_2\\
 &\leq \nbr{T\rbr{S_4,W,W,W,W} }_2\nbr{(\hat{W}^\top S_2\hat{W})^{1/2}-\one }\rbr{\nbr{(\hat{W}^\top S_2\hat{W})^{1/2}}_2^3+\cdots+\nbr{(\hat{W}^\top S_2\hat{W})^{1/2}}_2^0}\nonumber\\
 &\leq \nbr{T\rbr{S_4,W,W,W,W} }_2 \cdot (1.5 \epsilon_{S_2}) \rbr{(1+1.5 \epsilon_{S_2})^3+\cdots(1+1.5 \epsilon_{S_2})+1 } \leq 5 \cdot 2 = 10
  \end{align}
\end{proof}
\subsection{Reconstruction analysis}
Before putting everything together, we utilize the eigendecomposition analysis in Appendix C.7 of \cite{HsuKak12}. First, we consider the case where $A_i$ is recovered by applying tensor decomposition on $W_3$, i.e., for $i \leq K_1$. Note that in Algorithm \ref{alg:eca}
  \begin{align}
    Z_i = \frac{\pi_i - 3\pi_i^2 + 2\pi_i^3}{\rbr{-\pi_i^2 + \pi_i} \nbr{W_3}}  
          = \frac{-2 \pi_i + 1}{ \lambda_i} =   \sqrt{\pi_i-\pi_i^2}.
  \end{align}
Similarly, for $i \in \cbr{K_1, \cdots, K}$,
  \begin{align}
    Z_i = \frac{6\pi_i^2 - 6\pi_i + 1}{\sqrt{-\pi_i^2 + \pi_i} \nbr{ W_4}}  
         =  \frac{6\pi_i^2 - 6\pi_i + 1}{\sqrt{-\pi_i^2 + \pi_i} \lambda_i} =   \sqrt{\pi_i-\pi_i^2}.
  \end{align}
Following the approached in \cite{HsuKak12}, define 
\begin{align*}
\gamma_{S_3} &:= \frac{1}{2\max_{i \in [K_1]} \sqrt{  \rbr{\pi_i-\pi_i^2} }\sqrt{eK}\binom{K+1}{2}  },
 \,\,\,\epsilon_{S_3} := \frac{\nbr{W_3 - \hat{W}_3}}{  \gamma_{S_3}}
,\\
 \gamma_{S_4} &:= \frac{1}{2\max_{i > K_1 }\sqrt{  \rbr{\pi_i-\pi_i^2} }\sqrt{eK}  \binom{K+1}{2} },\,\,\,
  \epsilon_{S_4} := \frac{\nbr{W_4 - \hat{W}_4}}{ \gamma_{S_4}}
\end{align*}
 We derive the overall guaranteed bounds using the same approach in \cite{HsuKak12}. Before stating the inequality, we define
\begin{align*}
    \kappa\sbr{S_2} &:= \varsigma_1\sbr{S_2} / \varsigma_K \sbr{S_2},\\
    \epsilon_{0,i} &:= \begin{cases} \rbr{5.5 \epsilon_{S_2} + 7 \epsilon_{S_3}}/\sqrt{\pi_{min}-\pi_{min}^2 } \,\,\,\,\,\,& \text{if}\,\,\, i \in [K_1]\\
                         13.75 \epsilon_{S_2} + 17.5 \epsilon_{S_4}   & \text{otherwise} \end{cases},\\
\epsilon_{1,i} &:= \begin{cases} \rbr{\rbr{6.875 \kappa \sbr{S_2}^{1/2}+2 } \epsilon_{S_2} + \rbr{ 8.75 \kappa \sbr{S_2}^{1/2} + \gamma_{S_3} \sqrt{\pi_{min}-\pi_{min}^2 }}\epsilon_{S_3}}\\
             \,\,\,\,\,\,\,\,\,\,\,\,\,\,\,\,\,\,\,\,\,\,\,\,\,\,\,\,\,\,\,\,\,\,\,\,\,\,\,\,\,\,\,\,\,\,\,\,\,\,\,\,\,\,\,\,\,\,\,\,\,\,\,\,\,\,\,\,\,\,\,\,\,\,\,\,\,\,\,\,\,\,\,\,\,\,\,\,\,\,\,\,\,\,\,\,\,\,\,\,\,\,\,\,\,\,\,\,\,\,\,\ /\rbr{\gamma_{S_3} \sqrt{\pi_{min}-\pi_{min}^2 }}
& \text{if}\,\,i \in [K_1]\\
                          2.5 \rbr{\rbr{6.875 \kappa \sbr{S_2}^{1/2}+2 } \epsilon_{S_2} + \rbr{ 8.75 \kappa \sbr{S_2}^{1/2} + 0.4 \gamma_{S_4} } \epsilon_{S_4}}/\gamma_{S_4}    & \text{otherwise} \end{cases},
\end{align*}
where $\pi_{min} = \argmax_{i \in [K_1]} \abr{\pi_i - 0.5}$ as we have defined previously.
\begin{lemma}{(Reconstruction Accuracy)}
  \label{lem:overall}
 Assume $\epsilon_{S_2} \leq 1/3$, $\epsilon_{S_3} \leq 1/4$ and $\epsilon_{S_4} \leq 1/4$, and $\epsilon_1 \leq 1/3$. There exists a permutation $\pi$ on $[K]$ such that
  \begin{align*}
      \nbr{\Phi_{\pi(i)} - \hat{\Phi}_i } \leq 3 \nbr{\Phi_{\pi(i)} }_2 \epsilon_{1,i} + 2 \nbr{S_2}_2^{1/2} \epsilon_{0,i},\,\,\,\,\,\,   \forall i \in [K]
  \end{align*}
\end{lemma}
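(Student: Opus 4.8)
The plan is to port the eigendecomposition perturbation analysis of \cite{HsuKak12} (Appendix~C.7) to the present setting, where the only structural differences are that eigenpairs are recovered from two whitened tensors---from $W_3$ when $\abr{\lambda_i}>1$ and from the deflated $W_4$ otherwise---and that our normalization constant is $Z_i=\sqrt{\pi_i-\pi_i^2}$. First I would recall the reconstruction identity of Section~\ref{sec:eigenvalue}, namely $\Phi_{\pi(i)}=\frac{1}{Z_i}(W^{\dag})^\top v_i$, and write the reconstruction error as
\begin{align*}
  \hat{\Phi}_i-\Phi_{\pi(i)} = \frac{1}{\hat{Z}_i}(\hat{W}^{\dag})^\top\hat{v}_i-\frac{1}{Z_i}(W^{\dag})^\top v_i .
\end{align*}
I would split this into three contributions: the eigenvector error $\hat{v}_i-v_i$, the whitening error $(\hat{W}^{\dag}-W^{\dag})^\top$, and the normalization error $\hat{Z}_i^{-1}-Z_i^{-1}$, the last being a function of the eigenvalue error since $Z_i$ is determined by $\lambda_i$.

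The core step is to control the eigenpair errors. Using Lemma~\ref{lem:Wbounds} to bound $\nbr{\hat{W}_3-W_3}$ and $\nbr{\hat{W}_4-W_4}$ and normalizing by the gap parameters $\gamma_{S_3},\gamma_{S_4}$ to form $\epsilon_{S_3},\epsilon_{S_4}$, I would invoke the robust tensor power method guarantee: under $\epsilon_{S_3}\le 1/4$ (resp.\ $\epsilon_{S_4}\le 1/4$) the method returns, after a suitable permutation $\pi$, eigenpairs with $\nbr{\hat{v}_i-v_i}=O(\epsilon_{S_3})$ and $\abr{\hat{\lambda}_i-\lambda_i}=O(\epsilon_{S_3})$, and analogously with $\epsilon_{S_4}$. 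Because $Z_i$ is a fixed smooth function of $\lambda_i$ kept away from degeneracy by the thresholding $\abr{\lambda_i}>1$, the eigenvalue error translates into a controlled relative error in $\hat{Z}_i^{-1}$; the factor $1/\sqrt{\pi_{min}-\pi_{min}^2}$ appearing in $\epsilon_{0,i},\epsilon_{1,i}$ for the $W_3$ case is exactly the worst-case amplification $1/Z_i$ over the indices recovered there.

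I would then feed the whitening bounds of Lemma~\ref{lem:whiten} into the decomposition. The term $(\hat{W}^{\dag}-W^{\dag})^\top v_i$ is governed by $\nbr{(\hat{W}^\top S_2\hat{W})^{\pm 1/2}-\one}\le 1.5\,\epsilon_{S_2}$, and since $\nbr{W^{\dag}}$ scales as $\varsigma_1[S_2]^{1/2}$ while reconstruction divides by $Z_i$, this is where the factors $\kappa[S_2]^{1/2}$ and $\nbr{S_2}_2^{1/2}$ enter. Collecting the three contributions and grouping them according to whether they scale with $\nbr{\Phi_{\pi(i)}}_2$ (the eigenvector and whitening-rotation terms) or with $\nbr{S_2}_2^{1/2}$ (the residual terms transverse to $\Phi_{\pi(i)}$) yields exactly $3\nbr{\Phi_{\pi(i)}}_2\,\epsilon_{1,i}+2\nbr{S_2}_2^{1/2}\,\epsilon_{0,i}$ once the explicit constants $5.5,7,6.875,8.75,\dots$ are matched against those of \cite{HsuKak12}.

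The main obstacle will be the constant bookkeeping for the indices $i>K_1$ recovered from the deflated $W_4$: deflation propagates the errors of the first $K_1$ eigenpairs into the remaining ones, which is the source of the extra factor $2.5$ in $\epsilon_{1,i}$ and the enlarged constants $13.75,17.5$ in $\epsilon_{0,i}$ for that case. Verifying that these deflation errors stay bounded relies on the hypotheses $\epsilon_{S_2}\le 1/3$, $\epsilon_{S_4}\le 1/4$, and $\epsilon_1\le 1/3$, which keep the whitening and power-method perturbations inside the regime where the perturbation constants of \cite{HsuKak12} remain valid; the remaining algebra is tedious but mechanical.
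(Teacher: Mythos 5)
Your proposal takes essentially the same route as the paper: the paper likewise defers to the eigendecomposition perturbation analysis of Appendix C.7 of \cite{HsuKak12}, splitting indices into those recovered from $W_3$ ($i \le K_1$) and those from the deflated $W_4$, using the normalization $Z_i = \sqrt{\pi_i - \pi_i^2}$, the gap parameters $\gamma_{S_3}, \gamma_{S_4}$, and the whitening bounds of Lemma \ref{lem:whiten} and Lemma \ref{lem:Wbounds}. In fact the paper gives no more detail than your outline---it states the lemma immediately after these definitions with the remark that it follows by ``the same approach'' as \cite{HsuKak12}---so your decomposition into eigenvector, whitening, and normalization errors, together with the deflation bookkeeping for $i > K_1$, is a faithful and slightly more explicit rendering of the paper's argument.
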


\subsection{Proof of Theorem \ref{th:reconstruction}}

  We follow the similar approaches in \cite{HsuKak12}. In this proof, we use $c, c_1, c_2, \cdots$ to denote some positive constant. First we assume sample size $n \geq c \cdot K \log \rbr{1/\delta}$. By Lemma \ref{lem:submoment} and \ref{lem:moments}, with probability greater than $1-\delta$,
  \begin{align}
    \nbr{\hat{M}_1-M_1}_2 \leq& c_1 \sigma \sqrt{ \frac{d+\log\rbr{2^k/\delta}}{\tilde{\pi}n}} + c_1 \sum \limits_{i=1}^K \nbr{A_i}_2 \sqrt{\frac{2^K \log\rbr{1/\delta}}{n}} \\
     \nbr{\hat{M}_2-M_2}_2 \leq
    & c_1\rbr{\sigma^2 \sqrt{ \frac{d+\log\rbr{2^k/\delta}}{\tilde{\pi}n} }+\sigma^2 \frac{d+\log\rbr{2^k/\delta}}{\tilde{\pi}n}+\sigma \sqrt{ \frac{d+\log\rbr{2^k/\delta}}{\tilde{\pi}n} }  } \\
    &+c_1\rbr{  \sum \limits_{i=1}^K \nbr{A_i}_2^2 + \sigma^2} \sqrt{\frac{2^K \log\rbr{1/\delta}}{n}}\\
    \leq& c_1\rbr{ 2\rbr{  \sum \limits_{i=1}^K \nbr{A_i}_2^2 + \sigma^2}  \sqrt{ \frac{d+\log\rbr{2^k/\delta}}{\tilde{\pi}n} }+ \sigma^2 \frac{d+\log\rbr{2^k/\delta}}{\tilde{\pi}n}}
  \end{align}
Using Lemma \ref{lem:sigbounds}, 
  \begin{align}
    \max& \cbr{ \abr{\hat{\sigma}^2-\sigma^2}, \nbr{\hat{S}_{2,K}-S_2}_2  } \nonumber\\
    \leq& 4c_1\rbr{ 2\rbr{  \sum \limits_{i=1}^K \nbr{A_i}_2^2 + \sigma^2}  \sqrt{ \frac{d+\log\rbr{2^k/\delta}}{\tilde{\pi}n} }+ \sigma^2 \frac{d+\log\rbr{2^k/\delta}}{\tilde{\pi}n}} \nonumber\\
    &+ 8c_1^2 \rbr{  \sigma \sqrt{ \frac{d+\log\rbr{2^k/\delta}}{\tilde{\pi}n}} + \sum \limits_{i=1}^K \nbr{A_i}_2 \sqrt{\frac{2^K \log\rbr{1/\delta}}{n}} }^2\nonumber\\
    &+8c_1 \nbr{M_1}_2 \rbr{  \sigma \sqrt{ \frac{d+\log\rbr{2^k/\delta}}{\tilde{\pi}n}} + \sum \limits_{i=1}^K \nbr{A_i}_2 \sqrt{\frac{2^K \log\rbr{1/\delta}}{n}} }\\
    \leq &  c_2\rbr{  \sum \limits_{i=1}^K \nbr{A_i}_2^2 + \sigma^2}\rbr{   \sqrt{ \frac{d+\log\rbr{2^k/\delta}}{\tilde{\pi}n}} +  \frac{d+\log\rbr{2^k/\delta}}{\tilde{\pi}n}}  .
  \end{align}
We have 
\begin{align}
  \max \cbr{\frac{\abr{\hat{\sigma}^2-\sigma^2} }{\varsigma_K\rbr{S_2}}, \epsilon_{S_2} } \leq c_3 \frac{\gamma_{S_3}^2\tilde{\pi}}{\kappa\sbr{S_2}^{1/2}} \leq 1/3
\end{align}
Set sample size as 
  \begin{align*}
    n \geq c \frac{d+\log\rbr{2^k/\delta}}{\tilde{\pi}} \rbr{ \sbr{ \frac{\kappa \sbr{S_2}^{1/2}  \rbr{  \sum \limits_{i=1}^K \nbr{A_i}_2^2 + \sigma^2}}{\gamma_{S_3}^2 \tilde{\pi} \varsigma_K\sbr{S_2} \epsilon} }^2 + \sbr{ \frac{\kappa \sbr{S_2}^{1/2}  \rbr{  \sum \limits_{i=1}^K \nbr{A_i}_2^2 + \sigma^2}}{\gamma_{S_3}^2 \tilde{\pi} \varsigma_K\sbr{S_2} \epsilon} }}.
  \end{align*}
To examine the moments after multiplying whitening matrix $W$, by Lemma \ref{lem:whiten},
  \begin{align}
    \nbr{\hat{W}}_2 &\leq \sqrt{1.5 /\ \varsigma_K \sbr{S_2}} \\
    \max_{z_i \in [2^K]} \nbr{T\rbr{M_{1,z_i}, \hat{W}}} \leq& \nbr{\hat{W}^\top A\mathrm{diag}\rbr{\pi-\pi^2}^{1/2}}_2/\sqrt{\pi_{min}-\pi_{min}^2}\\
                                                       \leq& \sqrt{1.5/\rbr{\pi_{min}-\pi_{min}^2}} \\
    \max_{z_i \in [2^K]} \nbr{T\rbr{M_{2,z_i}, \hat{W},\hat{W}}} \leq&  1.5/\rbr{\pi_{min}-\pi_{min}^2} + \sigma^2 \rbr{1.5/\varsigma_K\sbr{S_2}}  \\
    \max_{z_i \in [2^K]} \nbr{T\rbr{M_{3,z_i}, \hat{W},\hat{W},\hat{W}}} \leq& \rbr{1.5/\rbr{\pi_{min}-\pi_{min}^2}}^{3/2}\nonumber\\
    & + 3 \sigma^2 \sqrt{1.5/\rbr{\pi_{min}-\pi_{min}^2}} \rbr{1.5/\varsigma_K\sbr{S_2}}  \\
    \max_{z_i \in [2^K]} \nbr{T\rbr{M_{4,z_i}, \hat{W},\hat{W},\hat{W},\hat{W}}} &\leq \rbr{1.5/\rbr{\pi_{min}-\pi_{min}^2}}^2 + 6 \sigma^2 \frac{2.25}{\rbr{\pi_{min}-\pi_{min}^2}\varsigma_K\sbr{S_2} } \nonumber\\
    &+ 3\sigma^4 \rbr{ 1.5/\varsigma_K\sbr{S_2}}^2
  \end{align}
Using Lemma \ref{lem:moments}, 
\begin{align}
  \nbr{T\rbr{\hat{M}_1-M_1,\hat{W}}} \leq& c_4 \frac{\sigma}{\varsigma_K\sbr{S_2}^{1/2}} \sqrt{\frac{K+\log \rbr{2^K/\delta}}{\tilde{\pi}n}} + c_4 \frac{1}{\sqrt{\pi_{min}-\pi_{min}^2}}  \sqrt{\frac{2^K\log \rbr{1/\delta}}{n}}\\
 \nbr{T\rbr{\hat{M}_2-M_2,\hat{W},\hat{W}}} \leq& c_4 \frac{\sigma^2}{\varsigma_K\sbr{S_2}}\rbr{ \sqrt{\frac{K+\log \rbr{2^K/\delta}}{\tilde{\pi}n}}+\frac{K+\log \rbr{2^K/\delta}}{\tilde{\pi}n} }\nonumber\\
  &+ c_4 \frac{1}{\sqrt{\pi_{min}-\pi_{min}^2} \varsigma_K\sbr{S_2}^{1/2}}  \sqrt{\frac{2^K\log \rbr{1/\delta}}{n}}\nonumber \\
  &+ c_4\rbr{\frac{1}{\varsigma_K\sbr{S_2}}+ \frac{1}{ \sqrt{\pi_{min}-\pi_{min}^2}}} \sqrt{\frac{K\log \rbr{1/\delta}}{n}}
\end{align}
\begin{align}
 \nbr{T\rbr{\hat{M}_3-M_3,\hat{W},\hat{W},\hat{W}}}& \leq c_4 \frac{\sigma^3}{\varsigma_K\sbr{S_2}^{3/2}} \sqrt{\frac{\rbr{K+\log \rbr{2^K/\delta}}^3}{\tilde{\pi}n}}\nonumber\\
 &+c_4 \frac{\sigma^2}{\varsigma_K\sbr{S_2}\sqrt{\pi_{min}-\pi_{min}^2}}\rbr{ \sqrt{\frac{K+\log \rbr{2^K/\delta}}{\tilde{\pi}n}}+\frac{K+\log \rbr{2^K/\delta}}{\tilde{\pi}n} }\nonumber\\
  &+ c_4 \frac{\sigma}{ \varsigma_K\sbr{S_2}^{1/2}\rbr{\pi_{min}-\pi_{min}^2}}  \sqrt{\frac{K+\log \rbr{2^K/\delta}}{\tilde{\pi}n}}\nonumber \\
  &+ c_4\rbr{\frac{\sigma^2}{\varsigma_K\sbr{S_2}\sqrt{\pi_{min}-\pi_{min}^2}}+ \frac{1}{ \rbr{\pi_{min}-\pi_{min}^2}^{3/2}}} \sqrt{\frac{K\log \rbr{1/\delta}}{n}}
\end{align}
\begin{align}
 \bigg\lVert T\bigg(\hat{M}_4-M_4,,\hat{W},\hat{W},\hat{W},\hat{W} \bigg) \bigg\rVert \leq& c_4 \frac{\sigma^4}{\varsigma_K\sbr{S_2}^2} \rbr{ \rbr{\frac{K+\log \rbr{2^K/\delta}}{\tilde{\pi}n}}+\rbr{\frac{K+\log \rbr{2^K/\delta}}{\tilde{\pi}n}}^{3/2}}\nonumber\\
  &+c_4 \frac{\sigma^3}{\varsigma_K\sbr{S_2}^{3/2}} \sqrt{\frac{\rbr{K+\log \rbr{2^K/\delta}}^3}{\tilde{\pi}n}}\nonumber\\
 &+c_4 \frac{\sigma^2}{\varsigma_K\sbr{S_2}}\rbr{ \sqrt{\frac{K+\log \rbr{2^K/\delta}}{\tilde{\pi}n}}+\frac{K+\log \rbr{2^K/\delta}}{\tilde{\pi}n} }\nonumber\\
  &+ c_4 \frac{\sigma}{ \varsigma_K\sbr{S_2}^{1/2}}   \sqrt{\frac{K+\log \rbr{2^K/\delta}}{\tilde{\pi}n}} \nonumber\\
  &+ c_4\rbr{\frac{\sigma^4}{\varsigma_K\sbr{S_2}}+\frac{\sigma^2}{\varsigma_K\sbr{S_2}}} \sqrt{\frac{K\log \rbr{1/\delta}}{n}}.
\end{align}
With Lemma \ref{lem:spectralbounds} and \ref{lem:Wbounds}, 
\begin{align}
 \nbr{T\rbr{\hat{S}_2- S_2,\hat{W},\hat{W}}}_2 \leq& \nbr{T\rbr{\hat{M}_2 - M_2,\hat{W},\hat{W}}}_2 + \nbr{T\rbr{ \hat{M}_1 - M_1 ,\hat{W}} }_2^2\nonumber\\
    &+ 2 \frac{\pi_{max}}{\sqrt{\pi_{max}-\pi_{max}^2 }}\nbr{T\rbr{ \hat{M}_1 - M_1,\hat{W}} }_2+ \frac{1.5}{\varsigma_K\sbr{S_2}} \abr{\hat{\sigma}^2 - \sigma^2}
\end{align}
\begin{align}
 \bigg \lVert T\bigg(&\hat{S}_3-S_3,\hat{W},\hat{W},\hat{W}\bigg) \bigg \rVert \leq \nbr{T\rbr{\hat{M}_3-M_3, \hat{W},\hat{W},\hat{W}}}_2 \nonumber\\
    &+ \rbr{ \nbr{T\rbr{\hat{M}_1-M_1,\hat{W}}}_2 +  \frac{\pi_{max}}{\sqrt{\pi_{max}-\pi_{max}^2 }}}^3- \rbr{\frac{\pi_{max}}{\sqrt{\pi_{max}-\pi_{max}^2 }}}^3\nonumber\\
    &+ 3\bigg(\nbr{T\rbr{ \hat{M}_1-M_1,\hat{W} }}_2  \nbr{T\rbr{ \hat{S}_2-S_2 ,\hat{W},\hat{W}  }}_2 +  \frac{\pi_{max}}{\sqrt{\pi_{max}-\pi_{max}^2 }}  \nbr{T\rbr{ \hat{S}_2-S_2 ,\hat{W},\hat{W}  }}_2 \nonumber\\
    &+ \nbr{T\rbr{ \hat{M}_1-M_1 ,\hat{W}  }}_2  \frac{-2\pi_{min}+1}{\sqrt{\pi_{min}-\pi_{min}^2}} \bigg) +\frac{4.5}{\varsigma_K\sbr{S_2}} \bigg( \abr{\hat{\sigma}^2-\sigma^2} \nbr{T\rbr{\hat{M}_1-M_1,\hat{W}}}_2\nonumber \\
    &+ \sigma^2\nbr{T\rbr{\hat{M}_1-M_1,\hat{W}}}_2+ \abr{\hat{\sigma}^2-\sigma^2}   \frac{\pi_{max}}{\sqrt{\pi_{max}-\pi_{max}^2 }} \bigg).
\end{align}
Plug this in Lemma \ref{lem:Wbounds}, we get the overall bounds for $\nbr{W_3-\hat{W}_3}$. To get $\epsilon_{S_3} \leq c_5 \frac{\gamma_{S_3} \sqrt{\tilde{\pi}}}{\kappa\sbr{S_2}^{1/2}} \epsilon$, we set 
\begin{align}
\label{eq:n1}
   n \geq \mathrm{poly}\rbr{d, K, \frac{1}{\epsilon}, \log(1/\delta), \frac{1}{\tilde{\pi}},\frac{ \varsigma_1\sbr{S_2}}{ \varsigma_K\sbr{S_2}}, \frac{\sum \limits_{i=1}^K \nbr{A_i}_2^2}{  \varsigma_K\sbr{S_2} }, \frac{\sigma^2}{  \varsigma_K\sbr{S_2}},\frac{1}{\sqrt{\pi_{min}-\pi_{min^2}}}, \frac{\pi_{max}}{\sqrt{\pi_{max}-\pi_{max}^2 }}}
\end{align}
Similarly, for $\Phi_i$ reconstructed by $\hat{W}_4$, n should be set to 
\begin{align}
\label{eq:n2}
  n \geq \mathrm{poly}\rbr{d, K, \frac{1}{\epsilon}, \log(1/\delta), \frac{1}{\tilde{\pi}},\frac{ \varsigma_1\sbr{S_2}}{ \varsigma_K\sbr{S_2}}, \frac{\sum \limits_{i=1}^K \nbr{A_i}_2^2}{  \varsigma_K\sbr{S_2} }, \frac{\sigma^2}{  \varsigma_K\sbr{S_2}}},
\end{align}
in order to  $\epsilon_{S_4} \leq c_6 \frac{\gamma_{S_4} \sqrt{\tilde{\pi}}}{\kappa\sbr{S_2}^{1/2}} \epsilon$.
The overall bounds can be obtained by Equation \ref{eq:n1}, \ref{eq:n2} and Lemma \ref{lem:overall}.
\section{Tail Inequalities}
Here we derive the tail inequality for the fourth-order subgaussian random tensor.
\begin{lemma}
  \label{lem:normal4}
  Let $x_1, x_2, \cdots,x_n$ be $i.i.d.$ random variables such that 
  \begin{align}
    \Eb_i \sbr{\exp\rbr{\eta x_i}} \leq \exp\rbr{\gamma \eta^2/2}\,\,\,  \forall \eta \in \mathbb{R}
  \end{align} 
  Then for any $t > 0$ and $\frac{\gamma t}{n} < \frac{1}{4}$,
  \begin{align}
  \label{eq:tail}
    \Pr \sbr{\frac{1}{n} \sum \limits_{i=1}^n \rbr{x_i^4 - \Eb_i \sbr{x_i^4}} > \gamma \sqrt{\frac{64 t^2 }{n^2} \rbr{ 8\gamma - \frac{16 \gamma^2 t}{n} } \frac{1}{\rbr{1-4\gamma \frac{t}{n}}^2} } } \leq e^{-t},\\
   \Pr \sbr{\frac{1}{n} \sum \limits_{i=1}^n \rbr{x_i^4 - \Eb_i \sbr{x_i^4}} < - \gamma  \sqrt{\frac{8 t^2 }{n^2} \rbr{ 2\gamma + \frac{ \gamma^2 t}{n} } \frac{1}{\rbr{1+\gamma \frac{t}{n}}^2} }} \leq e^{-t},
  \end{align}
\end{lemma}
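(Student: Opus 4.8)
The plan is to run a one-sided Chernoff (exponential Markov) argument on the centered sum. Writing $Y_i := x_i^4 - \Eb\sbr{x_i^4}$, which are i.i.d.\ and centered, the upper tail reduces to
\[
\Pr\sbr{\tfrac1n\textstyle\sum_{i=1}^n Y_i > s} \leq \inf_{\lambda>0} \exp(-\lambda n s)\,\rbr{\Eb\sbr{e^{\lambda Y_1}}}^n,
\]
so the whole problem collapses to controlling the single-variable log-moment-generating function $\psi(\lambda) := \log\Eb\sbr{e^{\lambda(x^4-\Eb x^4)}}$ on a suitable range of $\lambda$, and then optimizing $\lambda$ against the target confidence $e^{-t}$. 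Substituting $\theta=\gamma t/n$, the threshold $s$ has the shape $s^2 \propto \gamma^3\,\tfrac{t^2}{n^2}\,(1-2\theta)\,(1-4\theta)^{-2}$, and the factor $(1-4\theta)^{-2}$ is the tell-tale signature of a $\chi^2_4$-type generating function $(1-2\mu)^{-2}$ evaluated at $\mu=2\theta$; producing precisely this factor when bounding $\psi$ is the goal.

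The main obstacle is that $\psi(\lambda)=+\infty$ for every $\lambda>0$: a subgaussian $x$ satisfies $\Pr\sbr{x^4>v}\leq 2\exp(-\sqrt v/(2\gamma))$, so $x^4$ is only sub-Weibull of order $1/2$, its moment generating function never exists, and one cannot exponentiate $x^4$ directly. This is exactly why the hypothesis carries $\gamma t/n<1/4$: the bound can only hold in the sub-exponential deviation regime, before the genuine Weibull tail of the individual summands takes over. I would dissolve the divergence by linearizing the quartic through the Gaussian (Hubbard--Stratonovich) identity $e^{\lambda x^4}=\Eb_{g\sim N(0,1)}\sbr{e^{\sqrt{2\lambda}\,g\,x^2}}$, trading the fourth power for a quadratic form in $x^2$ against an auxiliary Gaussian, and then invoking the standard sub-exponential estimate $\Eb_x\sbr{e^{\mu x^2}}\leq(1-2\gamma\mu)^{-1/2}$ for the inner expectation. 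The still-divergent region of large $g$ is cut off precisely by the range restriction $\gamma t/n<1/4$, with the discarded mass absorbed through the subgaussian tail of $x$; the squared structure of the inner chi-square-like integral is what assembles into $(1-2\mu)^{-2}$, hence $(1-4\theta)^{-2}$.

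A cleaner alternative is truncation: set $\tilde x_i := x_i\,\mathbb{1}\sbr{|x_i|\leq\tau}$ with $\tau$ of order $\sqrt{\gamma\ln(n/\delta)}$, so that $\cbr{\max_i|x_i|>\tau}$ has probability at most $\tfrac\delta2$ by a union bound over the subgaussian tail. On the truncated variables the fourth power is bounded, Bernstein's moment condition becomes available with variance proxy $\mathrm{Var}(x^4)\leq 96\gamma^4$ (read off from the Gaussian-dominated moments $\Eb\sbr{x^{4k}}\leq \tfrac{(4k)!}{2^{2k}(2k)!}\gamma^{2k}$), and a Bennett/Bernstein computation yields an MGF factor of exactly the $(1-c\theta)^{-2}$ type once $\lambda$ stays below the cutoff. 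I would then optimize $\lambda=\lambda(t)$ to balance $-\lambda n s$ against $n\psi(\lambda)$, recovering the stated $s$ with its $(1-4\theta)^{-2}$ and $8\gamma(1-2\theta)$ factors after the routine (but tedious) algebra; this optimization, not the probabilistic content, is where the explicit constants are pinned down.

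Finally, the lower tail is strictly easier and needs no divergence fix: since $x_i^4\geq0$ we have $Y_i\geq-\Eb\sbr{x_i^4}$ bounded below, so $\Eb\sbr{e^{-\lambda Y_1}}$ is finite for all $\lambda>0$, the summands are effectively one-sided sub-Gaussian on the lower side, and the identical Chernoff optimization goes through with the benign factor $(1+\theta)^{-2}$ (which never blows up) in place of $(1-4\theta)^{-2}$. The heavy lifting is thus entirely in the upper-tail MGF estimate, and the asymmetry between the two directions is recorded precisely by the differing constants ($64$ versus $8$, and $8\gamma(1-2\theta)$ versus $\gamma(2+\theta)$).
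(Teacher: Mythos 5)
Your diagnosis of the main obstacle is correct, and it is worth stating plainly: for every $\eta>0$ one has $\Eb\sbr{\exp\rbr{\eta x^4}}=\infty$ for a subgaussian (e.g.\ Gaussian) $x$, since $x^4$ is only sub-Weibull of order $1/2$. This is in fact exactly where the paper's own proof breaks down: it bounds $\Eb_i\sbr{\exp\rbr{\eta x_i^4}}$ by integrating the tail bound $\Pr\sbr{x_i^4>\epsilon^2}\leq 2\exp\rbr{-\epsilon/(2\gamma)}$ against $\exp\rbr{\eta\epsilon^2}$, and then replaces $\exp\rbr{\eta\epsilon^2-\epsilon/(2\gamma)}$ by $\exp\rbr{-\sigma\epsilon/(2\gamma)}$; that substitution requires $\eta\leq(1-\sigma)/(2\gamma\epsilon)$ \emph{uniformly over the integration variable} $\epsilon$, which no fixed $\eta>0$ satisfies, and indeed the integral diverges. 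So the paper's Chernoff argument never gets off the ground, and you were right not to trust the direct MGF route.

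The genuine gap in your proposal is the closing claim that truncation plus Bernstein (or the Hubbard--Stratonovich device) would, after routine algebra, ``recover the stated $s$.'' It cannot, because the stated threshold sits below the central-limit scale, and the lemma as written is false. Setting $\theta=\gamma t/n$, the claimed upper-tail threshold equals $16\sqrt{2}\,\gamma^{3/2}(t/n)\sqrt{1-2\theta}/(1-4\theta)$, i.e.\ of order $\gamma^{3/2}t/n$, while $\mathrm{Var}\rbr{x^4}=96\gamma^4$ for $x\sim N(0,\gamma)$, so the fluctuations of $\frac{1}{n}\sum_i\rbr{x_i^4-\Eb\sbr{x_i^4}}$ are of order $\gamma^2 n^{-1/2}$. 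Taking $\gamma=1$, $t=10$, and $n\to\infty$ (so $\gamma t/n<1/4$ eventually holds), the threshold is about $226/n=o\rbr{n^{-1/2}}$, hence by the CLT the probability on the left tends to $1/2$, far above $e^{-10}$. Equivalently, the statement is not even scale-consistent: under $x\mapsto cx$, $\gamma\mapsto c^2\gamma$, the left side scales as $c^4$ while the threshold scales only as $c^3$. Consequently no Chernoff/Bernstein optimization can terminate in the claimed bound --- any correct version must carry a leading term of order at least $\sqrt{\mathrm{Var}(x^4)\,t/n}\asymp\gamma^2\sqrt{t/n}$, plus logarithmic corrections from your truncation level --- and the same objection defeats your lower-tail paragraph, since that threshold is also of order $\gamma^{3/2}t/n$; existence of the one-sided MGF does not let Chernoff beat the variance barrier. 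Your Hubbard--Stratonovich route has the further problem that an exact identity preserves divergence: $\Eb_x\sbr{e^{\sqrt{2\lambda}\,g\,x^2}}=\infty$ on the positive-probability event $\sqrt{2\lambda}\,g\geq 1/(2\gamma)$, and the restriction $\gamma t/n<1/4$ constrains $t$ and $n$, not the auxiliary variable $g$, so it supplies no cutoff; any honest repair collapses back to truncation, which, as above, yields a bound of a genuinely different (and weaker) form than the one stated.
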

\begin{proof}
  We use Chernoff's bounding method to derive the inequality. For $\eta < \frac{1}{2\epsilon\gamma}$, set $\eta = \frac{1-\sigma}{2\gamma\epsilon}$ for some $\sigma > 0$, we have
  \begin{align}
    \Eb_i \sbr{\mathrm{exp}(\eta x_i^4)} &= 1+\eta \Eb_i \sbr{x_i^4} + \eta \int_{0}^{\infty} \rbr{\mathrm{exp}\rbr{\eta \epsilon^2}-1} \Eb_i\sbr{\mathds{1}_{\{ x_i^4 > \epsilon^2 \}}} d \epsilon^2\\
    & \leq  1 + \eta \Eb_i \sbr{x_i^4} + 2\eta  \int_{0}^{\infty} \rbr{\exp \rbr{\eta \epsilon^2}-1} \exp \rbr{\frac{-\epsilon}{2\gamma}}  2\epsilon d \epsilon\\
    & \leq 1 + \eta \Eb_i \sbr{x_i^4} + 4 \eta \rbr{\int_{0}^{\infty} \epsilon \exp \rbr{\frac{-\sigma \epsilon}{2\gamma} } d\epsilon - \int_{0}^{\infty} \epsilon \exp \rbr{\frac{- \epsilon}{2\gamma} } d\epsilon } \\
    & \leq 1 + \eta \Eb_i \sbr{x_i^4} + 4 \eta \rbr{ 4 \gamma^2 \rbr{\frac{1}{\sigma^2}-1 } } \\
    & \leq \exp \rbr{ \eta \Eb_i \sbr{x_i^4} + 4 \eta \rbr{ 4 \gamma^2 \rbr{\frac{1}{\sigma^2}-1 } }  }
  \end{align}
  The second line uses the fact that $\Pr \sbr{ x_i^4 > \epsilon^2 } \leq \frac{ \Eb \sbr{\exp\rbr{\alpha \abr{x_i}}}}{ \exp\rbr{\alpha \epsilon^{1/2}}} \leq 2 \frac{\exp\rbr{\gamma \alpha^2/2}}{ \exp\rbr{\alpha \epsilon^{1/2}}} = 2 \exp \rbr{-\frac{\epsilon}{2\gamma} } $ with $\alpha = \frac{\epsilon^{1/2}}{\gamma}$. Since the above inequality holds for $i = 1, 2, \cdots, n$, 
  \begin{align}
    &\Eb \sbr{\exp \rbr{\eta \sum_{i=1}^n \rbr{x_i^4 - \Eb_i \sbr{x_i^4}} } } = \prod \limits_{i=1}^n \Eb_i \sbr{ \exp \rbr{\eta \rbr{x_i^4 - \Eb_i \sbr{x_i^4}} }}\\
    & \leq \exp \rbr{16 n\eta  \gamma^2 \rbr{\frac{1}{\sigma^2}-1 } }
  \end{align}
With Chernoff's inequality, for $0 \leq \eta < \frac{1}{2 \epsilon \gamma}$ and $\epsilon \geq 0$, 
  \begin{align}
    &\Pr \sbr{\frac{1}{n} \sum \limits_{i=1}^n \rbr{x_i^4 - \Eb_i \sbr{x_i^4}} > \epsilon } \leq \exp \rbr{-\eta n \epsilon+16 n\eta  \gamma^2 \rbr{\frac{1}{\sigma^2}-1 }}.
  \end{align}
Setting $\eta = \frac{1-\sigma}{2\gamma\epsilon}$ and $\sigma = 1- \frac{4\gamma t}{n}$, for $\frac{\gamma t}{n} < \frac{1}{4}$, we get the first inequality. For $\eta <0 $ and $\epsilon \geq 0$,
  \begin{align}
    &\Pr \sbr{\frac{1}{n} \sum \limits_{i=1}^n \rbr{x_i^4 - \Eb_i \sbr{x_i^4}} < -\epsilon } \leq \exp \rbr{\eta n \epsilon+16 n\eta  \gamma^2 \rbr{\frac{1}{\sigma^2}-1 }}.
  \end{align}
Setting $\sigma = 1 + \gamma t$ gives the claimed inequality.
  
\end{proof}
\vspace{-10mm}
\begin{lemma}{(Fourth-order normal random vectors).} 
  \label{lem:random4}
  Let $y_1,y_2, \cdots y_n \in \mathbb{R}^d$ be $i.i.d.$ $\mathrm{N}(0,I)$ random vectors. For $\epsilon_0 \in (0,1/4)$ and $\delta \in (0,1)$,
  \begin{align}
    \Pr \sbr{ \nbr{\frac{1}{n} \sum \limits_{i=1}^n y_i \otimes y_i \otimes y_i \otimes y_i -  \Eb \sbr{\epsilon  \otimes \epsilon  \otimes \epsilon  \otimes \epsilon } }_2 > \frac{1}{1-4\epsilon_0}  \epsilon_{\epsilon_0,t,n}  } \leq 2\delta
  \end{align}  
where
  \begin{align}
    \epsilon_{\epsilon_0,t,n} = \sqrt{\frac{2048{\ln\rbr{(1+2/\epsilon_0)^d/\delta }}^2 }{n^2}  +\frac{8 {\ln\rbr{(1+2/\epsilon_0)^d/\delta }}^3 }{n^3} }
  \end{align}
\end{lemma}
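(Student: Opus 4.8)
The plan is to reduce the spectral-norm deviation of the order-four tensor to a scalar subgaussian tail bound, which is exactly what Lemma~\ref{lem:normal4} supplies, and to pass from a single test direction to the full unit sphere by a covering argument. Write $\hat{T} := \frac{1}{n}\sum_{i=1}^n y_i^{\otimes 4} - \Eb\sbr{\epsilon^{\otimes 4}}$, noting that $\epsilon$ and each $y_i$ share the standard normal law so $\Eb\sbr{\epsilon^{\otimes 4}}$ is just the population fourth moment. Since $\hat{T}$ is a symmetric fourth-order tensor, its spectral norm equals the supremum of its multilinear form over the unit sphere, $\nbr{\hat{T}}_2 = \sup_{\nbr{u}=1} \abr{\hat{T}(u,u,u,u)}$, and for a fixed unit vector $u$ the reduction $\hat{T}(u,u,u,u) = \frac{1}{n}\sum_i \inner{y_i}{u}^4 - \Eb\sbr{\inner{\epsilon}{u}^4}$ is precisely an empirical-versus-population average of fourth powers. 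Because $y_i\sim \Ncal(0,I)$ and $\nbr{u}=1$, each scalar $\inner{y_i}{u}$ is a standard normal, hence subgaussian with parameter $\gamma=1$, so Lemma~\ref{lem:normal4} applies verbatim to bound $\abr{\hat{T}(u,u,u,u)}$ at any fixed $u$.

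The first real step is the net argument. Fix an $\epsilon_0$-net $\Ncal$ of the unit sphere in $\RR^d$; a standard volumetric bound gives $\abr{\Ncal}\le (1+2/\epsilon_0)^d$. For a symmetric order-$4$ tensor the supremum of the multilinear form over the sphere is controlled by its maximum over the net through the telescoping inequality $\sup_{\nbr{u}=1}\abr{\hat{T}(u,u,u,u)} \le (1-4\epsilon_0)^{-1}\max_{v\in\Ncal}\abr{\hat{T}(v,v,v,v)}$, which is exactly where the prefactor $(1-4\epsilon_0)^{-1}$ and the hypothesis $\epsilon_0\in(0,1/4)$ come from: the four slots each incur an $\epsilon_0$ approximation error when a sphere point is replaced by its nearest net point, and the total error is reabsorbed using boundedness of $\nbr{\hat{T}}_2$ itself. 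This reduces the problem to controlling the finite maximum over $\Ncal$.

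I would then union-bound Lemma~\ref{lem:normal4} over the net. Setting $t = \ln\rbr{(1+2/\epsilon_0)^d/\delta}$ so that $e^{-t}\abr{\Ncal} = \delta$, and applying both the upper and lower tails of Lemma~\ref{lem:normal4} with $\gamma=1$ at each $v\in\Ncal$, the two directions together fail with probability at most $2\delta$, which accounts for the $2\delta$ in the statement. Combining the upper tail, whose $(1-4t/n)^{-2}$ factor is crudely bounded by $4$ so that the leading $512\,t^2/n^2$ becomes $2048\,t^2/n^2$, with the cubic term $8\,t^3/n^3$ retained from the lower tail, yields the single envelope $\epsilon_{\epsilon_0,t,n}$ stated in the lemma, so that $\max_{v\in\Ncal}\abr{\hat{T}(v,v,v,v)} \le \epsilon_{\epsilon_0,t,n}$ with probability at least $1-2\delta$. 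Multiplying by the net factor $(1-4\epsilon_0)^{-1}$ gives the claim.

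The main obstacle I anticipate is the net/telescoping step rather than the tail estimate: one must verify carefully that the order-$4$ multilinear form is Lipschitz in each argument on the sphere with the right constant, so that replacing $u$ by its nearest net point $v$ costs at most $4\epsilon_0\nbr{\hat{T}}_2$, and then reabsorb this into $\nbr{\hat{T}}_2$ to obtain the clean factor $(1-4\epsilon_0)^{-1}$; this self-bounding step is what forces $\epsilon_0<1/4$ and is the only place where the symmetry of $\hat{T}$ is genuinely used. Everything downstream is a mechanical union bound together with the already-proven scalar inequality of Lemma~\ref{lem:normal4}.
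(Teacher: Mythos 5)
Your proposal is correct and follows essentially the same route as the paper's own proof: a Pisier-type $\epsilon_0$-net of cardinality $(1+2/\epsilon_0)^d$, a union bound over the net combining both tails of Lemma~\ref{lem:normal4} with $\gamma=1$ and $t=\ln\rbr{(1+2/\epsilon_0)^d/\delta}$ to get failure probability $2\delta$, and the self-bounding telescoping inequality $\nbr{\hat T}_2 \leq \epsilon_{\epsilon_0,t,n} + 4\epsilon_0\nbr{\hat T}_2$ yielding the $(1-4\epsilon_0)^{-1}$ factor. Your explicit accounting of the constants ($512\cdot 4 = 2048$ from crudely bounding $(1-4t/n)^{-2}$, and the retained $8t^3/n^3$ term) is in fact more careful than the paper's terse treatment of this step.
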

\begin{proof}
  We follow the approach of \citep{HsuKakZha09}. Let $Y:=\frac{1}{n} \sum \limits_{i=1}^n y_i \otimes y_i \otimes y_i \otimes y_i -  \Eb \sbr{\epsilon \one \otimes \epsilon \one \otimes \epsilon \one \otimes \epsilon \one}  $. By \cite{Pisier89}, there exists $Q \subseteq \mathcal{S}^{d-1} := \{ \alpha \in \mathrm{R}^d : \nbr{\alpha}_2 = 1\}$ with cardinality at most $(1+2\epsilon)^d$ such that $\forall \alpha \in \mathcal{S}^{d-1} \exists q \in Q$ $\nbr{\alpha-q}_2 \leq \epsilon_0$. Since, for any $q \in Q$, $y_i^\top q$ is distributed as $N(0,1)$, with union bounds and Lemma \ref{lem:normal4}, for ,$\Pr \sbr{ {\exists q \in Q}  \abr{T\rbr{Y,q,q,q,q}  } > \epsilon_{\epsilon_0,t,n}  } \leq 2\delta$. So we assume with probability greater than $1 - 2\delta$, $\forall q \in Q$, $\abr{T\rbr{Y,q,q,q,q}  } \leq \epsilon_{\epsilon_0,t,n}$.
Let $\alpha_0 = \argmax_{\alpha \in \mathcal{S}^{d-1}} \abr{T\rbr{Y,\alpha,\alpha,\alpha,\alpha}}$, we have
  \begin{align}
    \nbr{Y}_2 =& \abr{T\rbr{Y, \alpha_0, \alpha_0, \alpha_0, \alpha_0}}\\
    \leq& \min_{q \in Q} \abr{T\rbr{Y, q,q,q,q}} + \abr{T\rbr{Y, \alpha_0 - q,q,q,q}} +\abr{T\rbr{Y, \alpha_0 , \alpha_0 - q,q,q}} \nonumber\\
    &+\abr{T\rbr{Y, \alpha_0 ,\alpha_0, \alpha_0 - q,q}}+\abr{T\rbr{Y, \alpha_0,\alpha_0,\alpha_0, \alpha_0 - q}} \\
    & \leq  \min_{q \in Q} \abr{T\rbr{Y, q,q,q,q}} + 4 \nbr{\alpha_0 -q} \nbr{Y}_2\\
    & \leq  \epsilon_{\epsilon_0,t,n} + 4 \epsilon_0 \nbr{Y}_2,
  \end{align}
which yields
  \begin{align}
    \nbr{Y}_2 \leq \frac{1}{1-4\epsilon_0} \cdot \epsilon_{\epsilon_0,t,n}
  \end{align}
\end{proof}

\section{Concentration of Measure for the HDP}

\subsection{Effective sample size}

In the following it will be useful to keep track of the explicit
weighting inherent in the definition of the moments $M_r^\ib$. In this
context recall that $M_r^{\ib} = \frac{1}{|c(\ib)|} \sum_{\jb \in
  c(\ib)}  M_r^\jb$ and that furthermore for leaf nodes $M_r^\ib$ is the weighted
average over all combinations of occurring attributes.

\begin{definition}[Effective sample size]
  For any average $x := \sum_i \eta_i x_i$, we denote by 
  $n_\mathrm{eff} := \frac{\nbr{\eta}_1^2}{\nbr{\eta}_2^2}$ its
  effective sample size.
\end{definition}
To see that this definition is sensible, consider the case of $\eta_i
= l^{-1}$ and $\eta \in \RR^l$. In this case we obtain $n_\mathrm{eff}
= l$, as desired for even weighting. 
\begin{lemma}
  Denote by $\eta_i \in \RR^{l_i}$ normalized vectors with $\eta_{ij}
  \geq 0$ and $\nbr{\eta_i}_1 = 1$. Moreover, let $\lambda_i \geq 0$
  with $\sum_i \lambda_i = 1$. Then the effective sample size of the
  concatenated vector $\eta := \uplus_{i} \lambda_i \eta_i$ satisfies
  \begin{align*}
    \frac{1}{n_{\mathrm{eff}}} = \sum_i \frac{\lambda_i^2}{n_{\mathrm{eff},i}}
  \end{align*}
\end{lemma}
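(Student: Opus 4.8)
The plan is to apply the definition of effective sample size directly to the concatenated vector $\eta = \uplus_i \lambda_i \eta_i$ and to exploit the fact that both the $\ell_1$ and the $\ell_2$ norm decompose blockwise over a concatenation. The key observation is that the nonnegativity assumptions ($\eta_{ij} \geq 0$ and $\lambda_i \geq 0$) let us strip the absolute values and compute the $\ell_1$ norm exactly, while the squared $\ell_2$ norm of each rescaled block factors cleanly as $\lambda_i^2 \nbr{\eta_i}_2^2$.

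First I would compute $\nbr{\eta}_1$. Since the blocks of $\eta$ are the vectors $\lambda_i \eta_i$ and every entry is nonnegative,
\begin{align*}
  \nbr{\eta}_1 = \sum_i \nbr{\lambda_i \eta_i}_1 = \sum_i \lambda_i \nbr{\eta_i}_1 = \sum_i \lambda_i = 1,
\end{align*}
using $\nbr{\eta_i}_1 = 1$ and $\sum_i \lambda_i = 1$; hence $\nbr{\eta}_1^2 = 1$. Next I would compute the squared $\ell_2$ norm, which likewise splits across blocks:
\begin{align*}
  \nbr{\eta}_2^2 = \sum_i \nbr{\lambda_i \eta_i}_2^2 = \sum_i \lambda_i^2 \nbr{\eta_i}_2^2.
\end{align*}
The final step is to rewrite $\nbr{\eta_i}_2^2$ in terms of the per-block effective sample size. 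Because $\nbr{\eta_i}_1 = 1$, the definition gives $n_{\mathrm{eff},i} = \nbr{\eta_i}_1^2 / \nbr{\eta_i}_2^2 = 1/\nbr{\eta_i}_2^2$, so $\nbr{\eta_i}_2^2 = 1/n_{\mathrm{eff},i}$. Substituting,
\begin{align*}
  \frac{1}{n_{\mathrm{eff}}} = \frac{\nbr{\eta}_2^2}{\nbr{\eta}_1^2} = \nbr{\eta}_2^2 = \sum_i \frac{\lambda_i^2}{n_{\mathrm{eff},i}},
\end{align*}
which is the claimed identity.

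There is no substantive obstacle here; the statement follows purely by unwinding the definitions. The only point that requires a moment's care is justifying that $\nbr{\eta}_1 = 1$ \emph{exactly} rather than merely $\nbr{\eta}_1 \leq 1$: this is precisely where the nonnegativity hypotheses $\eta_{ij} \geq 0$ and $\lambda_i \geq 0$ are essential, since they guarantee there is no cancellation and $\nbr{\lambda_i \eta_i}_1 = \lambda_i \nbr{\eta_i}_1$ with no stray sign. Once $\nbr{\eta}_1^2 = 1$ is pinned down, the reciprocal identity is immediate from the blockwise additivity of $\nbr{\cdot}_2^2$.
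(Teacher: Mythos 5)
Your proof is correct and follows essentially the same route as the paper's: compute $\nbr{\eta}_1 = 1$ from the normalization of each block and of $\lambda$, use blockwise additivity to get $\nbr{\eta}_2^2 = \sum_i \lambda_i^2 \nbr{\eta_i}_2^2$, and invert using $\nbr{\eta_i}_2^2 = 1/n_{\mathrm{eff},i}$. The paper states exactly this calculation (in compressed form), so your write-up simply fills in the details.
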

This follows by direct calculation. In particular, note that
$\nbr{\eta}_1=1$. Hence $\nbr{\eta}_2^2 = \sum_i \lambda_i^2
\nbr{\eta_i}_2^2$. Taking the inverse yields the claim. 

We now explicitly construct an auxiliary weighting vector
$\eta^{(\ib,r)}$ of dimensionality $\rho^{(\ib,r)}$. At the leaf level we
use a vector of dimensionality $1$ and weights
$1$. As we ascend through the tree, all children are
given weights $1/|c(\ib)|$ and a weighting vector $\eta^{(\ib,r)} =
|c(\ib)|^{-1} \uplus_{\jb \in c(\ib)} \eta^{(\jb,r)}$ is assembled. For
convenience we will sometimes also make use of $d(\ib,r)$, the set of
all index vectors used in $\eta_\ib$, which is the same as the number of documents under this node. 


\subsection{Proof of Theorem \ref{theorem:momentbounds}}
\label{proof:momentbounds}
\begin{proof}
  Recall that for both empirical estimate and expectation of moment at node $\ib$, we have:
  \begin{equation}
  \begin{aligned}
	M_r^{\ib} = \frac{1}{|c(\ib)|} \sum_{\jb \in c(\ib)}  M_r^\jb
	= \sum_{\sbb \in d(\ib)} \eta_\sbb^{(\ib,r)} \varphi_r(x_\sbb)  
  \end{aligned}
  \end{equation}
Now define
  \begin{align}
    \Xi[X] := \sup_{u: \nbr{u} \leq 1} \abr{T(M_r^{\ib}, u, \cdots, u) -T(\hat{M}_r^{\ib}, u, \cdots, u)}\nonumber.
  \end{align}
  The deviation between empirical average and expectation observed
  when using $X$. Then $\Xi[X]$ is concentrated. This follows from
  the inequality of \cite{McDiarmid89} since for any $\rb\in d(\ib,k)$
  \begin{equation}
  \begin{aligned}
    \label{eq:xi-difference}
     &|\Xi[X] - \Xi[(X \backslash \cbr{x_\sbb}) \cup \cbr{x'_\sbb}]| \leq  \eta_\sbb^{(\ib,r)} \nbr{\varphi_r(x_\sbb) - \varphi_r(x'_\sbb)}
    \leq \sqrt{2} \eta_\sbb^{(\ib,r)}.
  \end{aligned}
  \end{equation}
  Hence the random variable $\Xi[X]$ is concentrated in the sense that $\Pr\cbr{\Xi[X] - \Eb_{X}[\Xi[X]] < \epsilon} \geq 1-\delta$ with  $\delta = \exp\rbr{-\frac{\epsilon^2}{\|\eta^{(\ib,r)}\|_2^2}}$ or, in other words, $\epsilon = \|\eta^{(\ib,r)}\|_2 \sqrt{\ln (1/\delta)}$.

The next step is to bound the expectation of $\Xi[X]$. This is accomplished as follows:
  \begin{align*}
\Eb_{X}\sbr{\Xi[X]} \leq&   \Eb_{X, X'} \sbr{\sup_{u: \nbr{u} \leq 1}
      \abr{ T(\hat{M}_r^{\ib}, u, \cdots, u) -T(\tilde{M}_r^{\ib}, u, 			\cdots, u) }}\\
= &  \Eb_{\sigma} \Eb_{X, X'} \left[\sup_{u: \nbr{u} \leq 1}
      \left| \sum_{\sbb \in d(\ib)} \sigma_\sbb \eta_\sbb^{(\ib,r)} \left( 			T(\varphi_r(x_{\sbb}), u, \cdots, u) \vphantom{ - 							T(\varphi_r(x_{\sbb}'), u, \cdots, u)} - 		 		T(\varphi_r(x_{\sbb}'), u, \cdots, u) \right) \right| \right]\\      
\leq & 2 \Eb_{\sigma} \Eb_{X} \sbr{\sup_{u: \nbr{u} \leq 1}
      \abr{\sum_{\sbb \in d(\ib)} \sigma_\sbb \eta_\sbb^{(\ib,r)} 						T(\varphi_r(x_{\sbb}), u, \cdots, u)}} \\
\leq & 2 \Eb_{\sigma} \Eb_{X} \sbr{\nbr{\sum_{\sbb \in d(\ib)}
        \sigma_\sbb \eta_\sbb^{(\ib,r)} \varphi_r(x_\sbb)}}\\
\leq & 2 \Eb_{X} \sbr{\Eb_{\sigma} \sbr{\nbr{\sum_{\sbb \in d(\ib)} 				\sigma_\sbb \eta_\sbb^{(\ib,r)} \varphi_r(x_\sbb)}^2}}^{\frac{1}{2}} \leq  2\|\eta^{(\ib,r)}\|_2 ,
\end{align*}

Here the first inequality follows from convexity of the argument. The subsequent equality is a consequence of the fact that $X$ and $X'$ are drawn from the same distribution, hence a swapping permutation with the ghost-sample leaves terms unchanged. The following inequality is an application of the triangle inequality. Next we use the Cauchy-Schwartz inequality, convexity and last the fact that $\nbr{\varphi_r(x)} \leq 1$. Combining both bounds yields $\epsilon_\ib \geq  \|\eta^{(\ib,r)}\|_2 \rbr{2+\sqrt{\ln (1/\delta)}}$.
For the definition of efficient number, since $\nbr{\eta}_1 = 1$, we have $n_{\ib,r} = 1/ \|\eta^{(\ib,r)}\|_2^2 $. Thus we obtained the theorem.
\end{proof}

\subsection{ Proof of Theorem \ref{theorem:tensorbounds}}
\label{proof:tensorbounds}
\begin{proof}
By theorem \ref{theorem:momentbounds} and the definition of $S_2^{\ib}$ and $S_3^{\ib}$, the bounds for tensors can be easily obtained. For $S_2^{\ib}$, we have:
\begin{equation}
\begin{aligned}
	\nbr{\hat{S}_2^{\ib}- S_2^{\ib}} =&  \nbr{\hat{M}_2^{\ib} - C_2 \cdot \hat{S}_1^{\ib}\otimes \hat{S}_1^{\ib} - M_2^{\ib} + C_2 \cdot S_1^{\ib}\otimes S_1^{\ib}}\\
           \leq& \nbr{\hat{M}_2^{\ib}-M_2^{\ib}} + C_2 \nbr{\hat{S}_1^{\ib} \otimes \hat{S}_1^{\ib} - S_1^{\ib} \otimes S_1^{\ib}}\\
        \leq& \nbr{\hat{M}_2^{\ib}-M_2^{\ib}} + \nbr{\hat{S}_1^{\ib} -S_1^{\ib}}\nbr{\hat{S}_1^{\ib} -S_1^{\ib}}+ 2\nbr{S_1} \nbr{\hat{S_1} -S_1 }\\
       \leq&  \sbr{  \|\eta^{(\ib,2)}\|_2 + 2 \|\eta^{(\ib,1)}\|_2}\rbr{2+\sqrt{\ln (3/\delta)}}+  \sbr{  \|\eta^{(\ib,1)}\|_2 \rbr{2+\sqrt{\ln (3/\delta)}}}^2 
\end{aligned}
\end{equation}
For $S_3,$ expanding the 
\begin{equation}
\begin{aligned}
	\nbr{\hat{S}_3^{\ib} - S_3^{\ib}} 
& \leq \lVert M_3^{\ib} - C_4 \cdot S_1^{\ib} \otimes S_1^{\ib} \otimes S_1^{\ib} - C_5 \cdot \symm_3 \sbr{S_2^{\ib} \otimes S_1^{\ib}} - \hat{M}_3^{\ib} + C_4 \cdot \hat{S}_1 \otimes \hat{S}_1 \otimes \hat{S}_1\\
& \,\,\,\,\,\,\,\, + C_5 \cdot \symm_3 \sbr{\hat{S}_2 \otimes \hat{S}_1} \rVert \\ 
 \leq &\nbr{M_3^{\ib} - \hat{M}_3^{\ib}} + C_4 \rbr{ \nbr{S_1^{\ib}-\hat{S}_1^{\ib}}^3+ 3S_1^{\ib}\nbr{S_1^{\ib}-\hat{S}_1^{\ib}}^2}+ 3C_4 S_1^{\ib^2}\nbr{S_1^{\ib}-\hat{S}_1^{\ib}} \\
 &+  3 C_5\nbr{S_2^{\ib} - \hat{S}_2^{\ib}} \nbr{S_1^{\ib} - \hat{S}_1^{\ib}} + 3 C_5 \rbr{ S_1^{\ib} \nbr{S_2^{\ib} - \hat{S}_2^{\ib}} +\cdot S_2^{\ib} \nbr{S_1- \hat{S_1}} }\\
\leq& \sbr{  \|\eta^{(\ib,3)}\|_2 + 3C_5 \|\eta^{(\ib,2)}\|_2}\rbr{2+\sqrt{\ln (3/\delta)}} +(C_4+3C_5) \|\eta^{(\ib,1)}\|_2 \rbr{2+\sqrt{\ln (3/\delta)}}\\
&+ 3(C_4+C_5) \|\eta^{(\ib,1)}\|_2^2 \rbr{2+\sqrt{\ln (3/\delta)}}^2+ C_4 \|\eta^{(\ib,1)}\|_2^3 \rbr{2+\sqrt{\ln (3/\delta)}}^3
\end{aligned}
\end{equation}
\end{proof}

\subsection{ Proof of Theorem \ref{theorem:reconstruction}}
\label{proof:reconstruction}

\begin{proof}
\label{proof:hdp3moments}
We follow the similar steps for complexity analysis in \cite{AnaFosHsuKakLiu12}.
Using the definition of tensor structure we stated in Lemma \ref{lem:hdp3moments_general}, we define:
\begin{equation}
	\tilde{\Phi} := T(\sqrt{C_3^{0}} \cdot \sqrt{\pi_0}, \Phi),
\end{equation}
where $H = [H_1, H_2, \cdots, H_k]$ is a normalized vector. So we have:
\begin{equation}
\begin{aligned}
	\sqrt{C_3^{0} \min_j \pi_{0j}} \sigma_k \rbr{\Phi} \leq \sigma_k \rbr{\tilde{\Phi}} \leq 1,\\
        \sigma_1 \rbr{\tilde{\Phi}} \leq \sigma_1 \rbr{\Phi}\sqrt{C_3^{0} \gamma_0}.
\end{aligned}
\end{equation}
Thus, $S_2^0$ and $S_3^0$ can be transformed to:
\begin{equation}
\begin{aligned}
	S_2^0 &= T( C_3^{0} \cdot \mathrm{diag}\rbr{\pi_0},\Phi,\Phi) = \tilde{\Phi}\tilde{\Phi}^T\\
	S_3^0 &= T( \frac{C_6^{0}}{C_3^{0}\sqrt{C_3^{0}}}  \mathrm{diag}\rbr{\frac{1}{\sqrt{\pi_0}}},\tilde{\Phi},\tilde{\Phi},\tilde{\Phi}) 
\end{aligned}
\end{equation}
Let $\lambda_i$ be the singular values of $S_3,$ we have:
\begin{equation}
	\lambda_i =  \frac{C_6^0}{C_3^0\sqrt{C_3^0}}\frac{1}{\sqrt{\pi_{0i}}}
\end{equation}
such that, for $i \in [K],$
 \begin{equation}
	 \frac{C_6^0}{C_3^0\sqrt{C_3^0}}\frac{1}{\sqrt{\gamma_0}} \leq \lambda_i \leq  \frac{C_6^0}{C_3^0\sqrt{C_3^0}}\frac{1}{\sqrt{\min_j \pi_{0j} }}.
\end{equation}

Next, as in Algorithm \ref{alg:hdpeca}, $\hat{W}$ whitens a rank $k$ approximation to $S_2^0,$ U. Here we define $\hat{S}_{2,k}^0$ to be the best rank $k$ approximation of $\hat{S}_2^0$. Besides, define:
\begin{equation}
	M := W^T \tilde{\Phi}, \,\,\,\, \hat{M} = \hat{W}^T \tilde{\Phi}.
\end{equation}
\begin{lemma}{(Lemma C.1 in \cite{AnaFosHsuKakLiu12})}
  \label{lemma:W_bounds}
Let $\Pi_W$ be the orthogonal projection onto the range of $W$ and $\Pi$ be the orthogonal projection onto the range of $0$. Suppose $\nbr{\hat{S}_2^{0} - S_2^0} \leq \sigma_k \rbr{S_2^0}/2$,
\begin{equation}
\begin{aligned}
&\nbr{M^0} = 1,\,\,\,\, \nbr{\hat{M}^0} \leq 2, \,\,\,\,  \nbr{\hat{W}} \leq \frac{2}{\sigma_k\rbr{\tilde{\Phi}}},\\
  &\nbr{\hat{W}^+} \leq 2 \sigma_1\rbr{\tilde{\Phi}},\,\,\,\,  \nbr{W^+} \leq 3 \sigma_1\rbr{\tilde{\Phi}} \\
&\nbr{M^0 - \hat{M}^0} \leq \frac{4}{\sigma_k\rbr{\tilde{\Phi}}^2}\nbr{\hat{S}_2^0 - S_2^0} ,\\
 &\nbr{\hat{W}^+ - W^+} \leq \frac{6\sigma_1\rbr{\tilde{\Phi}}}{\sigma_k \rbr{\tilde{\Phi}}^2} \nbr{\hat{S}_2^0 - S_2^0},\\
 &\nbr{\Pi - \Pi_W} \leq \frac{4}{\sigma_k\rbr{\tilde{\Phi}}}\nbr{\hat{S}_2^0 - S_2^0}.
\end{aligned}
\end{equation}
\end{lemma}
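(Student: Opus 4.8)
The plan is to treat this as a standard matrix-perturbation argument, specializing Lemma~C.1 of \cite{AnaFosHsuKakLiu12} to our whitening object $\tilde\Phi = T(\sqrt{C_3^{0}}\sqrt{\pi_0},\Phi)$. Write $E := \hat S_2^0 - S_2^0$, so the hypothesis reads $\nbr{E}\leq \sigma_k(S_2^0)/2$. The single structural fact I would exploit throughout is that $S_2^0 = \tilde\Phi\tilde\Phi^\top$ has rank \emph{exactly} $k$; hence $\sigma_j(S_2^0)=\sigma_j(\tilde\Phi)^2$ for $j\leq k$ and $\sigma_{k+1}(S_2^0)=0$. First I would record the consequences of Weyl's inequality: $\sigma_k(\hat S_2^0)\geq\sigma_k(S_2^0)-\nbr{E}\geq\sigma_k(\tilde\Phi)^2/2$, $\sigma_1(\hat S_2^0)\leq\sigma_1(S_2^0)+\nbr{E}\leq\tfrac32\sigma_1(\tilde\Phi)^2$, and, since $\sigma_{k+1}(S_2^0)=0$, the tail estimate $\nbr{\hat S_{2,k}^0-S_2^0}\leq\nbr{\hat S_{2,k}^0-\hat S_2^0}+\nbr{E}=\sigma_{k+1}(\hat S_2^0)+\nbr{E}\leq 2\nbr{E}$.

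The norm bounds for $\hat W$ and $\hat W^+$ then follow immediately from the rank-$k$ SVD $\hat S_{2,k}^0=\hat U\hat\Sigma\hat U^\top$ and $\hat W=\hat U\hat\Sigma^{-1/2}$, giving $\nbr{\hat W}=\sigma_k(\hat S_2^0)^{-1/2}\leq 2/\sigma_k(\tilde\Phi)$ and $\nbr{\hat W^+}=\sigma_1(\hat S_2^0)^{1/2}\leq 2\sigma_1(\tilde\Phi)$. Because $W=\hat W(\hat W^\top S_2^0\hat W)^{-1/2}$ whitens $S_2^0$ exactly, $M=W^\top\tilde\Phi$ satisfies $M^\top M=W^\top S_2^0 W=I$, so $\nbr{M^0}=1$. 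All remaining bounds go through the correction matrix $B:=(\hat W^\top S_2^0\hat W)^{-1/2}$. The key step is to control $\nbr{\hat W^\top S_2^0\hat W-I}=\nbr{\hat W^\top(S_2^0-\hat S_{2,k}^0)\hat W}\leq\nbr{\hat W}^2\,\nbr{S_2^0-\hat S_{2,k}^0}$, which by the previous paragraph is $O(\nbr{E}/\sigma_k(\tilde\Phi)^2)$ and hence small under the hypothesis; the scalar Lipschitz estimates for $t\mapsto t^{\pm1/2}$ on the spectrum of $\hat W^\top S_2^0\hat W$ then bound $\nbr{B-I}$ and $\nbr{B^{-1}-I}$ by a constant times this quantity.

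From here each claim is algebra. Since $\hat M^\top\hat M=\hat W^\top S_2^0\hat W=B^{-2}$, we get $\nbr{\hat M^0}=\nbr{B^{-1}}\leq(1+\nbr{\hat W^\top S_2^0\hat W-I})^{1/2}\leq 2$; together with $W^+=B^{-1}\hat W^+$ this yields $\nbr{W^+}\leq\nbr{B^{-1}}\nbr{\hat W^+}\leq 3\sigma_1(\tilde\Phi)$. For the differences I would use $M-\hat M=(B-I)\hat W^\top\tilde\Phi=(B-I)\hat M$, so $\nbr{M^0-\hat M^0}\leq\nbr{B-I}\nbr{\hat M^0}$, and $\hat W^+-W^+=(I-B^{-1})\hat W^+$, so $\nbr{\hat W^+-W^+}\leq\nbr{I-B^{-1}}\nbr{\hat W^+}$; substituting the estimate for $\nbr{B-I}$ produces the stated factors $4/\sigma_k(\tilde\Phi)^2$ and $6\sigma_1(\tilde\Phi)/\sigma_k(\tilde\Phi)^2$. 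Finally, $\nbr{\Pi-\Pi_W}$ is a subspace-perturbation statement: $\Pi$ projects onto the range of the rank-$k$ matrix $S_2^0$ and $\Pi_W$ onto the top-$k$ left singular subspace of $\hat S_2^0$, with relevant eigengap $\sigma_k(S_2^0)-\sigma_{k+1}(S_2^0)=\sigma_k(S_2^0)$, so the Davis--Kahan (equivalently Wedin) $\sin\Theta$ theorem gives $\nbr{\Pi-\Pi_W}\leq\nbr{E}/(\sigma_k(S_2^0)-\nbr{E})\leq 2\nbr{E}/\sigma_k(\tilde\Phi)^2$, matching the claim up to the constant.

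I expect the main obstacle to be bookkeeping rather than any deep idea: one must thread the smallness hypothesis $\nbr{E}\leq\sigma_k(S_2^0)/2$ through the Lipschitz bounds for $t\mapsto t^{\pm1/2}$ and the pseudoinverse so that the numerical constants $2,3,4,6$ come out as stated (this requires keeping the slightly tighter intermediate bound $\nbr{\hat W}\leq\sqrt2/\sigma_k(\tilde\Phi)$ in the derivation of claim~6), and one must invoke the correct form of Davis--Kahan for the perturbation $E$, using crucially that $\sigma_{k+1}(S_2^0)=0$ makes the spectral gap exactly $\sigma_k(S_2^0)$. Since the statement is quoted verbatim from \cite{AnaFosHsuKakLiu12}, in practice I would cite their proof and merely verify that our $\tilde\Phi$, together with the rank-$k$ exactness of $S_2^0$, satisfies their hypotheses.
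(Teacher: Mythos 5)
First, note what the paper actually does with this statement: nothing is proved. The lemma is imported verbatim from \cite{AnaFosHsuKakLiu12} (hence the label ``Lemma C.1 in \cite{AnaFosHsuKakLiu12}'') and used as a black box inside the proof of Theorem \ref{theorem:reconstruction}; so your closing fallback---cite the source and check that $\tilde\Phi$ and the rank-$k$ matrix $S_2^0=\tilde\Phi\tilde\Phi^\top$ satisfy its hypotheses---is exactly the paper's route, and your sketch attempts strictly more than the paper contains. Most of that sketch is sound and coincides with the cited proof: Weyl's inequality together with $\sigma_{k+1}(S_2^0)=0$ gives $\nbr{\hat S_{2,k}^0-S_2^0}\leq 2\nbr{E}$; the bounds on $\nbr{\hat W}$, $\nbr{\hat W^+}$, $\nbr{M^0}$, $\nbr{\hat M^0}$, $\nbr{W^+}$ follow as you say; and your factorization $\hat W^+-W^+=(I-B^{-1})\hat W^+$ is safe, because $\nbr{I-B^{-1}}=\nbr{I-(\hat W^\top S_2^0\hat W)^{1/2}}\leq\nbr{I-\hat W^\top S_2^0\hat W}$ requires no spectral lower bound.

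The genuine gap is in your bound for $\nbr{M^0-\hat M^0}$. You factor $M-\hat M=(B-I)\hat M$ and control $\nbr{B-I}$ by a Lipschitz estimate for $t\mapsto t^{-1/2}$ on the spectrum of $A:=\hat W^\top S_2^0\hat W$. That requires $\lambda_{\min}(A)$ to be bounded away from zero, and the hypothesis does not supply this: the chain you set up yields only $\nbr{A-I}\leq\nbr{\hat W}^2\cdot 2\nbr{E}\leq 4\nbr{E}/\sigma_k(\tilde\Phi)^2\leq 2$, which is \emph{not} small, and $A$ can literally be singular at the boundary of the hypothesis. For example, take $S_2^0=\mathrm{diag}(1,0)$ and $E=\mathrm{diag}(-1/2,1/2)$, so that $\hat S_2^0=\mathrm{diag}(1/2,1/2)$; the top singular vector of $\hat S_2^0$ may then be chosen orthogonal to the range of $S_2^0$, giving $A=0$, so $\nbr{B-I}$ is infinite (and $W$ itself is ill-defined). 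Consequently your estimate blows up as $\nbr{E}\to\sigma_k(S_2^0)/2$, while the claimed bound stays at $4\nbr{E}/\sigma_k(\tilde\Phi)^2$. The cited proof uses the opposite factorization, $M-\hat M=(I-B^{-1})M$, which gives $\nbr{M-\hat M}\leq\nbr{I-A^{1/2}}\,\nbr{M}\leq\nbr{I-A}\leq 4\nbr{E}/\sigma_k(\tilde\Phi)^2$ with no matrix inversion anywhere; this is precisely where the constant $4$ comes from, so the fix is local but necessary. Separately, your Davis--Kahan conclusion $2\nbr{E}/\sigma_k(\tilde\Phi)^2$ does not match the stated $4\nbr{E}/\sigma_k(\tilde\Phi)$ ``up to the constant'': since $\sigma_k(\tilde\Phi)\leq 1$, the two differ by the unbounded factor $1/\sigma_k(\tilde\Phi)$. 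The stated inequality is almost certainly a transcription typo in the paper (the natural denominator is $\sigma_k(S_2^0)=\sigma_k(\tilde\Phi)^2$), but that should be flagged as a discrepancy rather than absorbed into a constant.
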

By using the upper bound of $\lambda_i$ and {\bf Lemma C.1} and {\bf Lemma C.2} in \cite{AnaFosHsuKakLiu12}, we get:
\begin{lemma}{}
  \label{lemma:Ws2_bounds}
Suppose $E_{S_2^0} \leq \sigma_k \rbr{S_2^0}/2$. For $\nbr{\theta} = 1,$ we have:
\begin{equation}
\begin{aligned}
&\nbr{T\rbr{ S_3^0, W, W,W\theta} -T\rbr{ \hat{S}_3^0, \hat{W}, \hat{W},\hat{W}\theta}  }\\
&\leq c \rbr{\frac{C_6 \nbr{S_2^0-\hat{S}_2^0}}{C_3^0 \sqrt{C_3^0} \sqrt{\min_j \pi_{0j}}\sigma_k\rbr{\tilde{\Phi}}^2} + \frac{\nbr{S_3^0-\hat{S}_3^0}}{ \sigma_k\rbr{\tilde{\Phi}}^3}} 
\end{aligned} 
\end{equation}
\end{lemma}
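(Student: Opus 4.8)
The plan is to follow the telescoping argument of \cite{AnaFosHsuKakLiu12}, exploiting the diagonal structure established earlier in this proof: since $S_3^0 = T\rbr{D, \tilde{\Phi}, \tilde{\Phi}, \tilde{\Phi}}$ with $D := \frac{C_6^{0}}{C_3^{0} \sqrt{C_3^{0}}} \mathrm{diag}\rbr{1/\sqrt{\pi_0}}$ diagonal, contracting along a whitening matrix simply composes it with $\tilde{\Phi}$. Writing $M := W^\top \tilde{\Phi}$ and $\hat{M} := \hat{W}^\top \tilde{\Phi}$ as in Lemma \ref{lemma:W_bounds}, one has the identities
\begin{align*}
T\rbr{S_3^0, W, W, W\theta} = T\rbr{D, M, M, M\theta}, \quad
T\rbr{S_3^0, \hat{W}, \hat{W}, \hat{W}\theta} = T\rbr{D, \hat{M}, \hat{M}, \hat{M}\theta}.
\end{align*}
This is the key move: rather than bounding $\nbr{W - \hat{W}}$ directly --- which is ill-behaved because $W$ and $\hat{W}$ whiten onto slightly different $k$-dimensional subspaces --- the entire analysis is phrased through $M$ and $\hat{M}$, for which Lemma \ref{lemma:W_bounds} already supplies $\nbr{M} = 1$, $\nbr{\hat{M}} \leq 2$, and $\nbr{M - \hat{M}} \leq 4 \nbr{\hat{S}_2^0 - S_2^0}/\sigma_k\rbr{\tilde{\Phi}}^2$.

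First I would split the quantity of interest by inserting the intermediate term $T\rbr{S_3^0, \hat{W}, \hat{W}, \hat{W}\theta}$ (the exact tensor with the empirical whitening) and applying the triangle inequality:
\begin{align*}
&\nbr{T\rbr{S_3^0, W, W, W\theta} - T\rbr{\hat{S}_3^0, \hat{W}, \hat{W}, \hat{W}\theta}} \\
&\quad\leq \nbr{T\rbr{D, M, M, M\theta} - T\rbr{D, \hat{M}, \hat{M}, \hat{M}\theta}} + \nbr{T\rbr{S_3^0 - \hat{S}_3^0, \hat{W}, \hat{W}, \hat{W}\theta}}.
\end{align*}
The second term is the genuine tensor-estimation error; bounding it by operator norms gives $\nbr{S_3^0 - \hat{S}_3^0}\,\nbr{\hat{W}}^3 \leq 8 \nbr{S_3^0 - \hat{S}_3^0}/\sigma_k\rbr{\tilde{\Phi}}^3$ via $\nbr{\hat{W}} \leq 2/\sigma_k\rbr{\tilde{\Phi}}$, which is exactly the second summand of the claimed bound (up to the constant $c$).

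For the first (whitening-perturbation) term I would telescope across the three tensor slots one at a time, replacing $M \mapsto \hat{M}$ first in slot one, then slot two, then slot three, and bound each increment by $\nbr{D}\,\nbr{M - \hat{M}}$ times the appropriate product of $\nbr{M} \leq 1$ and $\nbr{\hat{M}} \leq 2$ factors (using $\nbr{\theta} = 1$, so $\nbr{M\theta} \leq 1$). The spectral norm $\nbr{D} = \max_i \lambda_i \leq \frac{C_6^{0}}{C_3^{0} \sqrt{C_3^{0}}} \frac{1}{\sqrt{\min_j \pi_{0j}}}$ follows from the eigenvalue bounds on $\lambda_i$ derived just above. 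Summing the three increments and substituting $\nbr{M - \hat{M}} \leq 4 \nbr{\hat{S}_2^0 - S_2^0}/\sigma_k\rbr{\tilde{\Phi}}^2$ produces a term of order $\frac{C_6^{0}}{C_3^{0} \sqrt{C_3^{0}} \sqrt{\min_j \pi_{0j}}} \cdot \frac{\nbr{S_2^0 - \hat{S}_2^0}}{\sigma_k\rbr{\tilde{\Phi}}^2}$, matching the first summand of the claim; absorbing all numerical factors into $c$ finishes the argument.

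The main obstacle is conceptual rather than computational: one must resist estimating $\nbr{W - \hat{W}}$ and instead route everything through $M$ and $\hat{M}$, which is precisely what makes the hypothesis $E_{S_2^0} \leq \sigma_k\rbr{S_2^0}/2$ do its work, since it is exactly the condition guaranteeing the subspace-perturbation estimates of Lemma \ref{lemma:W_bounds}. The remainder is bookkeeping --- tracking which factor carries $\nbr{D}$ versus the powers of $\sigma_k\rbr{\tilde{\Phi}}^{-1}$, and confirming that the two error sources $\nbr{\hat{S}_2^0 - S_2^0}$ and $\nbr{\hat{S}_3^0 - S_3^0}$ enter with precisely the scalings stated.
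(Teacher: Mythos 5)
Your proposal is correct and takes essentially the same route as the paper: the paper's ``proof'' of this lemma is just a citation of the eigenvalue bound on $\lambda_i$ (equivalently $\nbr{D} \leq \frac{C_6^0}{C_3^0\sqrt{C_3^0}\sqrt{\min_j \pi_{0j}}}$) together with Lemmas C.1 and C.2 of \cite{AnaFosHsuKakLiu12}, and your telescoping argument through $M = W^\top\tilde{\Phi}$ and $\hat{M} = \hat{W}^\top\tilde{\Phi}$ is precisely the content of that cited Lemma C.2, assembled from the same ingredients (the hypothesis $\nbr{\hat{S}_2^0 - S_2^0} \leq \sigma_k\rbr{S_2^0}/2$ feeding Lemma \ref{lemma:W_bounds}, the split into a whitening-perturbation term and a tensor-estimation term, and the operator-norm bound $\nbr{\hat{W}} \leq 2/\sigma_k\rbr{\tilde{\Phi}}$). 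The only difference is that you spell out what the paper outsources to the reference; aside from a harmless transposition-convention ambiguity in writing $M\theta$ for $\tilde{\Phi}^\top W\theta$, the details check out.
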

Following the similar steps in  {\bf Lemma C.3} in \cite{AnaFosHsuKakLiu12}, we have
\begin{lemma}{(SVD Accuracy)}
  \label{lemma:svd_bound}
 Suppose $\nbr{S_2^0-\hat{S}_2^0} \leq \sigma_k \rbr{S_2^0}/2,$ with probability greater than $1-\delta'$
\begin{equation}
    \nbr{v_i - \hat{v}_i} \leq c \frac{k^3 C_3^0 \sqrt{C_3^0 \gamma_0}}{\delta'C_6^0} c_1
\end{equation}
where 
\begin{align}
c_1 = \frac{C_6^0 \nbr{S_2^0-\hat{S}_2^0}}{C_3^0 \sqrt{C_3^0} \sqrt{\min_j \pi_{0j}}\sigma_k\rbr{\tilde{\Phi}}^2} + \frac{\nbr{S_3^0-\hat{S}_3^0}}{ \sigma_k\rbr{\tilde{\Phi}}^3}
\end{align}
\end{lemma}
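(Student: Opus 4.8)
The plan is to read Lemma~\ref{lemma:svd_bound} as a direct instance of the robust tensor power method perturbation analysis of \cite{AnaFosHsuKakLiu12}, applied to the whitened third-order HDP tensor. First I would recall from the construction just above that, after whitening by $W$ with $W^\top S_2^0 W = I$, the tensor $T\rbr{S_3^0, W, W, W}$ is orthogonally decomposable with orthonormal eigenvectors $v_i$ and eigenvalues $\lambda_i = \frac{C_6^0}{C_3^0\sqrt{C_3^0}}\frac{1}{\sqrt{\pi_{0i}}}$. The stated eigenvalue bounds place these in $\sbr{\lambda_{\min}, \lambda_{\max}}$ with $\lambda_{\min} = \frac{C_6^0}{C_3^0\sqrt{C_3^0}}\frac{1}{\sqrt{\gamma_0}}$, so that $1/\lambda_{\min} = \frac{C_3^0\sqrt{C_3^0\gamma_0}}{C_6^0}$, which is exactly the prefactor appearing in the claimed inequality.

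The key input is Lemma~\ref{lemma:Ws2_bounds}, which controls the operator-norm perturbation between the exact and empirical whitened tensors, $\nbr{T\rbr{S_3^0, W,W,W} - T\rbr{\hat{S}_3^0,\hat{W},\hat{W},\hat{W}}} \le c\, c_1$, where $c_1$ is precisely the quantity written in the lemma. With this bound on the tensor perturbation $\epsilon_T := c\, c_1$ in hand, I would invoke the eigenvector perturbation guarantee of the robust tensor power method (Lemma~C.3 and Theorem~5.1 of \cite{AnaFosHsuKakLiu12}): provided $\epsilon_T$ is below a threshold of order $\lambda_{\min}/k$, the power method returns vectors $\hat{v}_i$ satisfying $\nbr{v_i - \hat{v}_i} \le C\,\epsilon_T/\lambda_{\min}$. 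Substituting $1/\lambda_{\min}$ and $\epsilon_T = c\,c_1$ then yields the stated bound up to the constant.

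The factor $k^3/\delta'$ is where I expect the bookkeeping to be most delicate, and I would track it exactly as in \cite{AnaFosHsuKakLiu12}. It arises from two sources: the number of random restarts $L = \mathrm{poly}(k)$ needed so that, with probability at least $1-\delta'$, at least one initializer lands in the basin of the dominant residual eigenvector, and the sequential deflation of the $k$ recovered components, whose perturbations accumulate. A union bound over the $k$ deflation rounds and over the restarts converts the per-step failure probability into the overall $1-\delta'$ guarantee and surfaces the polynomial-in-$k$, inverse-in-$\delta'$ dependence.

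The main obstacle is verifying that the threshold condition for the power method actually holds under the stated hypotheses throughout all $k$ deflation steps, since each deflation slightly enlarges the effective perturbation. This is exactly the role of the sample-size assumption $3 n_\ib^{-1/2}\sbr{2+\sqrt{\ln(3/\delta)}} \le C_3 \gamma_0 \min_j \pi_{0j}\,\sigma_k\rbr{\Phi}^2/6$ of Theorem~\ref{theorem:reconstruction}, which, through Theorem~\ref{theorem:tensorbounds} and Lemma~\ref{lemma:Ws2_bounds}, forces $\nbr{S_2^0-\hat{S}_2^0}$ and $\nbr{S_3^0-\hat{S}_3^0}$ small enough to meet the robust power method's admissibility requirement and to satisfy the standing hypothesis $\nbr{S_2^0-\hat{S}_2^0}\le \sigma_k\rbr{S_2^0}/2$ of Lemma~\ref{lemma:W_bounds}. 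The remaining steps are routine substitutions of the singular-value bounds $\sqrt{C_3^0\min_j\pi_{0j}}\,\sigma_k\rbr{\Phi}\le \sigma_k\rbr{\tilde{\Phi}}\le 1$ and $\sigma_1\rbr{\tilde{\Phi}}\le \sigma_1\rbr{\Phi}\sqrt{C_3^0\gamma_0}$ into the generic perturbation estimate.
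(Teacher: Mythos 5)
You correctly identify the two structural inputs — the whitened-tensor perturbation bound of Lemma~\ref{lemma:Ws2_bounds}, which gives $\epsilon_T \leq c\, c_1$, and the eigenvalue lower bound $\lambda_{\min} = \frac{C_6^0}{C_3^0\sqrt{C_3^0}}\frac{1}{\sqrt{\gamma_0}}$, whose reciprocal $\frac{C_3^0\sqrt{C_3^0\gamma_0}}{C_6^0}$ is the stated prefactor. But the mechanism you propose for the factor $k^3/\delta'$ is not the one the paper uses, and it cannot produce that factor. The paper's argument (following Lemma~C.3 of \cite{AnaFosHsuKakLiu12}) is an SVD/matrix-perturbation argument, not a robust tensor power method argument: observe that Lemma~\ref{lemma:Ws2_bounds} bounds the perturbation of $T\rbr{S_3^0, W, W, W\theta}$, i.e.\ of the $k \times k$ \emph{matrix} obtained by contracting the whitened tensor against a single random direction $\theta$, and the $v_i$, $\hat{v}_i$ in the lemma are eigenvectors of that matrix — hence the title ``SVD Accuracy.'' The eigenvector error is then controlled by standard symmetric perturbation theory (Wedin/Davis--Kahan): perturbation divided by eigengap. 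The $k^3/\delta'$ arises from an anti-concentration bound on the \emph{random eigengap}: with probability $1-\delta'$ over the draw of $\theta$, the minimal separation of the projected eigenvalues is bounded below by a quantity of order $\delta' \lambda_{\min}/\mathrm{poly}(k)$, so $1/\mathrm{gap}$ contributes the $k^3/\delta'$. The ``probability greater than $1-\delta'$'' in the lemma is over $\theta$, not over algorithmic restarts.

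By contrast, the robust tensor power method guarantee you invoke lives in \cite{AnaGeHsuKakTel12} (you conflate it with \cite{AnaFosHsuKakLiu12}), and its conclusion has the form $\nbr{v_i - \hat{v}_i} \leq C \epsilon_T/\lambda_i$ with \emph{no} $1/\delta'$ and no $k^3$ factor in the error: increasing the number of restarts $L$ and iterations drives the failure probability down (logarithmically in $1/\delta'$), and union bounds over the $k$ deflation rounds degrade the probability, not the error. Your claim that the union bound over restarts and deflation ``surfaces the polynomial-in-$k$, inverse-in-$\delta'$ dependence'' is therefore incorrect — a union bound cannot manufacture an error bound that blows up as $\delta' \to 0$. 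If carried out, your route would prove a different (gap-free) statement under the power-method admissibility condition $\epsilon_T \lesssim \lambda_{\min}/k$, but it does not yield the inequality as stated, and it leaves the actual $k^3/\delta'$ dependence — the whole point of the lemma's form — unexplained.
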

Combine everything together, we have:
\begin{lemma}{(Lemma C.6 in \cite{AnaFosHsuKakLiu12})}
  \label{lemma:svd_bound}
 Suppose $\nbr{S_2^0 -\hat{S}_2^0} \leq \sigma_k \rbr{S_2^0}/2,$ with probability greater than $1-\delta',$ we have
\begin{equation}
      \nbr{\Phi_i - \frac{1}{\hat{Z}_i}\rbr{\hat{W}^+}^T \hat{v}_i} \leq c \frac{k^3 \gamma_0}{\delta' \min_j \pi_{0j}^2    C_3} \epsilon
\end{equation}
where
\begin{align}
\epsilon = \frac{\nbr{S_2^0 -\hat{S}_2^0} }{\sigma_k\rbr{\Phi}^2 }+ \frac{C_3^0\nbr{S_3^0 -\hat{S}_3^0}  }{C_6^0 \sigma_k\rbr{\Phi}^3}
\end{align}
\end{lemma}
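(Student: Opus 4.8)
The plan is to follow the single-factor perturbation analysis of \cite{AnaFosHsuKakLiu12} and to assemble the reconstruction error from three independent sources: the error in the estimated eigenvector $\hat v_i$, the error in the whitening pseudo-inverse $\hat W^{+}$, and the error in the scalar normalizer $\hat Z_i$. Recall from Appendix \ref{proof:reconstruction2} that the exact factor is recovered as $\Phi_i = \frac{1}{Z_i}\rbr{W^{+}}^{\top} v_i$, so the estimate $\frac{1}{\hat Z_i}\rbr{\hat W^{+}}^{\top}\hat v_i$ can differ only through these three perturbed quantities. First I would split the error by adding and subtracting the mixed terms, which gives the telescoping bound
\begin{align*}
\nbr{\Phi_i - \tfrac{1}{\hat Z_i}\rbr{\hat W^{+}}^{\top}\hat v_i}
&\le \abr{\tfrac{1}{Z_i}-\tfrac{1}{\hat Z_i}}\,\nbr{\rbr{W^{+}}^{\top}v_i}
+ \tfrac{1}{\hat Z_i}\,\nbr{\rbr{W^{+}-\hat W^{+}}^{\top}v_i}
+ \tfrac{1}{\hat Z_i}\,\nbr{\rbr{\hat W^{+}}^{\top}\rbr{v_i-\hat v_i}},
\end{align*}
after which it remains only to control each factor individually.

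For the eigenvector term $\nbr{v_i-\hat v_i}$ I would invoke the preceding SVD-accuracy lemma, which already bounds it by $c\,\frac{k^3 C_3^0\sqrt{C_3^0\gamma_0}}{\delta' C_6^0}\,c_1$; here the $k^3/\delta'$ factor is precisely the price paid for the robust tensor power method's random restarts succeeding with probability $1-\delta'$, which is why the final bound carries a $1/\delta'$ rather than a $\log(1/\delta')$. For the whitening term I would use the pseudo-inverse bounds $\nbr{\hat W^{+}-W^{+}}\le \frac{6\sigma_1(\tilde\Phi)}{\sigma_k(\tilde\Phi)^2}\nbr{\hat S_2^0-S_2^0}$ and $\nbr{\hat W^{+}}\le 2\sigma_1(\tilde\Phi)$ from Lemma \ref{lemma:W_bounds}. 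The normalizer error $\abr{Z_i-\hat Z_i}$ is governed by the perturbation of the eigenvalue $\lambda_i$, which is in turn controlled by $\nbr{v_i-\hat v_i}$ together with the whitened-tensor perturbation bound of Lemma \ref{lemma:Ws2_bounds}.

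The final step is to substitute the singular-value sandwich $\sqrt{C_3^0\min_j\pi_{0j}}\,\sigma_k(\Phi)\le\sigma_k(\tilde\Phi)\le 1$ and $\sigma_1(\tilde\Phi)\le\sigma_1(\Phi)\sqrt{C_3^0\gamma_0}$ established at the start of this appendix, rewriting every occurrence of $\sigma_k(\tilde\Phi)$ and $\sigma_1(\tilde\Phi)$ in terms of $\sigma_k(\Phi)$, the coefficients $C_3^0,C_6^0$, and the quantities $\gamma_0,\min_j\pi_{0j}$. Collecting the dominant contributions then yields the claimed composite error $\epsilon=\frac{\nbr{S_2^0-\hat S_2^0}}{\sigma_k(\Phi)^2}+\frac{C_3^0\nbr{S_3^0-\hat S_3^0}}{C_6^0\,\sigma_k(\Phi)^3}$ up to the stated prefactor $c\,k^3\gamma_0/(\delta'\min_j\pi_{0j}^2\,C_3)$. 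I expect the main obstacle to be bookkeeping rather than anything conceptual: keeping the many $C_3^0,C_6^0,\gamma_0,\min_j\pi_{0j}$ factors consistent through the chaining, and in particular verifying that the normalizer term $\abr{Z_i-\hat Z_i}$ does not contribute a term of lower order in $\sigma_k(\Phi)$ than the two already present in $\epsilon$. Because the HDP tensors $S_2^0,S_3^0$ have exactly the orthogonally-decomposable form assumed in \cite{AnaFosHsuKakLiu12} — only the eigenvalue scalings differ, by the factors $C_3^0$ and $C_6^0$ — the perturbation machinery transfers essentially verbatim and no genuinely new inequality is required.
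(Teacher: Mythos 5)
Your proposal is correct and follows essentially the same route as the paper: the paper's own ``proof'' of this lemma is simply to combine the preceding chain of lemmas (the whitening-matrix bounds of Lemma \ref{lemma:W_bounds}, the whitened-tensor perturbation bound of Lemma \ref{lemma:Ws2_bounds}, and the SVD-accuracy bound on $\nbr{v_i-\hat v_i}$) exactly as in Lemma C.6 of \cite{AnaFosHsuKakLiu12}, and your telescoping decomposition into eigenvector, pseudo-inverse, and normalizer errors, followed by the singular-value sandwich $\sqrt{C_3^0\min_j\pi_{0j}}\,\sigma_k(\Phi)\le\sigma_k(\tilde\Phi)\le 1$, is precisely that combination spelled out.
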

Using the bounds for tensor in Theorem \ref{theorem:tensorbounds}, we finish the proof.
\end{proof}

\end{document}